\documentclass{article}

% if you need to pass options to natbib, use, e.g.:
\PassOptionsToPackage{numbers, compress}{natbib}
% before loading neurips_2023

% ready for submission
%\usepackage{neurips_2023}

% to compile a preprint version, e.g., for submission to arXiv, add add the
% [preprint] option:
%     \usepackage[preprint]{neurips_2023}

% to compile a camera-ready version, add the [final] option, e.g.:
     \usepackage[final]{neurips_2023}

% to avoid loading the natbib package, add option nonatbib:
%    \usepackage[nonatbib]{neurips_2023}

\usepackage[utf8]{inputenc} % allow utf-8 input
\usepackage[T1]{fontenc}    % use 8-bit T1 fonts
\usepackage{hyperref}       % hyperlinks
\usepackage{url}            % simple URL typesetting
\usepackage{booktabs}       % professional-quality tables
\usepackage{amsfonts}       % blackboard math symbols
\usepackage{nicefrac}       % compact symbols for 1/2, etc.
\usepackage{microtype}      % microtypography
\usepackage{xcolor}         % colors
\usepackage{amsthm}
\usepackage{amsmath}
\usepackage{amssymb}
\usepackage{commath}
\usepackage{mathtools}
\usepackage[mathscr]{eucal}
\usepackage{wrapfig}
\usepackage{dirtytalk}
\allowdisplaybreaks
\usepackage[shortlabels]{enumitem}
\newtheorem{theorem}{Theorem}[section]
\newtheorem{axiom}[theorem]{Axioms}
\newtheorem{lemma}[theorem]{Lemma}
 
\newtheorem{corollary}[theorem]{Corollary}
\newtheorem{example}[theorem]{Example}
\theoremstyle{definition}
\newtheorem{definition}[theorem]{Definition}
\newtheorem{remark}[theorem]{Remark}

\newenvironment{customproof}[1]
{\innercustomproof}
{\endinnercustomproof}
\usepackage{tikz}
\usetikzlibrary{positioning}
\usepackage{subcaption}
\usepackage{caption}

\title{A Measure-Theoretic Axiomatisation of Causality}

% The \author macro works with any number of authors. There are two commands
% used to separate the names and addresses of multiple authors: \And and \AND.
%
% Using \And between authors leaves it to LaTeX to determine where to break the
% lines. Using \AND forces a line break at that point. So, if LaTeX puts 3 of 4
% authors names on the first line, and the last on the second line, try using
% \AND instead of \And before the third author name.

\author{Junhyung Park \\
	Empirical Inference Department\\
	MPI for Intelligent Systems\\
	72076 T\"ubingen, Germany \\
	\texttt{junhyung.park@tuebingen.mpg.de} \\
	\And
	Simon Buchholz \\
	Empirical Inference Department\\
	MPI for Intelligent Systems\\
	72076 T\"ubingen, Germany \\
	\texttt{simon.buchholz@tuebingen.mpg.de} \\
	\And
	Bernhard Sch\"olkopf \\
	Empirical Inference Department\\
	MPI for Intelligent Systems\\
	72076 T\"ubingen, Germany \\
	\texttt{bs@tuebingen.mpg.de} \\
	\And
	Krikamol Muandet \\
	CISPA\\
	Helmholtz Center for Information Security\\
	66123 Saarbr\"ucken, Germany \\
	\texttt{muandet@cispa.de} \\
}

\begin{document}

\maketitle

\begin{abstract}
	Causality is a central concept in a wide range of research areas, yet there is still no universally agreed axiomatisation of causality. We view causality both as an extension of probability theory and as a study of \textit{what happens when one intervenes on a system}, and argue in favour of taking Kolmogorov's measure-theoretic axiomatisation of probability as the starting point towards an axiomatisation of causality. To that end, we propose the notion of a \textit{causal space}, consisting of a probability space along with a collection of transition probability kernels, called \textit{causal kernels}, that encode the causal information of the space. Our proposed framework is not only rigorously grounded in measure theory, but it also sheds light on long-standing limitations of existing frameworks including, for example, cycles, latent variables and stochastic processes. 
\end{abstract}
	
\section{Introduction}
Causal reasoning has been recognised as a hallmark of human and machine intelligence, and in the recent years, the machine learning community has taken up a rapidly growing interest in the subject \citep{peters2017elements,scholkopf2022causality,scholkopf2022statistical}, in particular in representation learning \citep{scholkopf2021toward,mitrovic2020representation,wang2021desiderata,von2021self,brehmer2022weakly,locatello2019challenging} and natural language processing \citep{jin2022causalnlp,feder2022causal}. 
Causality has also been extensively studied in a wide range of other research domains, including, but not limited to, philosophy \citep{lewis2013counterfactuals,woodward2005making,collins2004causation}, psychology \citep{waldmann2017oxford}, statistics \citep{pearl2016causal,spirtes2000causation} including social, biological and medical sciences \citep{russo2010causality,imbens2015causal,illari2011causality,hernan2020causal}, mechanics and law \citep{beebee2009oxford}. 
	
The field of causality was born from the observation that probability theory and statistics (Figure \ref{Fprobability}) cannot encode the notion of causality, and so we need additional mathematical tools to support the enhanced view of the world involving causality (Figure \ref{Fcausality}). Our goal in this paper is to give an axiomatic framework of the forwards direction of Figure \ref{Fcausality}, which currently consists of many competing models (see Related Works). As a starting point, we observe that the forwards direction of Figure \ref{Fprobability}, i.e. \textit{probability theory}, has a set of axioms based on measure theory (Axioms \ref{Akolmogorov}) that are widely accepted and used\footnote{Kolmogorov's axiomatisation is without doubt the standard in probability theory. However, we are aware of other, less popular frameworks, for example, one that is more amenable to Bayesian probability \citep{jaynes2003probability}, one based on game theory \citep{vovk2014game} and imprecise probabilities \citep{walley1991statistical}.}, and hence argue that it is natural to take the primitive objects of this framework as the basic building blocks. Despite the fact that all of the existing mathematical frameworks of causality recognise the crucial role that probability plays / should play in any causal theory, it is surprising that few of them try to build directly upon the axioms of probability theory, and those that do fall short in different ways (see Related Works). 
	
On the other hand, we place \textit{manipulations} at the heart of our approach to causality; in other words, we make changes to some parts of a system, and we are interested in what happens to the rest of this system. This manipulative philosophy towards causality is shared by many philosophers \citep{woodward2005making}, and is the essence behind almost all causal frameworks proposed and adopted in the statistics/machine learning community that we are aware of. 
	
To this end, we propose the notion of \textit{causal spaces} (Definition \ref{Dcausalspace}), constructed by adding causal objects, called \textit{causal kernels}, directly to probability spaces. We show that causal spaces strictly generalise (the interventional aspects of) existing frameworks, i.e.~given any configuration of, for example, a structural causal model or potential outcomes framework, we can construct a causal space that can carry the same (interventional) information. Further, we show that causal spaces can seamlessly support situations where existing frameworks struggle, for example those with hidden confounders, cyclic causal relationships or continuous-time stochastic processes. 
	
\begin{figure}[t]
	\begin{subfigure}[b]{0.5\textwidth}
		\begin{tikzpicture}[squarednode/.style={rectangle, draw=black, very thick, minimum size=1.5cm}]
			\node[squarednode, align=center] (dgp) at (-2.5, 0) {Data \\ Generating \\ Process};
			\node[squarednode] (data) at (2.5, 0) {Data};
			\draw[thick, ->]([yshift= 0.15 cm]dgp.east) to node[midway, above] {Probability Theory} ([yshift= 0.15 cm]data.west);
			\draw[thick, ->] ([yshift= -0.15 cm]data.west) to node[midway, below] {Statistics} ([yshift= -0.15 cm]dgp.east);
		\end{tikzpicture}
		\caption{Statistics (or machine learning) is an inverse problem of probability theory.}\label{Fprobability}
	\end{subfigure}
	\hfill
	\begin{subfigure}[b]{0.48\textwidth}
		\begin{tikzpicture}[squarednode/.style={rectangle, draw=black, very thick, minimum size=1.5cm}]
			\node[squarednode, align=center] (dgp) at (-2.4, 0) {Causal Data\\Generating\\Process};
			\node[squarednode] (data) at (2.4, 0) {Data};
			\draw[thick, ->]([yshift= 0.15 cm]dgp.east) to node[midway, above] {Causal Reasoning} ([yshift= 0.15 cm]data.west);
			\draw[thick, ->] ([yshift= -0.15 cm]data.west) to node[midway, below] {Causal Discovery} ([yshift= -0.15 cm]dgp.east);
		\end{tikzpicture}
		\caption{Causal discovery is an inverse problem of causal reasoning.}\label{Fcausality}
	\end{subfigure}
	\label{Fworld}
	\caption{Data generating processes and data.}
\end{figure}
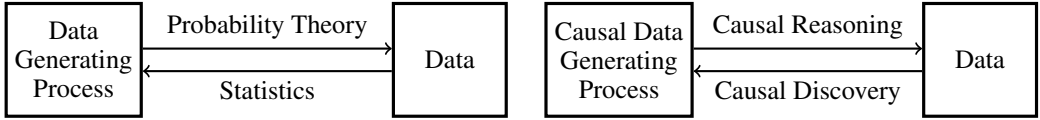 
	
\paragraph{Related Works}\label{SSrelatedworks}	We stress that our paper should \textit{not} be understood as a criticism of the existing frameworks, or that somehow our goal is to replace them. On the contrary, we foresee that they will continue to thrive in whatever domains they have been used in, and will continue to find novel application areas. 
	
Most prominently, there are the \textit{structural causal models} (SCMs) \citep{pearl2009causality,peters2017elements}, based most often on directed acyclic graphs (DAGs). Here, the theory of causality is built around variables and structural equations, and probability only enters the picture through a distribution on the exogeneous variables \citep{janzing2010causal}. Efforts have been made to axiomatise causality based on this framework \citep{galles1998axiomatic,halpern2000axiomatizing,ibeling2020probabilistic}, but models based on structural equations or graphs inevitably rely on assumptions even for the definitions themselves, such as being confined to a finite number of variables, the issue of solvability in the case of non-recursive (or cyclic) cases, that all common causes (whether latent or observed) are modeled, or that the variables in the model do not causally affect anything outside the model. Hence, these cannot be said to be an \say{axiomatic definition} in the strictest sense. In a series of examples in Section \ref{Sexamples}, we highlight cases for which causal spaces have a natural representation but SCMs do not, including common causes, cycles and continuous-time stochastic processes.
	
The \textit{potential outcomes framework} is a competing model, most often used in economics, social sciences or medicine research, in which we have a designated \textit{treatment} variable, whose causal effect we are interested in, and for each value of the treatment variable, we have a separate, \textit{potential outcome} variable \citep{imbens2015causal,hernan2020causal}. There are other, perhaps lesser-known approaches to model causality, such as that based on decision theory \citep{dawid2021decision,schenone2018causality}, on category theory \citep{jacobs2019causal,fritz2022dilations}, on an agent explicitly performing actions that transform the state space \citep{cohen2022towards}, or settable systems \citep{white2009settable}. 
	
Perhaps the works that are the most relevant to this paper are those that have already recognised the need for an axiomatisation of causality based on measure-theoretic probability theory. \citet{ortega2015subjectivity} uses a particular form of a \textit{tree} to define a causal space, and in so doing, uses an alternative, Bayesian set-up of probability theory \citep{jaynes2003probability}. It has an obvious drawback that it only considers \textit{countable} sets of \say{realisations}, clearly ruling out many interesting and commonly-occurring cases, and also does not seem to accommodate cycles. \citet{heymann2021causal} define the \textit{information dependency model} based on measurable spaces to encode causal information. We find this to be a highly interesting and relevant approach, but the issue of cycles and solvability arises, and again, only countable sets of outcomes are considered, with the authors admitting that the results are likely not to hold with uncountable sets. Moreover, probabilities and interventions require additional work to be taken care of. Lastly, \citet{cabreros2019causal} attempt to provide a measure-theoretic grounding to the potential outcomes framework, but thereby confine attention to the setting of a finite number of variables, and even restrict the random variables to be discrete. 
	
Finally, we mention the distinction between \textit{type} causality and \textit{actual} causality. The former is a theory about general causality, involving statements such as \say{in general, smoking makes lung cancer more likely}. Type causality is what we will be concerned with in this paper. Actual causality, on the other hand, is interested in whether a \textit{particular} event was caused by a \textit{particular} action, dealing with statements such as \say{Bob got lung cancer because he smoked for 30 years}. It is an extremely interesting area of research that has far-reaching implications for concepts such as responsibility, blame, law, harm \citep{beckers2022causal,beckers2023quantifying}, model explanation \citep{biradar2021model} and algorithmic recourse \citep{karimi2022survey}. Many definitions of actual causality have been proposed \citep{halpern2005causes,halpern2015modification,halpern2016actual}, but the question of how to define actual causality is still not settled \citep{beckers2021causal}. The current definitions of actual causality are all grounded on (variants) of SCMs, and though out of the scope of this paper, it will be an interesting future research direction to consider how actual causality can be incorporated into our proposed framework. 
	
\section{Causal Spaces and Interventions}\label{Scausalspace}
Familiarity with measure-theoretic probability theory is necessary, and we succinctly collect the most essential definitions and results in Appendix \ref{Spreliminaries}. Most important to note is the definition of a \textit{transition probability kernel} (also given at the end of Appendix \ref{SSmeasuretheory}): for measurable spaces \((E,\mathscr{E})\) and \((F,\mathscr{F})\), a mapping \(K:E\times\mathscr{F}\rightarrow[0,\infty]\) is called a \textit{transition probability kernel} from \((E,\mathscr{E})\) into \((F,\mathscr{F})\) if the mapping \(x\mapsto K(x,B)\) is measurable for every set \(B\in\mathscr{F}\) and the mapping \(B\mapsto K(x,B)\) is a probability measure on \((F,\mathscr{F})\) for every \(x\in E\). Also worthy of mention is the definition of a \textit{measurable rectangle}: if \((E,\mathscr{E})\) and \((F,\mathscr{F})\) are measurable spaces and \(A\in\mathscr{E}\) and \(B\in\mathscr{F}\), then the \textit{measurable rectangle} \(A\times B\) is the set of all pairs \((x,y)\) with \(x\in A\) and \(y\in B\). All proofs are deferred to Appendix \ref{Sproofs}. We start by recalling the axioms of probability theory, which we will use as the starting point of our work. 
\begin{axiom}[\citet{kolmogorov1933foundations}]\label{Akolmogorov}
	A probability space is a triple \((\Omega,\mathscr{H},\mathbb{P})\), where:
	\begin{enumerate}[(i)]
		\item \(\Omega\) is a set of \textit{outcomes};
		\item \(\mathscr{H}\) is a collection of \textit{events} forming a \textit{\(\sigma\)-algebra}, i.e. a collection of subsets of \(\Omega\) such that (a) \(\Omega\in\mathscr{H}\); (b) if \(A\in\mathscr{H}\), then \(\Omega\setminus A\in\mathscr{H}\); (c) if \(A_1,A_2,...\in\mathscr{H}\), then \(\cup_nA_n\in\mathscr{H}\);
		\item \(\mathbb{P}\) is a \textit{probability measure} on \((\Omega,\mathscr{H})\), i.e. a function \(\mathbb{P}:\mathscr{H}\rightarrow[0,1]\) satisfying (a) \(\mathbb{P}(\emptyset)=0\); (b) \(\mathbb{P}(\cup_nA_n)=\sum_n\mathbb{P}(A_n)\) for any disjoint sequence \((A_n)\) in \(\mathscr{H}\) (c) \(\mathbb{P}(\Omega)=1\). 
	\end{enumerate}
\end{axiom}
In the development of probability theory, one starts by assuming the existence of a probability space \((\Omega,\mathscr{H},\mathbb{P})\). However, the actual construction of probability spaces that can carry random variables corresponding to desired random experiments is done through (repeated applications of) two main results -- those of Ionescu-Tulcea and Kolmogorov \citep[p.160, Chapter IV, Section 4]{cinlar2011probability}; the former constructs a probability space that can carry a finite or countably infinite chain of trials, and the latter shows the existence of a probability space that can carry a process with an arbitrary index set. In both cases, the measurable space \((\Omega,\mathscr{H})\) is constructed as a product space:
\begin{enumerate}[(i)]
	\item for a finite set of trials, each taking place in some measurable space \((E_t,\mathscr{E}_t),t=1,...,n\), we have \((\Omega,\mathscr{H})=\otimes_{t=1}^n(E_t,\mathscr{E}_t)\); 
	\item for a countably infinite set of trials, each taking place in some measurable space \((E_t,\mathscr{E}_t)\), \(t\in\mathbb{N}\), we have \((\Omega,\mathscr{H})=\otimes_{t\in\mathbb{N}}(E_t,\mathscr{E}_t)\);
	\item for a process \(\{X_t:t\in T\}\) with an arbitrary index set \(T\), we assume that all the \(X_t\) live in the same standard measurable space \((E,\mathscr{E})\), and let \((\Omega,\mathscr{H})=(E,\mathscr{E})^T=\otimes_{t\in T}(E,\mathscr{E})\). 
\end{enumerate}
In the construction of a \textit{causal space}, we will take as our starting point a probability space \((\Omega,\mathscr{H},\mathbb{P})\), where the measure \(\mathbb{P}\) is defined on a product measurable space \((\Omega,\mathscr{H})=\otimes_{t\in T}(E_t,\mathscr{E}_t)\) with the \((E_t,\mathscr{E}_t)\) being the same standard measurable space if \(T\) is uncountable. Denote by \(\mathscr{P}(T)\) the power set of \(T\), and for \(S\in\mathscr{P}(T)\), we denote by \(\mathscr{H}_S\) the sub-\(\sigma\)-algebra of \(\mathscr{H}=\otimes_{t\in T}\mathscr{E}_t\) generated by measurable rectangles \(\times_{t\in T}A_t\), where \(A_t\in\mathscr{E}_t\) differs from \(E_t\) only for \(t\in S\). In particular, \(\mathscr{H}_\emptyset=\{\emptyset,\mathscr{H}\}\) is the trivial \(\sigma\)-algebra of \(\Omega=\times_{t\in T}E_t\). Also, we denote by \(\Omega_S\) the subspace \(\times_{s\in S}E_s\) of \(\Omega=\times_{t\in T}E_t\), and for \(T\supseteq S\supseteq U\), we let \(\pi_{SU}\) denote the natural projection from \(\Omega_S\) onto \(\Omega_U\). 
	
\begin{definition}\label{Dcausalspace}
	A \textit{causal space} is defined as the quadruple \((\Omega,\mathscr{H},\mathbb{P},\mathbb{K})\), where \((\Omega,\mathscr{H},\mathbb{P})=(\times_{t\in T}E_t,\otimes_{t\in T}\mathscr{E}_t,\mathbb{P})\) is a probability space and \(\mathbb{K}=\{K_S:S\in\mathscr{P}(T)\}\), called the \textit{causal mechanism}, is a collection of transition probability kernels \(K_S\) from \((\Omega,\mathscr{H}_S)\) into \((\Omega,\mathscr{H})\), called the \textit{causal kernel on \(\mathscr{H}_S\)}, that satisfy the following axioms:
	\begin{enumerate}[(i)]
		\item\label{trivialintervention} for all \(A\in\mathscr{H}\) and \(\omega\in\Omega\),
		\[K_\emptyset(\omega,A)=\mathbb{P}(A);\]
		\item\label{interventionaldeterminism} for all \(\omega\in\Omega\), \(A\in\mathscr{H}_S\) and \(B\in\mathscr{H}\),
		\[K_S(\omega,A\cap B)=1_A(\omega)K_S(\omega,B)=\delta_\omega(A)K_S(\omega,B);\]
		in particular, for \(A\in\mathscr{H}_S\), \(K_S(\omega,A)=1_A(\omega)K_S(\omega,\Omega)=1_A(\omega)=\delta_\omega(A)\).
	\end{enumerate}
\end{definition}
Here, the probability measure \(\mathbb{P}\) should be viewed as the \say{observational measure}, and the causal mechanism \(\mathbb{K}\), consisting of causal kernels \(K_S\) for \(S\in\mathscr{P}(T)\), contains the \say{causal information} of the space, by directly specifying the interventional distributions. We write \(1_A(\omega)\) when viewed as a function in \(\omega\) for a fixed \(A\), and \(\delta_\omega(A)\) when viewed as a measure for a fixed \(\omega\in\Omega\). Note that \(\mathbb{K}\) cannot be determined \say{independently} of the probability measure \(\mathbb{P}\), since, for example, \(K_\emptyset\) is clearly dependent on \(\mathbb{P}\) by \ref{trivialintervention}. 
	
Before we discuss the meaning of the two axioms, we immediately give the definition of an \textit{intervention}. An intervention is carried out on a sub-\(\sigma\)-algebra of the form \(\mathscr{H}_U\) for some \(U\in\mathscr{P}(T)\). In the following, for \(S\in\mathscr{P}(T)\), we denote \(\omega_S=\pi_{TS}(\omega)\). Then note that \(\Omega=\Omega_S\times\Omega_{T\backslash S}\) and for any \(\omega\in\Omega\), we can decompose it into components as \(\omega=(\omega_S,\omega_{T\backslash S})\). Then \(K_S(\omega,A)=K_S((\omega_S,\omega_{T\setminus S}),A)\) for any \(A\in\mathscr{H}\) only depends on the first \(\omega_S\) component of \(\omega=(\omega_S,\omega_{T\setminus S})\). As a slight abuse of notation, we will sometimes write \(K_S(\omega_S,A)\) for conciseness. \begin{definition}\label{Dintervention}
	Let \((\Omega,\mathscr{H},\mathbb{P},\mathbb{K})=(\times_{t\in T}E_t,\otimes_{t\in T}\mathscr{E}_t,\mathbb{P},\mathbb{K})\) be a causal space, and \(U\in\mathscr{P}(T)\), \(\mathbb{Q}\) a probability measure on \((\Omega,\mathscr{H}_U)\) and \(\mathbb{L}=\{L_V:V\in\mathscr{P}(U)\}\) a causal mechanism on \((\Omega,\mathscr{H}_U,\mathbb{Q})\). An \textit{intervention on \(\mathscr{H}_U\) via \((\mathbb{Q},\mathbb{L})\)} is a new causal space \((\Omega,\mathscr{H},\mathbb{P}^{\text{do}(U,\mathbb{Q})},\mathbb{K}^{\text{do}(U,\mathbb{Q},\mathbb{L})})\), where the \textit{intervention measure} \(\mathbb{P}^{\text{do}(U,\mathbb{Q})}\) is a probability measure on \((\Omega,\mathscr{H})\) defined, for \(A\in\mathscr{H}\), by
	\[\mathbb{P}^{\text{do}(U,\mathbb{Q})}(A)=\int\mathbb{Q}(d\omega)K_U(\omega,A)\tag{1}\label{interventionmeasure}\]
	and \(\mathbb{K}^{\text{do}(U,\mathbb{Q},\mathbb{L})}=\{K^{\text{do}(U,\mathbb{Q},\mathbb{L})}_S:S\in\mathscr{P}(T)\}\) is the \textit{intervention causal mechanism} whose \textit{intervention causal kernels} are
	\[K^{\text{do}(U,\mathbb{Q},\mathbb{L})}_S(\omega,A)=\int L_{S\cap U}(\omega_{S\cap U},d\omega'_U)K_{S\cup U}((\omega_{S\setminus U},\omega'_U),A).\tag{2}\label{interventioncausalkernel}\]
\end{definition}
The intuition behind these definitions is as follows. Starting from the probability space \((\Omega,\mathscr{H},\mathbb{P})\), we choose a \say{subspace} on which to intervene, namely a sub-\(\sigma\)-algebra \(\mathscr{H}_U\) of \(\mathscr{H}\). The \textit{intervention} is the process of placing any desired measure \(\mathbb{Q}\) on this \say{subspace} \((\Omega,\mathscr{H}_U)\), along with an \textit{internal} causal mechanism \(\mathbb{L}\) on this \say{subspace}\footnote{Choosing \(\mathbb{Q}\) to have measure 1 on a single element would correspond to what is known as a \say{hard intervention} in the SCM literature. Letting \(\mathbb{Q}\) and \(\mathbb{L}\) be arbitrary would allow us to obtain any \say{soft intervention}. }. The causal kernel \(K_U\) corresponding to the \say{subspace} \(\mathscr{H}_U\), which is encoded in the original causal space, determines what the \textit{intervention measure} on the whole space \(\mathscr{H}\) will be, via equation (\ref{interventionmeasure}). For the causal kernels after intervention, the causal effect first takes place within \(\mathscr{H}_U\) via the internal causal mechanism \(\mathbb{L}\), then propagates to the rest of \(\mathscr{H}\) via equation (\ref{interventioncausalkernel}). 
	
The definition of intervening on a \(\sigma\)-algebra of the form \(\mathscr{H}_U\) given in Definition \ref{Dintervention} sheds light on the two axioms of causal spaces given in Definition \ref{Dcausalspace}. 
\begin{remark}\label{Rcausalaxioms}
	\begin{description}
		\item[Trivial Intervention] Axiom \ref{trivialintervention} of causal mechanisms in Definition \ref{Dcausalspace} ensures that intervening on the trivial \(\sigma\)-algebra (i.e. not intervening at all) leaves the probability measure intact, i.e. writing \(\mathbb{Q}\) for the trivial probability measure on \(\{\emptyset,\Omega\}\), we have \(\mathbb{P}^{\textnormal{do}(\emptyset,\mathbb{Q})}=\mathbb{P}\). 
		\item[Interventional Determinism] Axiom \ref{interventionaldeterminism} of Definition \ref{Dcausalspace} ensures that for any \(A\in\mathscr{H}_U\), we have \(\mathbb{P}^{\textnormal{do}(U,\mathbb{Q})}(A)=\mathbb{Q}(A)\), which means that if we intervene on the causal space by giving \(\mathscr{H}_U\) a particular probability measure \(\mathbb{Q}\), then \(\mathscr{H}_U\) indeed has that measure with respect to the intervention probability measure. 
	\end{description}
\end{remark}
	
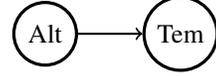
\begin{wrapfigure}{r}{0.2\textwidth}
	\begin{tikzpicture}[roundnode/.style={circle, draw=black, very thick, minimum size=7mm}]
		\node[roundnode, align=center] (altitude) at (-0.9, 0) {Alt};
		\node[roundnode, align=center] (temperature) at (0.9, 0) {Tem};
		\draw[thick, ->](altitude.east) to (temperature.west);
	\end{tikzpicture}
	\caption{Altitude and Temperature.}
	\label{Fgraph}
\end{wrapfigure}
	
The following example should serve as further clarification of the concepts. 
\begin{example}\label{Ealtitudetemperature}
	Let \(E_1=E_2=\mathbb{R}\), and \(\mathscr{E}_1,\mathscr{E}_2\) be Lebesgue \(\sigma\)-algebras on \(E_1\) and \(E_2\). Each \(e_1\in E_1\) and \(e_2\in E_2\) respectively represent the altitude in metres and temperature in Celsius of a random location. For simplicity, we assume a jointly Gaussian measure \(\mathbb{P}\) on \((\Omega,\mathscr{H})=(E_1\times E_2,\mathscr{E}_1\otimes\mathscr{E}_2)\), say with mean vector \(\begin{pmatrix}1000\\ 10\end{pmatrix}\) and covariance matrix \(\begin{pmatrix}300&-15\\ -15&1\end{pmatrix}\). For each \(e_1\in E_1\) and \(A\in\mathscr{E}_2\), we let \(K_1(e_1,A)\) be the conditional measure of \(\mathbb{P}\) given \(e_1\), i.e. Gaussian with mean \(\frac{1200-e_1}{20}\) and variance \(\frac{1}{4}\). This represents the fact that, if we intervene by fixing the altitude of a location, then the temperature of that location will be causally affected. However, if we intervene by fixing a temperature of a location, say by manually heating up or cooling down a place, then we expect that this has no causal effect on the altitude of the place. This can be represented by the causal kernel \(K_2(e_2,B)=\mathbb{P}(B)\) for each \(B\in\mathscr{E}_1\), i.e. Gaussian measure with mean \(1000\) and variance \(300\), regardless of the value of \(e_2\). The corresponding \say{causal graph} would be Figure \ref{Fgraph}. If we intervene on \(\mathscr{E}_1\) with measure \(\delta_{1000}\), i.e. we fix the altitude at \(1000\)m, then the intervention measure \(\mathbb{P}^{\textnormal{do}(1,\delta_{1000})}\) on \((E_2,\mathscr{E}_2)\) would be Gaussian with mean \(10\) and variance \(\frac{1}{4}\). If we intervene on \(\mathscr{E}_2\) with any measure \(\mathbb{Q}\), the intervention measure \(\mathbb{P}^{\textnormal{do}(2,\mathbb{Q})}\) on \((E_1,\mathscr{E}_1)\) would still be Gaussian with mean \(1000\) and variance \(300\). 
\end{example}
The following theorem proves that the intervention measure and causal mechanism are indeed valid. \begin{theorem}\label{Tintervention}
	From Definition \ref{Dintervention}, \(\mathbb{P}^{\textnormal{do}(U,\mathbb{Q})}\) is indeed a measure on \((\Omega,\mathscr{H})\), and \(\mathbb{K}^{\textnormal{do}(U,\mathbb{Q},\mathbb{L})}\) is indeed a valid causal mechanism on \((\Omega,\mathscr{H},\mathbb{P}^{\textnormal{do}(U,\mathbb{Q})})\), i.e. they satisfy the axioms of Definition \ref{Dcausalspace}. 
\end{theorem}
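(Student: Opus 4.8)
The plan is to verify, in turn, the three assertions implicit in the statement: that \(\mathbb{P}^{\textnormal{do}(U,\mathbb{Q})}\) is a probability measure on \((\Omega,\mathscr{H})\), that each \(K^{\textnormal{do}(U,\mathbb{Q},\mathbb{L})}_S\) is a transition probability kernel from \((\Omega,\mathscr{H}_S)\) into \((\Omega,\mathscr{H})\), and that the collection \(\mathbb{K}^{\textnormal{do}(U,\mathbb{Q},\mathbb{L})}\) satisfies axioms \ref{trivialintervention} and \ref{interventionaldeterminism}. The first assertion is the most routine: formula (\ref{interventionmeasure}) exhibits \(\mathbb{P}^{\textnormal{do}(U,\mathbb{Q})}\) as the integral of the kernel section \(K_U(\cdot,A)\) against the probability measure \(\mathbb{Q}\), so non-negativity and \(\mathbb{P}^{\textnormal{do}(U,\mathbb{Q})}(\emptyset)=0\) are immediate, countable additivity follows by monotone convergence applied to the sum \(\sum_n K_U(\omega,A_n)\), and normalisation is \(\int\mathbb{Q}(d\omega)K_U(\omega,\Omega)=1\) since \(K_U\) is a probability kernel and \(\mathbb{Q}\) a probability measure.

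For the second assertion I would check the two defining properties of a kernel separately. For fixed \(A\), I write \(G(\omega,\omega'_U)=K_{S\cup U}((\omega_{S\setminus U},\omega'_U),A)\) and observe that \(G\) is \(\mathscr{H}_S\otimes\mathscr{H}_U\)-measurable, being the composition of the measurable coordinate-assembly map \((\omega,\omega'_U)\mapsto(\omega_{S\setminus U},\omega'_U)\) with the \(\mathscr{H}_{S\cup U}\)-measurable section \(\eta\mapsto K_{S\cup U}(\eta,A)\); since \(\omega\mapsto L_{S\cap U}(\omega_{S\cap U},\cdot)\) is a kernel whose parameter dependence is \(\mathscr{H}_{S\cap U}\subseteq\mathscr{H}_S\)-measurable, the standard lemma on integrating a jointly measurable function against a parametrised kernel (proved by a monotone class argument starting from rectangles) yields \(\mathscr{H}_S\)-measurability of \(\omega\mapsto\int L_{S\cap U}(\omega_{S\cap U},d\omega'_U)G(\omega,\omega'_U)\). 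For fixed \(\omega\), the set function \(A\mapsto K^{\textnormal{do}}_S(\omega,A)\) is the integral of the probability measures \(K_{S\cup U}((\omega_{S\setminus U},\omega'_U),\cdot)\) against the probability measure \(L_{S\cap U}(\omega_{S\cap U},\cdot)\), so it is again a probability measure by the same monotone-convergence and normalisation computation, using \(K_{S\cup U}(\cdot,\Omega)=1\) and \(L_{S\cap U}(\cdot,\Omega)=1\).

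It remains to verify the axioms. Axiom \ref{trivialintervention} is a short computation: with \(S=\emptyset\) one has \(S\cap U=S\setminus U=\emptyset\) and \(S\cup U=U\), and since \(\mathbb{L}\) is a causal mechanism on \((\Omega,\mathscr{H}_U,\mathbb{Q})\) its trivial kernel satisfies \(L_\emptyset(\omega,d\omega'_U)=\mathbb{Q}(d\omega'_U)\) by axiom \ref{trivialintervention} for \(\mathbb{L}\); substituting into (\ref{interventioncausalkernel}) recovers exactly (\ref{interventionmeasure}), i.e.\ \(K^{\textnormal{do}}_\emptyset(\omega,A)=\int\mathbb{Q}(d\omega'_U)K_U(\omega'_U,A)=\mathbb{P}^{\textnormal{do}(U,\mathbb{Q})}(A)\). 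The real work is axiom \ref{interventionaldeterminism}, and I expect this to be the main obstacle. I would fix \(A\in\mathscr{H}_S\) and \(B\in\mathscr{H}\) and start from \(K^{\textnormal{do}}_S(\omega,A\cap B)\); since \(A\in\mathscr{H}_S\subseteq\mathscr{H}_{S\cup U}\), interventional determinism for the \emph{original} kernel \(K_{S\cup U}\) lets me extract \(1_A((\omega_{S\setminus U},\omega'_U))\) inside the integral. The difficulty is that this indicator still depends on the integration variable through its \(S\cap U\)-coordinates (because \(S=(S\setminus U)\sqcup(S\cap U)\)), so it cannot simply be pulled out of the integral.

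The resolution is to invoke interventional determinism of the \emph{internal} mechanism \(\mathbb{L}\). For a function \(\psi\) that is \(\mathscr{H}_{S\cap U}\)-measurable and any non-negative \(\mathscr{H}_U\)-measurable \(g\), axiom \ref{interventionaldeterminism} for \(L_{S\cap U}\) upgrades, via a functional monotone class argument starting from products \(1_D 1_{B'}\) and extending by linearity and monotone convergence, to the identity
\[\int L_{S\cap U}(\omega_{S\cap U},d\omega'_U)\,\psi(\omega'_U)\,g(\omega'_U)=\psi(\omega_{S\cap U})\int L_{S\cap U}(\omega_{S\cap U},d\omega'_U)\,g(\omega'_U).\]
Applying this with \(\psi(\omega'_U)=1_A((\omega_{S\setminus U},\omega'_U))\), which for fixed \(\omega_{S\setminus U}\) is indeed \(\mathscr{H}_{S\cap U}\)-measurable in \(\omega'_U\), and \(g(\omega'_U)=K_{S\cup U}((\omega_{S\setminus U},\omega'_U),B)\), and noting that \(\psi(\omega_{S\cap U})=1_A(\omega)\) because \(A\) depends only on its \(S\)-coordinates, yields \(K^{\textnormal{do}}_S(\omega,A\cap B)=1_A(\omega)K^{\textnormal{do}}_S(\omega,B)\), which is the desired axiom \ref{interventionaldeterminism}. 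The only genuinely delicate point is the careful bookkeeping of which coordinates each factor depends on, and the two monotone class extensions (one for measurability, one for the determinism identity); everything else is routine verification.
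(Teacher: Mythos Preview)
Your proposal is correct and follows essentially the same route as the paper: both proofs dispatch the measure claim by the standard kernel–measure construction, verify axiom~\ref{trivialintervention} by the one-line substitution using \(L_\emptyset=\mathbb{Q}\), and for axiom~\ref{interventionaldeterminism} first apply interventional determinism of \(K_{S\cup U}\) to extract \(1_A((\omega_{S\setminus U},\omega'_U))\), observe it depends only on \(\omega'_{S\cap U}\), and then use interventional determinism of \(L_{S\cap U}\) to pull it out of the integral. The only differences are cosmetic: you package the pull-out step as a single monotone-class identity while the paper writes out the simple-function approximation of both factors explicitly, and you additionally spell out the transition-kernel verification (measurability in \(\omega\), probability measure in \(A\)) that the paper leaves implicit.
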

To end this section, we make a couple of further remarks on the definition of causal spaces. 
\begin{remark}\label{Rcausalspace}
	\begin{enumerate}[(i)]
		\item We require causal spaces to be built on top of product probability spaces, as opposed to general probability spaces, and causal kernels are defined on sub-\(\sigma\)-algebras of \(\mathscr{H}\) of the form \(\mathscr{H}_S\) for \(S\in\mathscr{P}(T)\), as opposed to general sub-\(\sigma\)-algebras of \(\mathscr{H}\). This is because, for two events that are not in separate components of a product space, one can always intervene on one of those events in such a way that the measure on the other event will have to change, meaning the causal kernel cannot be decoupled from the intervention itself. For example, in a dice-roll with outcomes \(\{1,2,3,4,5,6\}\) each with probability \(\frac{1}{6}\), if we intervene to give measure \(1\) to roll \(6\), then the other outcomes are forced to have measure 0. Only if we consider separate components of product measurable spaces can we set meaningful causal relationships that are decoupled from the act of intervention itself. 
		\item We do not distinguish between interventions that are practically possible and those that are not. For example, the \say{causal effect of sunlight on the moon's temperature} cannot be measured realistically, as it would require covering up the sun, but the information encoded in the causal kernel would still correspond to what would happen when we cover up the sun. 
	\end{enumerate}
\end{remark}
	
\section{Comparison with Existing Frameworks}\label{Scomparison}
In this section, we show how causal spaces can encode the interventional aspects of the two most widely-used frameworks of causality, i.e. structural causal models and the potential outcomes.
	
\subsection{Structural Causal Models (SCMs)}\label{SSscms}
Consider an SCM in its most basic form, given in the following definition. 
\begin{definition}[{\citep[p.83, Definition 6.2]{peters2017elements}}]\label{Dscmpeters}
	A structural causal model \(\mathfrak{C}=(\mathbf{S},\tilde{\mathbb{P}})\) consists of a collection \(\mathbf{S}\) of \(d\) (structural) assignments \(X_j\vcentcolon=f_j(\mathbf{PA}_j,N_j),j=1,...,d\), where \(\mathbf{PA}_j\subseteq\{X_1,...,X_d\}\backslash\{X_j\}\) are called the \textit{parents of }\(X_j\) and \(N_j\) are the \textit{noise} variables; and a distribution \(\tilde{\mathbb{P}}\) over the noise variables such that they are jointly independent. 
		
	The graph \(\mathcal{G}\) of an SCM is obtained by creating one vertex for each \(X_j\) and drawing directed edges from each parent in \(\mathbf{PA}_j\) to \(X_j\). This graph is assumed to be acyclic.
\end{definition}
Below, we show that a unique causal space that corresponds to such an SCM can be constructed. 
	
First, we let the variables \(X_j,j=1,...,d\) take values in measurable spaces \((E_j,\mathscr{E}_j)\) respectively, and let \((\Omega,\mathscr{H})=\otimes_{j=1}^d(E_j,\mathscr{E}_j)\). An SCM \(\mathfrak{C}\) entails a unique distribution \(\mathbb{P}\) over the variables \(\mathbf{X}=(X_1,...,X_d)\) by the propagation of the noise distribution \(\tilde{\mathbb{P}}\) through the structural equations \(f_j\) \citep[p.84, Proposition 6.3]{peters2017elements}, and we take this \(\mathbb{P}\) as the observational measure of the causal space. More precisely, assuming \(\{1,...,d\}\) is a topological ordering, we have, for \(A_j\in\mathscr{E}_j,j=1,...,d\), 
\begin{alignat*}{2}
	&\mathbb{P}(A_1\times E_2\times...\times E_d)=\tilde{\mathbb{P}}(\{n_1:f_1(n_1)\in A_1\})\\
	&\mathbb{P}(A_1\times A_2\times E_3\times...\times E_d)=\tilde{\mathbb{P}}(\{(n_1,n_2):(f_1(n_1),f_2(f_1(n_1),n_2))\in A_1\times A_2\})\\
	&\qquad\vdots\\
	&\mathbb{P}(A_1\times...\times A_d)=\tilde{\mathbb{P}}(\{(n_1,...,n_d):(f_1(n_1),...,f_d(f_1(n_1),...,n_d))\in A_1\times...\times A_d\}).
\end{alignat*}
Finally, for each \(S\in\mathscr{P}(\{1,...,d\})\) and for each \(\omega\in\Omega\), define \(f^{S,\omega}_j=f_j\) if \(j\notin S\) and \(f^{S,\omega}_j=\omega_j\) if \(j\in S\). Then we have
\[K_S(\omega,A_1\times...\times A_d)=\tilde{\mathbb{P}}(\{(n_1,...,n_d):(f^{S,\omega}_1(n_1),...,f^{S,\omega}_d(f^{S,\omega}_1(n_1),...,n_d))\in A_1\times...\times A_d\}).\]
This uniquely specifies the causal space \((\Omega,\mathscr{H},\mathbb{P},\mathbb{K})\) that corresponds to the SCM \(\mathfrak{C}\). While this shows that causal spaces strictly generalise (interventional aspects of) SCMs, there are fundamental philosophical differences between the two approaches, as highlighted in the following remark. 
\begin{remark}\label{Rscm}
	\begin{enumerate}[(i)]
		\item The \say{system} in an SCM can be viewed as the collection of all variables \(X_1,...,X_d\), and the \say{subsystems} the individual variables or the groups of variables. Each \textit{structural equation} \(f_j\) encodes how the whole system, when intervened on, affects a subsystem \(X_j\), i.e. how the collection of all other variables affects the individual variables (even though, in the end, the equations only depend on the parents). This way of encoding causal effects seems somewhat inconsistent with the philosophy laid out in the Introduction, that we are interested in what happens to the \say{system} when we intervene on a \say{subsystem}. It also seems inconsistent with the actual action taken, which is to intervene on subsystems, not the whole system, or the parents of a particular variable. 
			
		In contrast, the causal kernels encode exactly what happens to the whole system, i.e. what measure we get on the whole measurable space \((\Omega,\mathscr{H})\), when we intervene on a \say{subsystem}, i.e. put a desired measure on a sub-\(\sigma\)-algebra of \(\mathscr{H}\)\footnote{In this sense, some philosophy is shared with \textit{generalised structural equation models (GSEMs)} \citep{peters2021causal}.}. 
		\item The primitive objects of SCMs are the variables \(X_j\), the structural equations \(f_j\) and the distribution \(P_\mathbf{N}\) over the noise variables. The observational distribution, as well as the interventional distributions, are derived from these objects. It turns out that unique existence of observational and interventional distributions are not guaranteed, and can only be shown under the acyclicity assumption or rather stringent and hard-to-verify conditions on the structural equations and the noise distributions \citep{bongers2021foundations}. Moreover, it means that the observational and interventional distributions are not decoupled, and rather are linked through the structural equations \(f_j\), and as a result, it is not possible to encode the full range of observational and interventional distributions using just the variables of interest (see Example \ref{Eiceshark}).  
			
		In contrast, in causal spaces, the observational distribution \(\mathbb{P}\), as well as the interventional distributions (via the causal kernels), are the primitive objects. Not only does this automatically guarantee their unique existence, but it also allows the interventional distributions (i.e. the causal information) to be completely decoupled from the observational distribution. 
		\item \citet[Section 3]{galles1998axiomatic} propose three axioms of counterfactuals based on SCMs (called \textit{causal models} in that paper), namely, composition, effectiveness and reversibility. Even though these three concepts can be carried over to causal spaces, the mathematics through which they are represented needs to be adapted, since the tools that are used in causal spaces are different from those used in causal models of \citet{galles1998axiomatic}. In particular, we work directly with measures as the primitive objects, whereas \citet{galles1998axiomatic} use the structural equations as the primitive objects, and the probabilities only enter through a measure on the exogenous variables. Thus, the three properties can be phrased in the causal space language as follows:
		\begin{description}
			\item[Composition] For \(S,R\subseteq T\), denote by \(\mathbb{Q}'\) the measure on \(\mathscr{H}_{S\cup R}\) obtained by restricting \(\mathbb{P}^{\text{do}(S,\mathbb{Q})}\). Then \(\mathbb{P}^{\text{do}(S,\mathbb{Q})}=\mathbb{P}^{\text{do}(S\cup R,\mathbb{Q}')}\). In words, intervening on \(\mathscr{H}_S\) via the measure \(\mathbb{Q}\) is the same as intervening on \(\mathscr{H}_{S\cup R}\) via the measure that it would have if we intervened on \(\mathscr{H}_S\) via \(\mathbb{Q}\).
			
			This is not in general true. A counterexample can be demonstrated with a simple SCM, where \(X_1\), \(X_2\) and \(X_3\) causally affect \(Y\), in a way that depends not only on the marginal distributions of \(X_1\), \(X_2\) and \(X_3\) but their joint distribution, and \(X_1\), \(X_2\) and \(X_3\) have no causal relationships among them. Then intervening on \(X_1\) with some measure \(\mathbb{Q}\) cannot be the same as intervening on \(X_1\) and \(X_2\) with \(\mathbb{Q}\otimes\mathbb{P}\), since such an intervention would change the joint distribution of \(X_1\), \(X_2\) and \(X_3\), even if we give them the same marginal distributions.
			\item[Effectiveness] For \(S\subseteq R\subseteq T\), if we intervene on \(\mathscr{H}_R\) via a measure \(\mathbb{Q}\), then \(\mathscr{H}_S\) has measure \(\mathbb{Q}\) restricted to \(\mathscr{H}_S\). 
			
			This is indeed guaranteed by interventional determinism (Definition \ref{Dcausalspace}\ref{interventionaldeterminism}), so effectiveness continues to hold in causal spaces. 
			\item[Reversibility] For \(S,R,U\subseteq T\), let \(\mathbb{Q}\) be some measure on \(\mathscr{H}_S\), and \(\mathbb{Q}_1\) and \(\mathbb{Q}_2\) be measures on \(\mathscr{H}_{S\cup R}\) and \(\mathscr{H}_{S\cup U}\) respectively such that they coincide with \(\mathbb{Q}\) when restricted to \(\mathscr{H}_S\). Then if \(\mathbb{P}^{\text{do}(S\cup R,\mathbb{Q}_1)}(B)=\mathbb{Q}_2(B)\) for all \(B\in\mathscr{H}_U\) and if \(\mathbb{P}^{\text{do}(S\cup U,\mathbb{Q}_2)}(C)=\mathbb{Q}_1(C)\) for all \(C\in\mathscr{H}_R\), then \(\mathbb{P}^{\text{do}(S,\mathbb{Q})}(A)=\mathbb{Q}_1(A)\) for all \(A\in\mathscr{H}_R\). 
			
			This does not hold in general in causal spaces; in fact, Example 4.2 is a counterexample of this, with \(S=\emptyset\). 
		\end{description}
	\end{enumerate}
\end{remark}
	
\subsection{Potential Outcomes (PO) Framework}\label{SSpo}
In the PO framework, the treatment and outcome variables of interest are fixed in advance. Although much of the literature begins with individual units, these units are in the end i.i.d. copies of random variables under the stable unit treatment value assumption (SUTVA), and that is how we begin. 
	
Denote by \((\tilde{\Omega},\tilde{\mathscr{H}},\tilde{\mathbb{P}})\) the underlying probability space. Let \(Z:\tilde{\Omega}\rightarrow\mathcal{Z}\) be the \say{treatment} variable, taking values in a measurable space \((\mathcal{Z},\mathfrak{Z})\). Then for each value \(z\) of the treatment, there is a separate random variable \(Y_z:\tilde{\Omega}\rightarrow\mathcal{Y}\), called the \say{potential outcome given \(Z=z\)} taking values in a measurable space \((\mathcal{Y},\mathfrak{Y})\); we also have the \say{observed outcome}, which is the potential outcome consistent with the treatment, i.e. \(Y=Y_Z\). The researcher is interested in quantities such as the \say{average treatment effect}, \(\tilde{\mathbb{E}}[Y_{z_1}-Y_{z_2}]\), where \(\tilde{\mathbb{E}}\) is the expectation with respect to \(\tilde{\mathbb{P}}\), to measure the causal effect of the treatment. Often, there are other, \say{pre-treatment variables} or \say{covariates}, which we denote by \(X:\tilde{\Omega}\rightarrow\mathcal{X}\), taking values in a measurable space \((\mathcal{X},\mathfrak{X})\). Given these, another object of interest is the \say{conditional average treatment effect}, defined as \(\tilde{\mathbb{E}}[Y_{z_1}-Y_{z_2}\mid X]\).
	
It is relatively straightforward to construct a causal space that can carry this framework. We define \(\Omega=\mathcal{Z}\times\mathcal{Y}\times\mathcal{X}\) and \(\mathscr{H}=\mathfrak{Z}\otimes\mathfrak{Y}\otimes\mathfrak{X}\). We also define \(\mathbb{P}\), for each \(A\in\mathfrak{Z}\), \(B\in\mathfrak{Y}\) and \(C\in\mathfrak{X}\), as \(\mathbb{P}(A\times B\times C)=\tilde{\mathbb{P}}(Z\in A,Y\in B,X\in C)\). As for causal kernels, we are essentially only interested in \(K_Z(z,B)\) for \(B\in\mathfrak{Y}\), and we define these to be \(K_Z(z,B)=\tilde{\mathbb{P}}(Y_z\in B)\).
	
\section{Examples}\label{Sexamples}
In this section, we give a few more concrete constructions of causal spaces. In particular they are designed to highlight cases which are hard to represent with existing frameworks, but which have natural representations in terms of causal spaces. Comparisons are made particularly with SCMs. 
	
\subsection{Confounders}\label{SSconfounders}
The following example highlights the fact that, with graphical models, there is no way to encode correlation but no causation between two variables, using just the variables of interest. 
\begin{figure}[t]
	\begin{subfigure}[b]{0.4\textwidth}
		\includegraphics[scale=0.45]{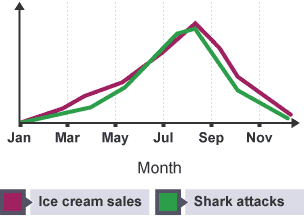}
		\subcaption{}
		\label{SFcorrelation}
	\end{subfigure}
	\hfill
	\begin{subfigure}[b]{0.19\textwidth}
		\begin{tikzpicture}[roundnode/.style={circle, draw=black, very thick, minimum size=7mm}]
			\node[roundnode, align=center] (shark) at (-0.5, 0) {S};
			\node[roundnode, align=center] (icecream) at (0.5, 0) {I};
		\end{tikzpicture}
		\subcaption{}
		\label{SFnothing}
	\end{subfigure}
	\hfill
	\begin{subfigure}[b]{0.19\textwidth}
		\begin{tikzpicture}[roundnode/.style={circle, draw=black, very thick, minimum size=7mm}]
			\node[roundnode, align=center] (shark) at (-0.5, 0) {S};
			\node[roundnode, align=center] (icecream) at (0.5, 0) {I};
			\node[roundnode, align=center] (temperature) at (0, 1.5) {T};
			\draw[thick, ->](temperature.south west) to (shark.north);
			\draw[thick, ->](temperature.south east) to (icecream.north);
		\end{tikzpicture}
		\subcaption{}
		\label{SFtemperature}
	\end{subfigure}
	\hfill
	\begin{subfigure}[b]{0.19\textwidth}
		\begin{tikzpicture}[roundnode/.style={circle, draw=black, very thick, minimum size=7mm}]
			\node[roundnode, align=center] (shark) at (-0.5, 0) {S};
			\node[roundnode, align=center] (icecream) at (0.5, 0) {I};
			\node[roundnode, align=center] (temperature) at (0.5, 1.5) {T};
			\node[roundnode, align=center] (economy) at (-0.5, 1.5) {E};
			\draw[thick, ->](temperature.south west) to (shark.north east);
			\draw[thick, ->](temperature.south) to (icecream.north);
			\draw[thick, ->](economy.south) to (shark.north);
			\draw[thick, ->](economy.south east) to (icecream.north west);
		\end{tikzpicture}
		\subcaption{}
		\label{SFtemperatureeconomy}
	\end{subfigure}
	\caption{Correlation but no causation between ice-cream sales and shark attacks. S stands for the number of shark attacks, I for ice cream sales, T for temperature and E for economy.}
	\label{Ficecreamshark}
\end{figure}
\begin{example}\label{Eiceshark}
	Consider the popular example of monthly ice cream sales and shark attacks in the US (Figure \ref{SFcorrelation}), that shows that correlation does not imply causation. This cannot be encoded by an SCM with just two variables as in Figure \ref{SFnothing}, since no causation means no arrows between the variables, which in turn also means no dependence. One needs to add the common causes into the model (whether observed or latent), the most obvious one being the temperature (high temperatures make people desire ice cream more, as well as to go to the beach more), as seen in Figure \ref{SFtemperature}. Now we have a model in which both dependence and no causation are captured. But can we stop here? There are probably other factors that affect both variables, such as the economy (the better the economic situation, the more likely people are to buy ice cream, and to take beach holidays) -- see Figure \ref{SFtemperatureeconomy}. Not only is the model starting to lose parsimony, but as soon as we stop adding variables to the model, we are making an assumption that there are no further confounding variables out there in the world\footnote{One solution could be to add a single \say{variable} that collects all of the confounders into one, but then the numerical value of this \say{variable}, as well as its distribution and the structural equations from this \say{variable} into S and I, would be completely meaningless.}. 
	
	In contrast, causal spaces allow us to model any observational and causal relationships with just the variables that we were interested in, without any restrictions or the need to add more variables. In this particular example, we would take as our causal space \((E_1\times E_2,\mathscr{E}_1\otimes\mathscr{E}_2,\mathbb{P},\mathbb{K})\), where \(E_1=E_2=\mathbb{R}\) with values in \(E_1\) and \(E_2\) corresponding to ice cream sales and shark attacks respectively, and \(\mathscr{E}_1=\mathscr{E}_2\) being Lebesgue \(\sigma\)-algebras. Then we can let \(\mathbb{P}\) be a measure that has a strong dependence between any \(A\in\mathscr{E}_1\) and \(B\in\mathscr{E}_2\), but let the causal kernels be \(K_1(x,B)=\mathbb{P}(B)\) for any \(x\in E_1\) and \(B\in\mathscr{E}_2\), and likewise \(K_2(x,A)=\mathbb{P}(A)\) for any \(x\in E_2\) and \(A\in\mathscr{E}_1\). 
\end{example}
Nancy Cartwright argued against the completeness of causal Markov condition, using an example of two factories \citep[p.108]{cartwright1999dappled}, in which there may not even be any confounders between dependent variables, not even an unobserved one. If we accept her position, then there are situations which SCMs would not be able to represent, whereas causal spaces would have no problems at all. 
	
\subsection{Cycles}\label{SScycles}
As mentioned before, cycles in SCMs cause serious problems, namely that observational and interventional distributions that are consistent with the given structural equations and noise distribution may not exist, and when they do, they may not exist uniquely. This is an artefact of the fact that these distributions are derived from the structural equations rather than taken as the primitive objects. In the vast majority of the cases, cycles are excluded from consideration from the beginning and only directed acyclic graphs (DAGs) are considered.  Some works study the \textit{solvability} of cyclic SCMs \citep{halpern2000axiomatizing,bongers2021foundations}, where the authors investigate under what conditions on the structural equations and the noise variables there exist random variables and distributions that \textit{solve} the given structural equations, and if so, when that happens \textit{uniquely}. Other works have allowed cycles to exist, but restricted the definition of an SCM only to those that have a unique solution \citep{halpern2000axiomatizing,pearl2009causality,rubenstein2017causal}. 

Of course, cyclic causal relationships abound in the real world. In our proposed causal space (Definition \ref{Dcausalspace}), given two sub-\(\sigma\)-algebras \(\mathscr{H}_S\) and \(\mathscr{H}_U\) of \(\mathscr{H}\), nothing stops both of them from having a causal effect on the other (see Definition \ref{Dcausaleffect} for a precise definition of causal effects), but we are still guaranteed to have a unique causal space, both before intervention and after intervention on either \(\mathscr{H}_S\) or \(\mathscr{H}_U\). The following is an example of a situation with \say{cyclic} causal relationship. 
\begin{example}\label{Erice}
	We want to model the relationship between the amount of rice in the market and its price per kg. Take as the probability space \((E_1\times E_2,\mathscr{E}_1\otimes\mathscr{E}_2,\mathbb{P})\), where \(E_1=E_2=\mathbb{R}\) with values in \(E_1\) and \(E_2\) representing the amount of rice in the market in million tonnes and the price of rice per kg in KRW respectively, \(\mathscr{E}_1,\mathscr{E}_2\) are Lebesgue \(\sigma\)-algebras and \(\mathbb{P}\) is for simplicity taken to be jointly Gaussian. Without any intervention, the higher the yield, the more rice there is in the market and lower the price, as in Figure \ref{SFriceobs}. If the government intervenes on the market by buying up extra rice or releasing rice into the market from its stock, with the goal of stabilising supply at 3 million tonnes, then the price will stabilise accordingly, say with Gaussian distribution with mean 4.5 and standard deviation 0.5, as in Figure \ref{SFricerice}. The corresponding causal kernel will be \(K_1(3,x)=\frac{2}{\sqrt{2\pi}}e^{-\frac{1}{2}(\frac{x-4.5}{0.5})^2}\). On the other hand, if the government fixes the price of rice at a price, say at 6,000 KRW per kg, then the farmers will be incentivised to produce more, say with Gaussian distribution with mean 4 and standard deviation 0.5, as in Figure \ref{SFriceprice}. The corresponding causal kernel will be \(K_2(6,y)=\frac{2}{\sqrt{2\pi}}e^{-\frac{1}{2}(\frac{y-4}{0.5})^2}\). 
\end{example}
\begin{figure}[t]
	\begin{subfigure}{0.13\textwidth}
		\begin{tikzpicture}[roundnode/.style={circle, draw=black, very thick, minimum size=7mm}]
			\node[roundnode, align=center] (rice) at (-0.5, 0) {R};
			\node[roundnode, align=center] (price) at (0.5, 0) {P};
			\draw[thick, <-, bend right](rice.south east) to (price.south west);
			\draw[thick, ->, bend left](rice.north east) to (price.north west);
		\end{tikzpicture}
		\subcaption{}
		\label{SFcyclicgraph}
	\end{subfigure}
	\hfill
	\begin{subfigure}{0.28\textwidth}
		\includegraphics[scale=0.28]{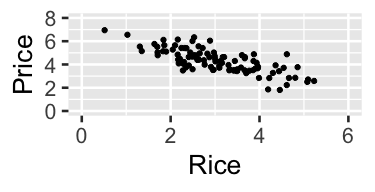}
		\subcaption{}
		\label{SFriceobs}
	\end{subfigure}
	\hfill
	\begin{subfigure}{0.28\textwidth}
		\includegraphics[scale=0.28]{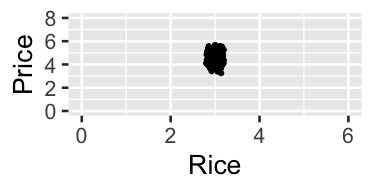}
		\subcaption{}
		\label{SFricerice}
	\end{subfigure}
	\hfill
	\begin{subfigure}{0.28\textwidth}
		\includegraphics[scale=0.28]{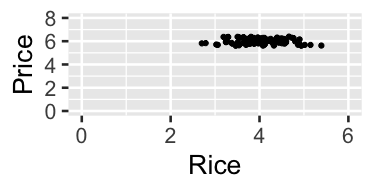}
		\subcaption{}
		\label{SFriceprice}
	\end{subfigure}
	\caption{Rice in the market in million tonnes and price per kg in KRW.}
	\label{Frice}
\end{figure}
Causal spaces treat causal effects really as what happens after an intervention takes place, and with this approach, cycles can be rather naturally encoded, as shown above. We do not view cyclic causal relationships as an equilibrium of a dynamical system, or require it to be viewed as an acyclic stochastic process, as done by some authors \citep[p.85, Remark 6.5]{peters2017elements}. 

\subsection{Continuous-time Stochastic Processes, and Parents}\label{SSstochasticprocesses}
A very well established sub-field of probability theory is the field of stochastic processes, in which the index set representing (most often) time can be either discrete or continuous, and in both cases, infinite. However, most causal models start by assuming a finite number of variables, which immediately rules out considering stochastic processes, and efforts to extend to infinite number of variables usually consider only discrete time steps \citep[Chapter 10]{peters2017elements} or dynamical systems \citep{bongers2018causal,peters2022causal,blom2020beyond,rubenstein2018deterministic}. Since probability spaces have proven to accommodate continuous time stochastic processes in a natural way, it is natural to believe that causal spaces, being built up from probability spaces, should be able to enable the incorporation of the concept of causality in the theory of stochastic processes. 

Let \(W\) be a totally-ordered set, in particular \(W=\mathbb{N}=\{0,1,...\}\), \(W=\mathbb{Z}=\{...,-2,-1,0,1,2,...\}\), \(W=\mathbb{R}_+=[0,\infty)\) or \(W=\mathbb{R}=(-\infty,\infty)\) considered as the time set. Then, we consider causal spaces of the form \((\Omega,\mathscr{H},\mathbb{P},\mathbb{K})=(\times_{t\in T}E_t,\otimes_{t\in T}\mathscr{E}_t,\mathbb{P},\mathbb{K})\), where the index set \(T\) can be written as \(T=W\times\tilde{T}\) for some other index set \(\tilde{T}\). The following notion captures the intuition that causation can only go forwards in time. 
\begin{definition}\label{Dtimerespecting}
	Let \((\Omega,\mathscr{H},\mathbb{P},\mathbb{K})=(\times_{t\in T}E_t,\otimes_{t\in T}\mathscr{E}_t,\mathbb{P},\mathbb{K})\) be a causal space, where the index set \(T\) can be written as \(T=W\times\tilde{T}\), with \(W\) representing time. Then we say that the causal mechanism \(\mathbb{K}\) \textit{respects time}, or that \(\mathbb{K}\) is a \textit{time-respecting causal mechanism}, if, for all \(w_1,w_2\in W\) with \(w_1<w_2\), we have that \(\mathscr{H}_{w_2\times\tilde{T}}\) has no causal effect (in the sense of Definition \ref{Dcausaleffect}) on \(\mathscr{H}_{w_1\times\tilde{T}}\).  
\end{definition}
In a causal space where the index set \(T\) has a time component, the fact that causal mechanism \(\mathbb{K}\) respects time means that the past can affect the future, but the future cannot affect the past. This already distinguishes itself from the concept of conditioning -- conditioning on the future does have implications for past events. We illustrate this point in the example of a Brownian motion.
\begin{example}\label{Ebrownianmotion}
	We consider a 1-dimensional Brownian motion. Take \((\times_{t\in\mathbb{R}_+}E_t,\otimes_{t\in\mathbb{R}_+}\mathscr{E}_t,\mathbb{P},\mathbb{K})\), where, for each \(t\in\mathbb{R}_+\), \(E_t=\mathbb{R}\) and \(\mathscr{E}_t\) is the Lebesgue \(\sigma\)-algebra, and \(\mathbb{P}\) is the Wiener measure. For \(s<t\), we have causal kernels \(K_s(x,y)=\frac{1}{\sqrt{2\pi(t-s)}}e^{-\frac{1}{2(t-s)}(y-x)^2}\) and \(K_t(x,y)=\frac{1}{\sqrt{2\pi s}}e^{-\frac{1}{2s}y^2}\). The former says that, if we intervene by setting the value of the process to \(x\) at time \(s\), then the process starts again from \(x\), whereas the latter says that if we intervene at time \(t\), the past values at time \(s\) are not affected. On the left-hand plot of Figure \ref{Fbrownianmotion}, we set the value of the process at time \(1\) to \(0\). The past values of the process are not affected, and there is a discontinuity at time \(1\) where the process starts again from \(0\). Contrast this to the right-hand plot, where we condition on the process having value \(0\) at time \(1\). This does affect past values, and creates a Brownian bridge from time \(0\) to time \(1\). 
	
	Note, Brownian motion is not differentiable, so no approach based on dynamical systems is applicable. 
\end{example}
\begin{remark}\label{Rparents}
	The concept of \textit{parents} is central in SCMs -- the structural equations are defined on the parents of each variable. However, continuous time is dense, so given two distinct points in time, there is always a time point in between. Suppose we have a one-dimensional continuous time Markov process \((X_t)_{t\in\mathbb{R}}\) \citep[p.169]{cinlar2011probability}, and a point \(t_0\) in time. Then for any \(t<t_0\), \(X_t\) has a causal effect on \(X_{t_0}\), but there always exists some \(t'\) with \(t<t'<t_0\) such that conditioned on \(X_{t'}\), \(X_t\) does not have a causal effect on \(X_{t_0}\), meaning \(X_t\) cannot be a parent of \(X_{t_0}\). In such a case, \(X_{t_0}\) cannot be said to have any parents, and hence no corresponding SCM can be defined. 
\end{remark}
\begin{figure}
	\centering
	\includegraphics[scale=0.2]{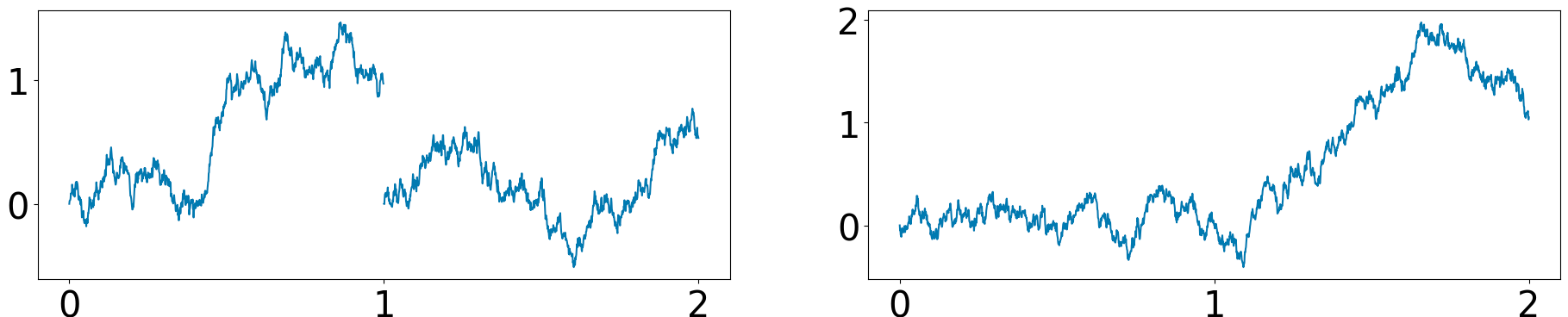}
	\caption{1-dimensional Brownian motion, intervened and conditioned to have value 0 at time 1.}
	\label{Fbrownianmotion}
\end{figure}

\section{Conclusion}\label{Sconclusion}
In this work, we discussed the lack of a universally agreed axiomatisation of causality, and some arguments as to why measure-theoretic probability theory can provide a good foundation on which to build such an axiomatisation. We proposed \textit{causal spaces}, by enriching probability spaces with causal kernels that encode information about what happens after an intervention. We showed how the interventional aspects of existing frameworks can be captured by causal spaces, and finally we gave some explicit constructions, highlighting cases in which existing frameworks fall short. 

Even if causal spaces prove with time to be the correct approach to axiomatise causality, there is much work to be done -- in fact, all the more so in that case. Most conspicuously, we only discussed the \textit{interventional} aspects of the theory of causality, but the notion of \textit{counterfactuals} is also seen as a key part of the theory, both \textit{interventional counterfactuals} as advocated by Pearl's ladder of causation \citep[Figure 1.2]{pearl2018book} and \textit{backtracking counterfactuals} \citep{von2023backtracking}. We provide some discussions and possible first steps towards this goal in Appendix \ref{Scounterfactuals}, but mostly leave it as essential future work. Only then will we be able to provide a full comparison with the counterfactual aspects of SCMs and the potential outcomes. As discussed in Section \ref{SSrelatedworks}, the notion of \textit{actual causality} is also  important, and it would be interesting to investigate how this notion can be embedded into causal spaces. Many important definitions, including causal effects, conditional causal effects, hard interventions and sources, as well as technical results, were deferred to the appendix purely due to space constraints, but we foresee that there would be many more interesting results to be proved, inspired both from theory and practice. In particular, the theory of \textit{causal abstraction} \citep{rubenstein2017causal,beckers2019abstracting,beckers2020approximate} should benefit from our proposal, through extensions of homomorphisms of probability spaces to causal spaces\footnote{In a similar vein, \citet{keurti2023homomorphism} consider homomorphisms between groups of interventions.}. 

As a final note, we stress again that our goal should \textit{not} be understood as replacing existing frameworks. Indeed, causal spaces cannot compete in terms of interpretability, and in the vast majority of situations in which SCMs, potential outcomes or any of the other existing frameworks are suitable, we expect them to be much more useful. In particular, assumptions are unavoidable for identifiability from observational data, and those assumptions are much better captured by existing frameworks\footnote{Researchers from the potential outcomes community and the graphical model community are arguing as to which framework is better for which situations \citep{imbens2019potential,pearl2009causality}. We do not take part in this debate.}, However, just as measure-theoretic probability theory has its value despite not being useful for practitioners in applied statistics, we believe that it is a worthy endeavour to formally axiomatise causality. 

\begin{ack}
	We express our sincere gratitude to Robin Evans at the University of Oxford, Michel de Lara at \'Ecole des Ponts ParisTech and Wojciech Niemiro at the University of Warsaw for fruitful discussions and providing valuable feedback on earlier drafts. We also thank anonymous reviewers for their suggestions for improvements. 
	
	This work was supported by the T\"ubingen AI Center. 
\end{ack}

\bibliography{ref}
\bibliographystyle{abbrvnat}

\newpage
\appendix

\section{Mathematical Preliminaries}\label{Spreliminaries}
In this section, we recall some basic facts about measure and probability theory that we need for the development in the main body of the paper. We follow \citet{cinlar2011probability}. 

\subsection{Measure Theory}\label{SSmeasuretheory}
Suppose that \(E\) is a set. We first define the notion of a \(\sigma\)-algebra. 
A non-empty collection \(\mathscr{E}\) of \(E\) is called a \textit{\(\sigma\)-algebra} on \(E\) if it is closed under complements and countable unions, that is, if
\begin{enumerate}[(i)]
	\item \(A\in\mathscr{E}\implies E\backslash A\in\mathscr{E}\);
	\item \(A_1,A_2,...\in\mathscr{E}\implies\cup_{n=1}^\infty A_n\in\mathscr{E}\)
\end{enumerate}
\citep[p.2]{cinlar2011probability}.	We call \(\{\emptyset,E\}\) the \textit{trivial \(\sigma\)-algebra} of \(E\). If \(\mathscr{C}\) is an arbitrary collection of subsets of \(E\), then the smallest \(\sigma\)-algebra that contains \(\mathscr{C}\), or equivalently, the intersection of all \(\sigma\)-algebras that contain \(\mathscr{C}\), is called the \textit{\(\sigma\)-algebra generated by \(\mathscr{C}\)}, and is denoted \(\sigma\mathscr{C}\). 

A \textit{measurable space} is a pair \((E,\mathscr{E})\), where \(E\) is a set and \(\mathscr{E}\) is a \(\sigma\)-algebra on \(E\) \citep[p.4]{cinlar2011probability}. 

Suppose \((E,\mathscr{E})\) and \((F,\mathscr{F})\) are measurable spaces. For \(A\in\mathscr{E}\) and \(B\in\mathscr{F}\), we define the \textit{measurable rectangle} \(A\times B\) as the set of all pairs \((x,y)\) with \(x\in A\) and \(y\in B\). We define the \textit{product \(\sigma\)-algebra} \(\mathscr{E}\otimes\mathscr{F}\) on \(E\times F\) as the \(\sigma\)-algebra generated by the collection of all measurable rectangles. The measurable space \((E\times F,\mathscr{E}\otimes\mathscr{F})\) is the \textit{product} of \((E,\mathscr{E})\) and \((F,\mathscr{F})\) \citep[p.4]{cinlar2011probability}. More generally, if \((E_1,\mathscr{E}_1),...,(E_n,\mathscr{E}_n)\) are measurable spaces, their product is
\[\bigotimes^n_{i=1}(E_i,\mathscr{E}_i)=(\bigtimes^n_{i=1}E_i,\bigotimes^n_{i=1}\mathscr{E}_i),\]
where \(E_1\times...\times E_n\) is the set of all \(n\)-tuples \((x_1,...,x_n)\) with \(x_i\) in \(E_i\) for \(i=1,...,n\) and \(\mathscr{E}_1\otimes...\otimes\mathscr{E}_n\) is the \(\sigma\)-algebra generated by the \textit{measurable rectangles} \(A_1\times...\times A_n\) with \(A_i\) in \(\mathscr{E}_i\) for \(i=1,...,n\) \citep[p.44]{cinlar2011probability}. If \(T\) is an arbitrary (countable or uncountable) index set and \((E_t,\mathscr{E}_t)\) is a measurable space for each \(t\in T\), the \textit{product space} of \(\{E_t:t\in T\}\) is the set \(\bigtimes_{t\in T}E_t\) of all collections \((x_t)_{t\in T}\) with \(x_t\in E_t\) for each \(t\in T\). A rectangle in \(\bigtimes_{t\in T}E_t\) is a subset of the form
\[\bigtimes_{t\in T}A_t=\{x=(x_t)_{t\in T}\in\bigtimes_{t\in T}E_t:x_t\in A_t\text{ for each }t\text{ in }T\}\]
where \(A_t\) differs from \(E_t\) for only a finite number of \(t\). It is said to be measurable if \(A_t\in\mathscr{E}_t\) for every \(t\) (for which \(A_t\) differs from \(E_t\)). The \(\sigma\)-algebra on \(\bigtimes_{t\in T}E_t\) generated by the collection of all measurable rectangles is called the \textit{product \(\sigma\)-algebra} and is denoted by \(\bigotimes_{t\in T}\mathscr{E}_t\) \citep[p.45]{cinlar2011probability}. 

A collection \(\mathscr{C}\) of subsets of \(E\) is called a p-system if it is closed under intersections \citep[p.2]{cinlar2011probability}. If two measures \(\mu\) and \(\nu\) on a measurable space \((E,\mathscr{E})\) with \(\mu(E)=\nu(E)<\infty\) agree on a p-system generating \(\mathscr{E}\), then \(\mu\) and \(\nu\) are identical \citep[p.16, Proposition 3.7]{cinlar2011probability}. 

Let \((E,\mathscr{E})\) and \((F,\mathscr{F})\) be measurable spaces. A mapping \(f:E\rightarrow F\) is \textit{measurable} if \(f^{-1}B\in\mathscr{E}\) for every \(B\in\mathscr{F}\) \citep[p.6]{cinlar2011probability}.

Let \((E,\mathscr{E})\) and \((F,\mathscr{F})\) be measurable spaces. Let \(f\) be a bijection between \(E\) and \(F\), and let \(\hat{f}\) denote its functional inverse. Then, \(f\) is an \textit{isomorphism} if \(f\) is measurable relative to \(\mathscr{E}\) and \(\mathscr{F}\), and \(\hat{f}\) is measurable with respect to \(\mathscr{F}\) and \(\mathscr{E}\). The measurable spaces \((E,\mathscr{E})\) and \((F,\mathscr{F})\) are \textit{isomorphic} if there exists an isomorphism between them \citep[p.11]{cinlar2011probability}. 

A measurable space \((E,\mathscr{E})\) is a \textit{standard measurable space} if it is isomorphic to \((F,\mathscr{B}_F)\) for some Borel subset \(F\) of \(\mathbb{R}\). Polish spaces with their Borel \(\sigma\)-algebra are standard measurable spaces \citep[p.11]{cinlar2011probability}.

Let \(A\subset E\). Its \textit{indicator}, denoted by \(1_A\), is the function defined by
\[1_A(x)=\begin{cases}1&\text{if }x\in A\\0&\text{if }x\notin A\end{cases}\]
\citep[p.8]{cinlar2011probability}. Obviously, \(1_A\) is \(\mathscr{E}\)-measurable if and only if \(A\in\mathscr{E}\). A function \(f:E\rightarrow\mathbb{R}\) is said to be \textit{simple} if it is of the form
\[f=\sum^n_{i=1}a_i1_{A_i}\]
for some \(n\in\mathbb{N}\), \(a_1,...,a_n\in\mathbb{R}\) and \(A_1,...,A_n\in\mathscr{E}\) \citep[p.8]{cinlar2011probability}. The \(A_1,...,A_n\in\mathscr{E}\) can be chosen to be a measurable partition of \(E\), and is then called the \textit{canonical form} of the simple function \(f\). A positive function on \(E\) is \(\mathscr{E}\)-measurable if and only if it is the limit of an increasing sequence of positive simple functions \citep[p.10, Theorem 2.17]{cinlar2011probability}.

A \textit{measure} on a measurable space \((E,\mathscr{E})\) is a mapping \(\mu:\mathscr{E}\rightarrow[0,\infty]\) such that
\begin{enumerate}[(i)]
	\item \(\mu(\emptyset)=0\);
	\item \(\mu(\cup^\infty_{n=1}A_n)=\sum^\infty_{n=1}\mu(A_n)\) for every disjoint sequence \((A_n)\) in \(\mathscr{E}\)
\end{enumerate}
\citep[p.14]{cinlar2011probability}. A \textit{measure space} is a triplet \((E,\mathscr{E},\mu)\), where \((E,\mathscr{E})\) is a measurable space and \(\mu\) is a measure on it. 

A measurable set \(B\) is said to be \textit{negligible} if \(\mu(B)=0\), and an arbitrary subset of \(E\) is said to be \textit{negligible} if it is contained in a measurable negligible set. The measure space is said to be \textit{complete} if every negligible set is measurable \citep[p.17]{cinlar2011probability}. 

Next, we review the notion of integration of a real-valued function \(f:E\rightarrow\mathbb{R}\) with respect to \(\mu\) \citep[p.20, Definition 4.3]{cinlar2011probability}.
\begin{enumerate}[(a)]
	\item Let \(f:E\rightarrow[0,\infty]\) be simple. If its canonical form is \(f=\sum^n_{i=1}a_i1_{A_i}\) with \(a_i\in\mathbb{R}\), then we define
	\[\int fd\mu=\sum^n_{i=1}a_i\mu(A_i).\]
	\item Suppose \(f:E\rightarrow[0,\infty]\) is measurable. Then by above, we have a sequence \((f_n)\) of positive simple functions such that \(f_n\rightarrow f\) pointwise. Then we define
	\[\int fd\mu=\lim_{n\rightarrow\infty}\int f_nd\mu,\]
	where \(\int f_nd\mu\) is defined for each \(n\) by (a). 
	\item Suppose \(f:E\rightarrow[-\infty,\infty]\) is measurable. Then \(f^+=\max\{f,0\}\) and \(f^-=-\min\{f,0\}\) are measurable and positive, so we can define \(\int f^+d\mu\) and \(\int f^-d\mu\) as in (b). Then we define
	\[\int fd\mu=\int f^+d\mu-\int f^-d\mu\]
	provided that at least one term on the right be positive. Otherwise, \(\int fd\mu\) is undefined. If \(\int f^+d\mu<\infty\) and \(\int f^-d\mu<\infty\), then we say that \(f\) is \textit{integrable}. 
\end{enumerate}
Finally, we review the notion of \textit{transition kernels}, which are crucial in the consideration of conditional distributions. Let \((E,\mathscr{E})\) and \((F,\mathscr{F})\) be measurable spaces. Let \(K\) be a mapping \(E\times\mathscr{F}\rightarrow[0,\infty]\). Then, \(K\) is called a \textit{transition kernel} from \((E,\mathscr{E})\) into \((F,\mathscr{F})\) if
\begin{enumerate}[(a)]
	\item the mapping \(x\mapsto K(x,B)\) is measurable for every set \(B\in\mathscr{F}\); and
	\item the mapping \(B\mapsto K(x,B)\) is a measure on \((F,\mathscr{F})\) for every \(x\in E\). 
\end{enumerate}
A transition kernel from \((E,\mathcal{E})\) into \((F,\mathcal{F})\) is called a \textit{probability transition kernel} if \(K(x,F)=1\) for all \(x\in E\). A probability transition kernel \(K\) from \((E,\mathcal{E})\) into \((E,\mathcal{E})\) is called a \textit{Markov kernel on \((E,\mathcal{E})\)} \citep[p.37,39,40]{cinlar2011probability}. 

\subsection{Probability Theory}\label{SSprobabilitytheory}
Now we translate the above measure-theoretic notions into the language of probability theory, and introduce some additional concepts. A \textit{probability space} is a measure space \((\Omega,\mathscr{H},\mathbb{P})\) such that \(\mathbb{P}(\Omega)=1\) \citep[p.49]{cinlar2011probability}. We call \(\Omega\) the \textit{sample space}, and each element \(\omega\in\Omega\) an \textit{outcome}. We call \(\mathscr{H}\) a collection of \textit{events}, and for any \(A\in\mathscr{H}\), we read \(\mathbb{P}(A)\) as the \textit{probability that the event \(A\) occurs} \citep[p.50]{cinlar2011probability}. 

A \textit{random variable} taking values in a measurable space \((E,\mathscr{E})\) is a function \(X:\Omega\rightarrow E\), measurable with respect to \(\mathscr{H}\) and \(\mathscr{E}\). The \textit{distribution} of \(X\) is the measure \(\mu\) on \((E,\mathscr{E})\) defined by \(\mu(A)=\mathbb{P}(X^{-1}A)\) \citep[p.51]{cinlar2011probability}. For an arbitrary set \(T\), let \(X_t\) be a random variable taking values in \((E,\mathscr{E})\) for each \(t\in T\). Then the collection \(\{X_t:t\in T\}\) is called a \textit{stochastic process} with \textit{state space} \((E,\mathscr{E})\) and \textit{parameter set} \(T\) \citep[p.53]{cinlar2011probability}. 

Henceforth, random variables are defined on \((\Omega,\mathscr{H},\mathbb{P})\) and take values in \([-\infty,\infty]\). We define the \textit{expectation} of a random variable \(X:\Omega\rightarrow[-\infty,\infty]\) as \(\mathbb{E}[X]=\int_\Omega Xd\mathbb{P}\) \citep[p.57-58]{cinlar2011probability}. We also define the \textit{conditional expectation} \citep[p.140, Definition 1.3]{cinlar2011probability}.
Suppose \(\mathscr{F}\) is a sub-\(\sigma\)-algebra of \(\mathscr{H}\). 
\begin{enumerate}[(a)]
	\item Suppose \(X\) is a positive random variable. Then the \textit{conditional expectation of \(X\) given \(\mathscr{F}\)} is any positive random variable \(\mathbb{E}_\mathscr{F}X\) satisfying
	\[\mathbb{E}[VX]=\mathbb{E}\left[V\mathbb{E}_\mathscr{F}X\right]\]
	for all \(V:\Omega\rightarrow[0,\infty]\) measurable with respect to \(\mathscr{F}\).
	\item Suppose \(X:\Omega\rightarrow[-\infty,\infty]\) is a random variable. If \(\mathbb{E}[X]\) exists, then we define
	\[\mathbb{E}_\mathscr{F}X=\mathbb{E}_\mathscr{F}X^+-\mathbb{E}_\mathscr{F}X^-,\]
	where \(\mathbb{E}_\mathscr{F}X^+\) and \(\mathbb{E}_\mathscr{F}X^-\) are defined in (a). 
\end{enumerate}	
Next, we define \textit{conditional probabilities}, and regular versions thereof \citep[pp.149-151]{cinlar2011probability}. Suppose \(H\in\mathscr{H}\), and let \(\mathscr{F}\) be a sub-\(\sigma\)-algebra of \(\mathscr{H}\). Then the \textit{conditional probability} of \(H\) given \(\mathscr{F}\) is defined as
\[\mathbb{P}_\mathscr{F}H=\mathbb{E}_\mathscr{F}1_H.\]
Let \(Q(H)\) be a version of \(\mathbb{P}_\mathscr{F}H\) for every \(H\in\mathscr{H}\). Then \(Q:(\omega,H)\mapsto Q_\omega(H)\) is said to be a \textit{regular version} of the conditional probability \(\mathbb{P}_\mathscr{F}\) provided that \(Q\) be a probability transition kernel from \((\Omega,\mathscr{F})\) into \((\Omega,\mathscr{H})\). Regular versions exist if \((\Omega,\mathscr{H})\) is a standard measurable space \citep[p.151, Theorem 2.7]{cinlar2011probability}. 

The \textit{conditional distribution} of a random variable \(X\) given \(\mathscr{F}\) is any transition probability kernel \(L:(\omega,B)\mapsto L_\omega(B)\) from \((\Omega,\mathscr{F})\) into \((E,\mathscr{E})\) such that
\[P_\mathscr{F}\{Y\in B\}=L(B)\qquad\text{for all }B\in\mathscr{E}.\]
If \((E,\mathscr{E})\) is a standard measurable space, then a version of the conditional distribution of \(X\) given \(\mathscr{F}\) exists \citep[p.151]{cinlar2011probability}. 

Suppose that \(T\) is a totally ordered set, i.e. whenever \(r,s,t\in T\) with \(r<s\) and \(s<t\), we have \(r<t\) and for any \(s,t\in T\), exactly one of \(s<t,s=t\) and \(t<s\) holds \citep[p.62]{enderton1977elements}. For each \(t\in T\), let \(\mathscr{F}_t\) be a sub-\(\sigma\)-algebra of \(\mathscr{H}\). The family \(\mathscr{F}=\{\mathscr{F}_t:t\in T\}\) is called a \textit{filtration} provided that \(\mathscr{F}_s\subset\mathscr{F}_t\) for \(s<t\) \citep[p.79]{cinlar2011probability}. A \textit{filtered probability space} \((\Omega,\mathscr{H},\mathscr{F},\mathbb{P})\) is a probability space \((\Omega,\mathscr{H},\mathbb{P})\) endowed with a filtration \(\mathscr{F}\). 

Finally, we review the notion of \textit{independence} and \textit{conditional independence}. For a fixed integer \(n\geq2\), let \(\mathscr{F}_1,...,\mathscr{F}_n\) be sub-\(\sigma\)-algebras of \(\mathscr{H}\). Then \(\{\mathscr{F}_1,...,\mathscr{F}_n\}\) is called an \textit{independency} if
\[\mathbb{P}\left(H_1\cap...\cap H_n\right)=\mathbb{P}\left(H_1\right)...\mathbb{P}\left(H_n\right)\]
for all \(H_1\in\mathscr{F}_1,...,H_n\in\mathscr{F}_n\). Let \(T\) be an arbitrary index set. Let \(\mathscr{F}_t\) be a sub-\(\sigma\)-algebra of \(\mathscr{H}\) for each \(t\in T\). The collection \(\{\mathscr{F}_t:t\in T\}\) is called an \textit{independency} if its every finite subset is an independency \citep[p.82]{cinlar2011probability}. 

Moreover, \(\mathscr{F}_1,...,\mathscr{F}_n\) are said to be \textit{conditional independent} given \(\mathscr{F}\) if
\[\mathbb{P}_\mathscr{F}\left(H_1\cap...\cap H_n\right)=\mathbb{P}_\mathscr{F}\left(H_1\right)...\mathbb{P}_\mathscr{F}\left(H_n\right)\]
for all \(H_1\in\mathscr{F}_1,...,H_n\in\mathscr{F}_n\) \citep[p.158]{cinlar2011probability}. 

\section{Causal Effect}\label{Scausaleffect}
In this section, we define what it means for a sub-\(\sigma\)-algebra of the form \(\mathscr{H}_S\) to have a \textit{causal effect} on an event \(A\in\mathscr{H}\). 
\begin{definition}\label{Dcausaleffect}
	Let \((\Omega,\mathscr{H},\mathbb{P},\mathbb{K})=(\times_{t\in T}E_t,\otimes_{t\in T}\mathscr{E}_t,\mathbb{P},\mathbb{K})\) be a causal space, \(U\in\mathscr{P}(T)\), \(A\in\mathscr{H}\) an event and \(\mathscr{F}\) a sub-\(\sigma\)-algebra of \(\mathscr{H}\) (not necessarily of the form \(\mathscr{H}_S\) for some \(S\in\mathscr{P}(T)\)). 
	\begin{enumerate}[(i)]
		\item\label{nocausaleffect} If \(K_S(\omega,A)=K_{S\setminus U}(\omega,A)\) for all \(S\in\mathscr{P}(T)\) and all \(\omega\in\Omega\), then we say that \(\mathscr{H}_U\) has \textit{no causal effect on \(A\)}, or that \(\mathscr{H}_U\) is \textit{non-causal to \(A\)}.
		
		We say that \(\mathscr{H}_U\) has \textit{no causal effect on \(\mathscr{F}\)}, or that \(\mathscr{H}_U\) is \textit{non-causal to \(\mathscr{F}\)}, if, for all \(A\in\mathscr{F}\), \(\mathscr{H}_U\) has no causal effect on \(A\).
		\item\label{activecausaleffect} If there exists \(\omega\in\Omega\) such that \(K_U(\omega,A)\neq\mathbb{P}(A)\), then we say that \(\mathscr{H}_U\) has an \textit{active causal effect on \(A\)}, or that \(\mathscr{H}_U\) is \textit{actively causal to \(A\)}. 
		
		We say that \(\mathscr{H}_U\) has an \textit{active causal effect on \(\mathscr{F}\)}, or that \(\mathscr{H}_U\) is \textit{actively causal to \(\mathscr{F}\)}, if \(\mathscr{H}_U\) has an active causal effect on some \(A\in\mathscr{F}\). 
		\item\label{dormantcausaleffect} Otherwise, we say that \(\mathscr{H}_U\) has a \textit{dormant causal effect on \(A\)}, or that \(\mathscr{H}_U\) is \textit{dormantly causal to \(A\)}. 
		
		We say that \(\mathscr{H}_U\) has a \textit{dormant causal effect on \(\mathscr{F}\)}, or that \(\mathscr{H}_U\) is \textit{dormantly causal to \(\mathscr{F}\)}, if \(\mathscr{H}_U\) does not have an active causal effect on any event in \(\mathscr{F}\) and there exists \(A\in\mathscr{F}\) on which \(\mathscr{H}_U\) has a dormant causal effect.
	\end{enumerate}
	Sometimes, we will say that \(\mathscr{H}_U\) has a \textit{causal effect} on \(A\) to mean that \(\mathscr{H}_U\) has either an active or a dormant causal effect on \(A\). 
\end{definition}
The intuition is as follows. For any \(S\in\mathscr{P}(T)\) and any fixed event \(A\in\mathscr{H}\), consider the function \(\omega_S\mapsto K_S((\omega_{S\cap U},\omega_{S\setminus U}),A)\). If \(\mathscr{H}_U\) has no causal effect on \(A\), then it means that the causal kernel does not depend on the \(\omega_{S\cap U}\) component of \(\omega_S\). Since this has to hold for all \(S\in\mathscr{P}(T)\), it means that it is possible to have, for example, \(K_U(\omega,A)=\mathbb{P}(A)\) for all \(\omega\in\Omega\) and yet for \(\mathscr{H}_U\) to have a causal effect on \(A\). This would be precisely the case where \(\mathscr{H}_U\) has a dormant causal effect on \(A\), and it means that, for some \(S\in\mathscr{P}(T)\), \(\omega_S\mapsto K_S((\omega_{S\cap U},\omega_{S\setminus U},A)\) does depend on the \(\omega_{S\cap U}\) component. 

We collect some straightforward but important special cases in the following remark. 
\begin{remark}\label{Rcausaleffect}
	\begin{enumerate}[(a)]
		\item\label{nocausaleffectmeasure} If \(\mathscr{H}_U\) has no causal effect on \(A\), then letting \(S=U\) in Definition \ref{Dcausaleffect}\ref{nocausaleffect} and applying Definition \ref{Dcausalspace}\ref{trivialintervention}, we can see that, for all \(\omega\in\Omega\),
		\[K_U(\omega,A)=K_{U\setminus U}(\omega,A)=K_\emptyset(\omega,A)=\mathbb{P}(A).\]
		In particular, this means that \(\mathscr{H}_U\) cannot have both no causal effect and active causal effect on \(A\). 
		\item It is immediate that the trivial \(\sigma\)-algebra \(\mathscr{H}_\emptyset=\{\emptyset,\Omega\}\) has no causal effect on any event \(A\in\mathscr{H}\). Conversely, it is also clear that \(\mathscr{H}_U\) for any \(U\in\mathscr{P}(T)\) has no causal effect on the trivial \(\sigma\)-algebra.
		\item Let \(U\in\mathscr{P}(T)\) and \(\mathscr{F}\) a sub-\(\sigma\)-algebra of \(\mathscr{H}\). If \(\mathscr{H}_U\cap\mathscr{F}\neq\{\emptyset,\Omega\}\), then \(\mathscr{H}_U\) has an active causal effect on \(\mathscr{F}\), since, for \(A\in\mathscr{H}_U\cap\mathscr{F}\) with \(A\neq\emptyset\) and \(A\neq\Omega\), Definition \ref{Dcausalspace}\ref{interventionaldeterminism} tells us that \(K_U(\cdot,A)=1_A(\cdot)\neq\mathbb{P}(A)\). In particular, \(\mathscr{H}_U\) has an active causal effect on itself. Further, the full \(\sigma\)-algebra \(\mathscr{H}=\mathscr{H}_T\) has an active causal effect on all of its sub-\(\sigma\)-algebras except the trivial \(\sigma\)-algebra, and every \(\mathscr{H}_U,U\in\mathscr{P}(T)\) except the trivial \(\sigma\)-algebra has an active causal effect on the full \(\sigma\)-algebra \(\mathscr{H}\).
		\item Let \(U\in\mathscr{P}(T)\) and \(\mathscr{F}_1,\mathscr{F}_2\) be sub-\(\sigma\)-algebras of \(\mathscr{H}\). If \(\mathscr{F}_1\subseteq\mathscr{F}_2\) and \(\mathscr{H}_U\) has no causal effect on \(\mathscr{F}_2\), then it is clear that \(\mathscr{H}_U\) has no causal effect on \(\mathscr{F}_1\). 
		\item\label{nocausaleffectsubset} If \(\mathscr{H}_U\) has no causal effect on an event \(A\), then for any \(V\in\mathscr{P}(T)\) with \(V\subseteq U\), \(\mathscr{H}_V\) has no causal effect on \(A\). Indeed, take any \(S\in\mathscr{P}(T)\). Then using the fact that \(\mathscr{H}_U\) has no causal effect on \(A\), see that, for any \(\omega\in\Omega\),
		\begin{alignat*}{3}
			K_{S\setminus V}(\omega,A)&=K_{(S\setminus V)\setminus U}(\omega,A)\qquad&&\text{applying Definition \ref{Dcausaleffect}\ref{nocausaleffect} with }S\setminus V\\
			&=K_{S\setminus U}(\omega,A)&&\text{since }V\subseteq U\\
			&=K_S(\omega,A)&&\text{applying Definition \ref{Dcausaleffect}\ref{nocausaleffect} with }S.
		\end{alignat*}
		Since \(S\in\mathscr{P}(T)\) was arbitrary, we have that \(\mathscr{H}_V\) has no causal effect on \(A\). 
		\item Contrapositively, if \(U,V\in\mathscr{P}(T)\) with \(V\subseteq U\) and \(\mathscr{H}_V\) has a causal effect on \(A\), then \(\mathscr{H}_U\) has a causal effect on \(A\).
		\item\label{nocausaleffectunion} If \(U\in\mathscr{P}(T)\) has no causal effect on \(A\), then for any \(V\in\mathscr{P}(T)\), we have
		\[K_V(\omega,A)=K_{U\cup V}(\omega,A).\]
		Indeed, 
		\begin{alignat*}{3}
			K_{U\cup V}(\omega,A)&=K_{(U\cup V)\setminus(U\setminus V)}(\omega,A)\quad&&\text{since }U\setminus V\text{ has no causal effect on }A\text{ by \ref{nocausaleffectsubset}}\\
			&=K_V(\omega,A)&&\text{since }(U\cup V)\setminus(U\setminus V)=V.
		\end{alignat*}
		\item If \(U,V\in\mathscr{P}(T)\) and neither \(\mathscr{H}_U\) nor \(\mathscr{H}_V\) has a causal effect on \(A\), then \(\mathscr{H}_{U\cup V}\) has no causal effect on \(A\). Indeed, for any \(S\in\mathscr{P}(T)\) and any \(\omega\in\Omega\),
		\begin{alignat*}{3}
			K_{S\setminus(U\cup V)}(\omega,A)&=K_{(S\setminus U)\setminus V}(\omega,A)\\
			&=K_{S\setminus U}(\omega,A)&&\text{as }V\text{ has no causal effect on }A\\
			&=K_S(\omega,A)&&\text{as }U\text{ has no causal effect on }A.
		\end{alignat*}
		Since \(S\in\mathscr{P}(T)\) was arbitrary, \(\mathscr{H}_{U\cup V}\) has no causal effect on \(A\). 
		\item Contrapositively, if \(U,V\in\mathscr{P}(T)\) and \(\mathscr{H}_{U\cup V}\) has a causal effect on \(A\), then either \(\mathscr{H}_U\) or \(\mathscr{H}_V\) has a causal effect on \(A\). 
	\end{enumerate}
\end{remark}
Following the definition of no causal effect, we define the notion of a \textit{trivial causal kernel}. 
\begin{definition}\label{Dtrivialkernel}
	Let \((\Omega,\mathscr{H},\mathbb{P},\mathbb{K})=(\times_{t\in T}E_t,\otimes_{t\in T}\mathscr{E}_t,\mathbb{P},\mathbb{K})\) be a causal space, and \(U\in\mathscr{P}(T)\). We say that the causal kernel \(K_U\) is \textit{trivial} if \(\mathscr{H}_U\) has no causal effect on \(\mathscr{H}_{T\setminus U}\).
\end{definition}
Note that we can decompose \(\mathscr{H}\) as \(\mathscr{H}=\mathscr{H}_U\otimes\mathscr{H}_{T\setminus U}\), and so \(\mathscr{H}\) is generated by events of the form \(A\times B\) for \(A\in\mathscr{H}_U\) and \(B\in\mathscr{H}_{T\setminus U}\). But if \(K_U\) is trivial, then we have, by Axiom \ref{Dcausalspace}\ref{interventionaldeterminism}, \(K_U(\omega,A\times B)=1_A(\omega)\mathbb{P}(B)\) for such a rectangle. 

We also define a \say{conditional} version of causal effects. 
\begin{definition}\label{Dconditionalcausaleffect}
	Let \((\Omega,\mathscr{H},\mathbb{P},\mathbb{K})=(\times_{t\in T}E_t,\otimes_{t\in T}\mathscr{E}_t,\mathbb{P},\mathbb{K})\) be a causal space, \(U,V\in\mathscr{P}(T)\), \(A\in\mathscr{H}\) an event and \(\mathscr{F}\) a sub-\(\sigma\)-algebra of \(\mathscr{H}\) (not necessarily of the form \(\mathscr{H}_S\) for some \(S\in\mathscr{P}(T)\)). 
	\begin{enumerate}[(i)]
		\item\label{conditionalnocausaleffect} If \(K_{S\cup V}(\omega,A)=K_{(S\cup V)\setminus(U\setminus V)}(\omega,A)\) for all \(S\in\mathscr{P}(T)\) and all \(\omega\in\Omega\), then we say that \(\mathscr{H}_U\) has \textit{no causal effect on \(A\) given \(\mathscr{H}_V\)}, or that \(\mathscr{H}_U\) is \textit{non-causal to \(A\) given \(\mathscr{H}_V\).}
		
		We say that \(\mathscr{H}_U\) has \textit{no causal effect on \(\mathscr{F}\) given \(\mathscr{H}_V\)}, or that \(\mathscr{H}_U\) is \textit{non-causal to \(\mathscr{F}\) given \(\mathscr{H}_V\)}, if, for all \(A\in\mathscr{F}\), \(\mathscr{H}_U\) has no causal effect on \(A\) given \(\mathscr{H}_V\). 
		\item\label{conditionalactivecausaleffect} If there exists \(\omega\in\Omega\) such that \(K_{U\cup V}(\omega,A)\neq K_V(\omega,A)\), then we say that \(\mathscr{H}_U\) has an \textit{active causal effect on \(A\) given \(\mathscr{H}_V\)}, or that \(\mathscr{H}_U\) is \textit{actively causal to \(A\) given \(\mathscr{H}_V\)}. 
		
		We say that \(\mathscr{H}_U\) has an \textit{active causal effect on \(\mathscr{F}\) given \(\mathscr{H}_V\)}, or that \(\mathscr{H}_U\) is \textit{actively causal to \(\mathscr{F}\) given \(\mathscr{H}_V\)}, if \(\mathscr{H}_U\) has an active causal effect on some \(A\in\mathscr{F}\). 
		\item\label{conditionaldormantcausaleffect} Otherwise, we say that \(\mathscr{H}_U\) has a \textit{dormant causal effect on \(A\) given \(\mathscr{H}_V\)}, or that \(\mathscr{H}_U\) is \textit{dormantly causal to \(A\) given \(\mathscr{H}_V\).}
		
		We say that \(\mathscr{H}_U\) has a \textit{dormant causal effect on \(\mathscr{F}\) given \(\mathscr{H}_V\)}, or that \(\mathscr{H}_U\) is \textit{dormantly causal to \(\mathscr{F}\) given \(\mathscr{H}_V\)}, if \(\mathscr{H}_U\) does not have an active causal effect on any event in \(\mathscr{F}\) given \(\mathscr{H}_V\) and there exists \(A\in\mathscr{F}\) on which \(\mathscr{H}_U\) has a dormant causal effect given \(\mathscr{H}_V\). 
	\end{enumerate}
	Sometimes, we will say that \(\mathscr{H}_U\) has a \textit{causal effect on \(A\) given \(\mathscr{H}_V\)} to mean that \(\mathscr{H}_U\) has either an active or a dormant causal effect on \(A\) given \(\mathscr{H}_V\). 
\end{definition}
The intuition is as follows. For any fixed \(S\in\mathscr{P}(T)\) and any fixed event \(A\in\mathscr{H}\). consider the function \(\omega_{S\cup V}\mapsto K_{S\cup V}((\omega_{(S\cup V)\setminus(U\setminus V)},\omega_{S\cap(U\setminus V)}),A)\). If \(\mathscr{H}_U\) has no causal effect on \(A\) given \(\mathscr{H}_V\), then it means that the causal kernel does not depend on the \(\omega_{S\cap(U\setminus V)}\) component of \(\omega_{S\cup V}\); in other words, \(\mathscr{H}_U\) only has an influence on \(A
\) through its \(V\) component. 

We collect some important special cases in the following remark. 
\begin{remark}\label{Rconditionalcausaleffect}
	\begin{enumerate}[(a)]
		\item Letting \(V=U\), we always have \(K_{S\cup U}(\omega,A)=K_{(S\cup U)\setminus(U\setminus U)}(\omega,A)=K_{S\cup U}(\omega,A)\) for all \(\omega\in\Omega\) and \(A\in\mathscr{H}\), which means that \(\mathscr{H}_U\) has no causal effect on any event \(A\in\mathscr{H}\) given itself. 
		\item If \(\mathscr{H}_U\) has no causal effect on \(A\) given \(\mathscr{H}_V\), then letting \(U=S\) in Definition \ref{Dconditionalcausaleffect}\ref{conditionalnocausaleffect}, we see that, for all \(\omega\in\Omega\),
		\[K_{U\cup V}(\omega,A)=K_V(\omega,A).\]
		In particular, this means that \(\mathscr{H}_U\) cannot have both no causal effect and active causal effect on \(A\) given \(\mathscr{H}_V\). 
		\item The case \(V=\emptyset\) reduces Definition \ref{Dconditionalcausaleffect} to Definition \ref{Dcausaleffect}, i.e. \(\mathscr{H}_U\) having no causal effect in the sense of Definition \ref{Dcausaleffect} is the same as \(\mathscr{H}_U\) having no causal effect given \(\{\emptyset,\Omega\}\) in the sense of Definition \ref{Dconditionalcausaleffect}, etc. 
		\item It is possible for \(\mathscr{H}_U\) to be causal to an event \(A\), and for there to exist \(V\in\mathscr{P}(T)\) such that \(\mathscr{H}_U\) has no causal effect on \(A\) given \(\mathscr{H}_V\). However, if \(\mathscr{H}_U\) has no causal effect on \(A\), then for any \(V\in\mathscr{P}(T)\), \(\mathscr{H}_U\) has no causal effect on \(A\) given \(\mathscr{H}_V\). To see this, note that Remark \ref{Rcausaleffect}\ref{nocausaleffectsubset} tells us that \(U\setminus V\) also does not have any causal effect on \(A\). Then given any \(S\in\mathscr{P}(T)\),
		\begin{alignat*}{2}
			K_{S\cup V}(\omega,A)&=K_{(S\cup V)\setminus(U\setminus V)}(\omega,A),
		\end{alignat*}
		applying Definition \ref{Dcausaleffect}\ref{nocausaleffect} to \(S\cup V\). Since \(S\in\mathscr{P}(T)\) was arbitrary, \(\mathscr{H}_U\) has no causal effect on \(A\) given \(\mathscr{H}_V\). 
	\end{enumerate}
\end{remark}

\section{Interventions}\label{Sinterventions}
In this section, we provide a few more definitions and results related to the notion of interventions, introduced in Definition \ref{Dintervention}. 

First, we make a few remarks on how the intervention causal kernels \(K^{\text{do}(U,\mathbb{Q},\mathbb{L})}_S\) behave in some special cases, depending on the relationship between \(U\) and \(S\). 
\begin{remark}\label{Rinterventionspecialcases}
	\begin{enumerate}[(a)]
		\item\label{overwrite} For \(S\in\mathscr{P}(T)\) with \(U\subseteq S\), we have, for all \(\omega\in\Omega\) and all \(A\in\mathscr{H}\),
		\begin{alignat*}{3}
			K^{\textnormal{do}(U,\mathbb{Q},\mathbb{L})}_S(\omega,A)&=\int L_U(\omega_U,d\omega'_U)K_S((\omega_{S\backslash U},\omega'_U),A)\\
			&=\int\delta_{\omega_U}(d\omega'_U)K_S((\omega_{S\backslash U},\omega'_U),A)&&\text{by Definition \ref{Dcausalspace}\ref{interventionaldeterminism}}\\
			&=K_S((\omega_{S\backslash U},\omega_U),A)\\
			&=K_S(\omega,A).
		\end{alignat*}
		This means that, after an intervention on \(\mathscr{H}_U\), subsequent interventions on \(\mathscr{H}_S\) with \(\mathscr{H}_U\subseteq\mathscr{H}_S\) simply overwrite the original intervention. Note that this is reminiscent of the \say{partial ordering on the set of interventions} in \citep{rubenstein2017causal}, but in our setting, this is given by the partial ordering induced by the inclusion structure of sub-\(\sigma\)-algebras of \(\mathscr{H}\).
		\item\label{interventionproductkernel} For \(S\in\mathscr{P}(T)\) with \(S\subseteq U\), 
		\[K^{\textnormal{do}(U,\mathbb{Q},\mathbb{L})}_S(\omega,A)=\int L_S(\omega_S,d\omega'_U)K_U(\omega'_U,A)\]
		for all \(\omega\in\Omega\) and \(A\in\mathscr{H}\), i.e. \(K_S^{\textnormal{do}(U,\mathbb{Q},\mathbb{L})}\) is a product of the two kernels \(K_U\) and \(L_S\) \citep[p.39]{cinlar2011probability}; in particular, \(K^{\textnormal{do}(U,\mathbb{Q},\mathbb{L})}_S(\omega,A)=L_S(\omega,A)\) for all \(A\in\mathscr{H}_U\).
		\item\label{disjointintervention} For \(S\in\mathscr{P}(T)\) with \(S\cap U=\emptyset\),
		\begin{alignat*}{3}
			K^{\textnormal{do}(U,\mathbb{Q},\mathbb{L})}_S(\omega,A)&=\int L_\emptyset(\omega_\emptyset,d\omega'_U)K_{S\cup U}((\omega_S,\omega'_U),A)\\
			&=\int\mathbb{Q}(d\omega'_U)K_{S\cup U}((\omega_S,\omega'_U),A)&&\text{by Definition \ref{Dcausalspace}\ref{trivialintervention}}
		\end{alignat*}
		for all \(\omega\in\Omega\) and \(A\in\mathscr{H}\), i.e. the effect of intervening on \(\mathscr{H}_U\) with \(\mathbb{Q}\) then \(\mathscr{H}_S\) is the same as intervening on \(\mathscr{H}_{U\cup S}\) with a product measure of \(\mathbb{Q}\) on \(\mathscr{H}_U\) and whatever measure we place on \(\mathscr{H}_S\).
	\end{enumerate}
\end{remark}
We give it a name for the special case in which the internal causal kernels are all trivial (see Definition \ref{Dtrivialkernel}). 
\begin{definition}\label{Dhardintervention}
	Let \((\Omega,\mathscr{H},\mathbb{P},\mathbb{K})=(\times_{t\in T}E_t,\otimes_{t\in T}\mathscr{E}_t,\mathbb{P},\mathbb{K})\) be a causal space, and \(U\in\mathscr{P}(T)\) and \(\mathbb{Q}\) a probability measure on \((\Omega,\mathscr{H}_U)\). A \textit{hard intervention on \(\mathscr{H}_U\) via \(\mathbb{Q}\)} is a new causal space \((\Omega,\mathscr{H},\mathbb{P}^{\text{do}(U,\mathbb{Q})},\mathbb{K}^{\text{do}(U,\mathbb{Q},\text{hard})})\), where the intervention measure \(\mathbb{P}^{\text{do}(U,\mathbb{Q})}\) is a probability measure \((\Omega,\mathscr{H})\) defined in the same way as in Definition \ref{Dintervention}, and the intervention causal mechanism \(\mathbb{K}^{\text{do}(U,\mathbb{Q},\text{hard})}=\{K^{\text{do}(U,\mathbb{Q},\text{hard})}_S:S\in\mathscr{P}(T)\}\) consists of causal kernels that are obtained from the intervention causal kernels in Definition \ref{Dintervention} in which \(L_{S\cap U}\) is a trivial causal kernel, i.e. one that has no causal effect on \(\mathscr{H}_{U\setminus S}\). 
\end{definition}
From the discussion following Definition \ref{Dtrivialkernel}, we have that, for \(A\in\mathscr{H}_{S\cap U}\) and \(B\in\mathscr{H}_{U\setminus S}\), \(L_{S\cap U}(\omega,A\times B)=1_A(\omega_{S\cap U})\mathbb{Q}(B)\). 

The next result gives an explicit expression for the causal kernels obtained after a hard intervention. 
\begin{theorem}\label{Thardintervention}
	Let \((\Omega,\mathscr{H},\mathbb{P},\mathbb{K})=(\times_{t\in T}E_t,\otimes_{t\in T}\mathscr{E}_t,\mathbb{P},\mathbb{K})\) be a causal space, and \(U\in\mathscr{P}(T)\) and \(\mathbb{Q}\) a probability measure on \((\Omega,\mathscr{H}_U)\). Then after a hard intervention on \(\mathscr{H}_U\) via \(\mathbb{Q}\), the intervention causal kernels \(K^{\textnormal{do}(U,\mathbb{Q},\textnormal{hard})}_S\) are given by
	\[K^{\textnormal{do}(U,\mathbb{Q},\textnormal{hard})}_S(\omega,A)=K^{\textnormal{do}(U,\mathbb{Q},\textnormal{hard})}_S(\omega_S,A)=\int\mathbb{Q}(d\omega'_{U\setminus S})K_{S\cup U}((\omega_S,\omega'_{U\setminus S}),A).\]
\end{theorem}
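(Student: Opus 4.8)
The plan is to start from the explicit form of the intervention causal kernel in Definition~\ref{Dintervention}, namely equation~(\ref{interventioncausalkernel}), and substitute the special shape that the internal kernel $L_{S\cap U}$ takes under a hard intervention. By Definition~\ref{Dhardintervention}, $L_{S\cap U}$ is a trivial causal kernel in the sense of Definition~\ref{Dtrivialkernel}, i.e.\ $\mathscr{H}_{S\cap U}$ has no causal effect on $\mathscr{H}_{U\setminus S}$ inside the subspace $(\Omega,\mathscr{H}_U,\mathbb{Q},\mathbb{L})$. The discussion immediately following Definition~\ref{Dtrivialkernel} (and restated after Definition~\ref{Dhardintervention}) then supplies, on measurable rectangles $A\times B$ with $A\in\mathscr{H}_{S\cap U}$ and $B\in\mathscr{H}_{U\setminus S}$, the identity $L_{S\cap U}(\omega,A\times B)=1_A(\omega_{S\cap U})\mathbb{Q}(B)$.

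First I would upgrade this rectangle-level identity to a statement about the full measure $L_{S\cap U}(\omega_{S\cap U},\cdot)$ on $\mathscr{H}_U$. Writing $U=(S\cap U)\sqcup(U\setminus S)$, so that $\mathscr{H}_U=\mathscr{H}_{S\cap U}\otimes\mathscr{H}_{U\setminus S}$, the rectangle identity says that $L_{S\cap U}(\omega_{S\cap U},\cdot)$ and the product measure $\delta_{\omega_{S\cap U}}\otimes\mathbb{Q}_{U\setminus S}$, with $\mathbb{Q}_{U\setminus S}$ the marginal of $\mathbb{Q}$ on $\mathscr{H}_{U\setminus S}$, agree on the collection of measurable rectangles. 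Since this collection is a $\pi$-system generating $\mathscr{H}_U$ and both are probability measures, the uniqueness result for finite measures agreeing on a generating $\pi$-system (Proposition~3.7 of \citet{cinlar2011probability}, recalled in Appendix~\ref{SSmeasuretheory}) forces them to coincide as measures on $\mathscr{H}_U$.

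With this product representation established, the rest is coordinate bookkeeping. Decomposing the integration variable as $\omega'_U=(\omega'_{S\cap U},\omega'_{U\setminus S})$, the integral in~(\ref{interventioncausalkernel}) splits into an integral against $\delta_{\omega_{S\cap U}}$ over the $S\cap U$ coordinates followed by an integral against $\mathbb{Q}$ over the $U\setminus S$ coordinates. The Dirac integral collapses $\omega'_{S\cap U}$ to $\omega_{S\cap U}$; since $S=(S\setminus U)\sqcup(S\cap U)$, the first two coordinate blocks of the argument of $K_{S\cup U}$ recombine to $(\omega_{S\setminus U},\omega_{S\cap U})=\omega_S$. This produces exactly $\int\mathbb{Q}(d\omega'_{U\setminus S})\,K_{S\cup U}((\omega_S,\omega'_{U\setminus S}),A)$. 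Finally, because $K_{S\cup U}$ depends on its first argument only through the $S\cup U=S\cup(U\setminus S)$ coordinates, which are pinned down by $\omega_S$ together with the integration variable, the expression depends on $\omega$ only through $\omega_S$, yielding the stated equality $K^{\mathrm{do}(U,\mathbb{Q},\mathrm{hard})}_S(\omega,A)=K^{\mathrm{do}(U,\mathbb{Q},\mathrm{hard})}_S(\omega_S,A)$.

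The only genuinely non-routine step is the second one: passing from the rectangle identity furnished by triviality to the product-measure identity on all of $\mathscr{H}_U$, which must be in place before the integral can legitimately be separated into its $S\cap U$ and $U\setminus S$ parts. Everything else reduces to substitution and careful tracking of the disjoint decompositions of $U$ and $S$.
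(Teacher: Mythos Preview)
Your proposal is correct and follows essentially the same route as the paper: identify $L_{S\cap U}(\omega_{S\cap U},\cdot)$ as the product measure $\delta_{\omega_{S\cap U}}\otimes\mathbb{Q}$ on $\mathscr{H}_U=\mathscr{H}_{S\cap U}\otimes\mathscr{H}_{U\setminus S}$ via the $\pi$-system argument, then split the integral (the paper invokes Fubini explicitly here) and collapse the Dirac part to recombine $(\omega_{S\setminus U},\omega_{S\cap U})$ into $\omega_S$. Your explicit invocation of the $\pi$-system uniqueness result is slightly more careful than the paper's ``completely determined'' phrasing, but the argument is the same.
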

Intuitively, hard interventions do not encode any internal causal relationships within \(\mathscr{H}_U\), so after we subsequently intervene on \(\mathscr{H}_S\), the measure \(\mathbb{Q}\) that we originally imposed on \(\mathscr{H}_U\) remains on \(\mathscr{H}_{U\setminus S}\). 

The following lemma contains a couple of results about particular sub-\(\sigma\)-algebras having no causal effects on particular events in the intervention causal space, regardless of the measure and causal mechanism that was used for the intervention. 
\begin{lemma}\label{Linterventionnocausaleffect}
	Let \((\Omega,\mathscr{H},\mathbb{P},\mathbb{K})=(\times_{t\in T}E_t,\otimes_{t\in T}\mathscr{E}_t,\mathbb{P},\mathbb{K})\) be a causal space, and \(U\in\mathscr{P}(T)\), \(\mathbb{Q}\) a probability measure on \((\Omega,\mathscr{H}_U)\) and \(\mathbb{L}=\{L_V:V\in\mathscr{P}(U)\}\) a causal mechanism on \((\Omega,\mathscr{H}_U,\mathbb{Q})\). Suppose we intervene on \(\mathscr{H}_U\) via \((\mathbb{Q},\mathbb{L})\). 
	\begin{enumerate}[(i)]
		\item\label{arrowsarecut} For \(A\in\mathscr{H}_U\) and \(V\in\mathscr{P}(T)\) with \(V\cap U=\emptyset\), \(\mathscr{H}_V\) has no causal effect on \(A\) (c.f. Definition \ref{Dcausaleffect}\ref{nocausaleffect}) in the intervention causal space \((\Omega,\mathscr{H},\mathbb{P}^{\textnormal{do}(U,\mathbb{Q})},\mathbb{K}^{\textnormal{do}(U,\mathbb{Q},\mathbb{L})})\), i.e. events in the \(\sigma\)-algebra \(\mathscr{H}_U\) on which intervention took place are not causally affected by \(\sigma\)-algebras outside \(\mathscr{H}_U\). 
		\item\label{nocausaleffectcreated} Again, let \(V\in\mathscr{P}(T)\) with \(V\cap U=\emptyset\), and also let \(A\in\mathscr{H}\) be any event. If, in the original causal space, \(\mathscr{H}_V\) had no causal effect on \(A\), then in the intervention causal space, \(\mathscr{H}_V\) has no causal effect on \(A\) either. 
		\item\label{nocausaleffectcreatedhard} Now let \(V\in\mathscr{P}(T)\), \(A\in\mathscr{H}\) any event and suppose that the intervention on \(\mathscr{H}_U\) via \(\mathbb{Q}\) is hard. Then if \(\mathscr{H}_V\) had no causal effect on \(A\) in the original causal space, then \(\mathscr{H}_V\) has no causal effect on \(A\) in the intervention causal space either. 
	\end{enumerate}
\end{lemma}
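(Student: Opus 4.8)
The plan is to verify in each case the defining identity for ``no causal effect'' from Definition \ref{Dcausaleffect}\ref{nocausaleffect}, namely that $K^{\textnormal{do}(U,\mathbb{Q},\mathbb{L})}_S(\omega,A)=K^{\textnormal{do}(U,\mathbb{Q},\mathbb{L})}_{S\setminus V}(\omega,A)$ for every $S\in\mathscr{P}(T)$ and every $\omega\in\Omega$. The only tools needed are the explicit formula (\ref{interventioncausalkernel}) for the intervention kernels, interventional determinism (Definition \ref{Dcausalspace}\ref{interventionaldeterminism}), the original hypothesis of no causal effect, and, for part \ref{nocausaleffectcreatedhard}, the closed form of Theorem \ref{Thardintervention}. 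Throughout, the essential mechanism is that a kernel $K_{S'}(\cdot,A)$ for which $\mathscr{H}_V$ is non-causal to $A$ is insensitive to its $V$-coordinates: the hypothesis $K_{S'}(\cdot,A)=K_{S'\setminus V}(\cdot,A)$ combined with the fact that $K_{S'\setminus V}$ depends only on its $(S'\setminus V)$-coordinates means one may freely add or delete $V$ from the index set and from the evaluation point. The work is then purely in matching index sets and coordinate blocks.

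For part \ref{arrowsarecut}, I would start from (\ref{interventioncausalkernel}) and use that $A\in\mathscr{H}_U\subseteq\mathscr{H}_{S\cup U}$, so interventional determinism collapses $K_{S\cup U}((\omega_{S\setminus U},\omega'_U),A)$ to $1_A(\omega'_U)$. This leaves $K^{\textnormal{do}}_S(\omega,A)=\int L_{S\cap U}(\omega_{S\cap U},d\omega'_U)1_A(\omega'_U)$, which depends on $S$ only through $S\cap U$; since $V\cap U=\emptyset$ gives $(S\setminus V)\cap U=S\cap U$, the expression is unchanged upon replacing $S$ by $S\setminus V$, which is exactly the claim. For part \ref{nocausaleffectcreated} the same set identity keeps the $L_{S\cap U}$ factor fixed, and for the $K$-factor I would invoke the original hypothesis with index $S\cup U$: because $V\cap U=\emptyset$ forces $U\setminus V=U$ and hence $(S\cup U)\setminus V=(S\setminus V)\cup U$, the hypothesis $K_{S\cup U}(\cdot,A)=K_{(S\cup U)\setminus V}(\cdot,A)$ equates the two integrands (one checks that the relevant coordinates of the evaluation points $(\omega_{S\setminus U},\omega'_U)$ and $(\omega_{(S\setminus V)\setminus U},\omega'_U)$ agree on $(S\setminus V)\cup U$), giving the desired equality of the two intervention kernels.

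The substantive case is part \ref{nocausaleffectcreatedhard}, where $V$ may overlap both $U$ and $S$, so the index sets and the integration variables in Theorem \ref{Thardintervention} genuinely differ between $K^{\textnormal{do}}_S$ and $K^{\textnormal{do}}_{S\setminus V}$. My plan is to rewrite both using the insensitivity of $K_{\bullet}(\cdot,A)$ to $V$-coordinates: in $K^{\textnormal{do}}_S(\omega,A)=\int\mathbb{Q}(d\omega'_{U\setminus S})K_{S\cup U}((\omega_S,\omega'_{U\setminus S}),A)$ I would replace the kernel index $S\cup U$ by $(S\cup U)\setminus V$ and drop the $V$-coordinates of the evaluation point, and perform the analogous replacement in $K^{\textnormal{do}}_{S\setminus V}$. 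The key set computations are $(S\cup U)\setminus V=(S\setminus V)\cup(U\setminus V)=((S\setminus V)\cup U)\setminus V$, which shows both reduce to the same kernel $K_{(S\setminus V)\cup(U\setminus V)}$, together with $(U\setminus(S\setminus V))\setminus V=(U\setminus S)\setminus V$, which shows the surviving $\omega'$-coordinates of the two evaluation points coincide ($\omega$ on $S\setminus V$, $\omega'$ on $(U\setminus S)\setminus V$). Finally, since in both integrals the integrand depends on $\omega'$ only through $\omega'_{(U\setminus S)\setminus V}$, both collapse to integration against the same marginal of $\mathbb{Q}$ on those coordinates, whence they are equal. I expect the main obstacle to be exactly this bookkeeping: tracking which coordinates each kernel depends on, and verifying that marginalising $\mathbb{Q}$ over the extra coordinates in $U\setminus(S\setminus V)$ versus $U\setminus S$ leaves the integral unchanged.
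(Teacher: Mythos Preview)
Your proposal is correct and follows essentially the same route as the paper: in each part you unwind (\ref{interventioncausalkernel}) (or Theorem \ref{Thardintervention} for part \ref{nocausaleffectcreatedhard}), use interventional determinism or the no-causal-effect hypothesis to simplify the $K$-factor, and then match index sets via the same elementary identities $(S\setminus V)\cap U=S\cap U$, $(S\cup U)\setminus V=(S\setminus V)\cup U$ (when $V\cap U=\emptyset$) and $(S\cup U)\setminus V=((S\setminus V)\cup U)\setminus V$ in general. If anything you are slightly more explicit than the paper about the final step in part \ref{nocausaleffectcreatedhard}, where the two $\mathbb{Q}$-integrals over $\omega'_{U\setminus S}$ and $\omega'_{U\setminus(S\setminus V)}$ coincide because the integrand depends only on $\omega'_{(U\setminus S)\setminus V}$.
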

Lemma \ref{Linterventionnocausaleffect}\ref{nocausaleffectcreated} and \ref{nocausaleffectcreatedhard} tell us that, if \(\mathscr{H}_V\) had no causal effect on \(A\) in the original causal space, then by intervening on \(\mathscr{H}_U\) with \(V\cap U=\emptyset\) or by any hard intervention, we cannot create a causal effect from \(\mathscr{H}_v\) on \(A\). However, by intervening on a sub-\(\sigma\)-algebra that contains both \(\mathscr{H}_V\) and (a part of) \(A\), and manipulating the internal causal mechanism \(\mathbb{L}\) appropriately, it is clear that we can create a causal effect from \(\mathscr{H}_V\). 

The next result tells us that if a sub-\(\sigma\)-algebra \(\mathscr{H}_U\) has a dormant causal effect on an event \(A\), then there is a sub-\(\sigma\)-algebra of \(\mathscr{H}_U\) and a hard intervention after which that sub-\(\sigma\)-algebra has an active causal effect on \(A\). 
\begin{lemma}\label{Linterventiondormantcausaleffect}
	Let \((\Omega,\mathscr{H},\mathbb{P},\mathbb{K})=(\times_{t\in T}E_t,\otimes_{t\in T}\mathscr{E}_t,\mathbb{P},\mathbb{K})\) be a causal space, and \(U\in\mathscr{P}(T)\). For an event \(A\in\mathscr{H}\), if \(\mathscr{H}_U\) has a dormant causal effect on \(A\) in the original causal space, then there exists a hard intervention and a subset \(V\subseteq U\) such that in the intervention causal space, \(\mathscr{H}_V\) has an active causal effect on \(A\). 
\end{lemma}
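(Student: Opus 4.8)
The plan is to unpack the definitions of dormant and active causal effect and then construct an explicit hard intervention that "activates" the dormant effect. Recall from Definition \ref{Dcausaleffect} that $\mathscr{H}_U$ having a dormant causal effect on $A$ means two things: first, $\mathscr{H}_U$ has no active causal effect, i.e. $K_U(\omega,A)=\mathbb{P}(A)$ for all $\omega$ (equivalently, by Remark \ref{Rcausaleffect}\ref{nocausaleffectmeasure}, it does not have no causal effect); second, $\mathscr{H}_U$ does have a causal effect, so there exists some $S\in\mathscr{P}(T)$ and some $\omega\in\Omega$ with $K_S(\omega,A)\neq K_{S\setminus U}(\omega,A)$. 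First I would fix such a witnessing $S$ and restrict attention to $R=S\cap U$, the part of the witnessing index set lying inside $U$; the inequality $K_S(\omega,A)\neq K_{S\setminus U}(\omega,A)$ shows that intervening additionally on $R$ changes the kernel, so $R$ will be the natural candidate for the subset $V\subseteq U$ on which the activated effect lives.

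Next I would pick the hard intervention itself. The idea is to hard-intervene on the complementary part $W=S\setminus U$ of the witnessing set (note $W\cap U=\emptyset$), fixing its coordinates at the witnessing value $\omega$ via a Dirac-type measure, so that afterwards the only remaining freedom is in the $R$-coordinates. Using Theorem \ref{Thardintervention}, the intervention causal kernel has the explicit form
\[
K^{\textnormal{do}(W,\mathbb{Q},\textnormal{hard})}_V(\eta,A)=\int\mathbb{Q}(d\eta'_{W\setminus V})K_{V\cup W}((\eta_V,\eta'_{W\setminus V}),A),
\]
and since $V=R\subseteq U$ is disjoint from $W$, this simplifies to an integral of $K_{R\cup W}=K_S$ (up to matching of index sets) against the fixed measure $\mathbb{Q}$. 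I would then show that the witnessing inequality on $K_S$ forces the right-hand side, viewed as a function of the $R$-coordinate $\eta_R$, to be non-constant, which is exactly the statement that $\mathscr{H}_R$ has an active causal effect on $A$ in the intervention space (Definition \ref{Dcausaleffect}\ref{activecausaleffect}).

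The main obstacle I anticipate is bookkeeping the index-set decompositions correctly and ensuring the witnessing value translates cleanly into a choice of hard intervention. Specifically, the inequality $K_S(\omega,A)\neq K_{S\setminus U}(\omega,A)$ is a statement about two kernels at a single $\omega$; I must show that after fixing the $W$-coordinates at $\omega_W$ (by choosing $\mathbb{Q}=\delta_{\omega_W}$, which is legitimate since hard interventions allow arbitrary $\mathbb{Q}$), the resulting kernel $K^{\textnormal{do}}_R(\cdot,A)$ genuinely depends on its $R$-argument. The delicate point is that "dormant" guarantees $K_U(\omega,A)=\mathbb{P}(A)$ unconditionally, so the activation cannot come from $U$ alone — it must be the \emph{joint} dependence on $S=R\cup W$ that produces the effect, and I would need to argue that evaluating at $\eta_R=\omega_R$ versus some other value $\eta_R=\tilde\omega_R$ yields distinct kernel values. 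This should follow because $K_{S\setminus U}=K_W$ is precisely the kernel one obtains by marginalising out the $R$-dependence, so if $K_S(\omega,A)\neq K_W(\omega,A)$ then $\eta_R\mapsto K_S((\eta_R,\omega_W),A)$ cannot be constantly equal to $K_W(\omega,A)$, giving the required active effect of $\mathscr{H}_R$ after the hard intervention. I would close by setting $V=R$ and verifying all set-theoretic side conditions.
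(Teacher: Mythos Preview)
Your proposal is correct and takes essentially the same approach as the paper: hard-intervene on $W=S\setminus U$ with the Dirac measure at the witnessing point $\omega$, take $V=R=S\cap U$, and use Theorem~\ref{Thardintervention} to identify $K^{\textnormal{do}(W,\delta_\omega,\textnormal{hard})}_R(\omega_R,A)=K_S(\omega,A)$ while $\mathbb{P}^{\textnormal{do}(W,\delta_\omega)}(A)=K_W(\omega,A)=K_{S\setminus U}(\omega,A)$, so the witnessing inequality gives active causal effect directly. One minor sharpening: the relevant comparison for ``active'' is $K^{\textnormal{do}}_R(\eta,A)\neq\mathbb{P}^{\textnormal{do}}(A)$, not non-constancy of $\eta_R\mapsto K^{\textnormal{do}}_R(\eta_R,A)$ per se---you implicitly use this in your final sentence (since $K_W(\omega_W,A)=\mathbb{P}^{\textnormal{do}}(A)$), but stating it explicitly would streamline the argument.
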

The next result is about what happens to a causal effect of a sub-\(\sigma\)-algebra that has no causal effect on an event conditioned on another sub-\(\sigma\)-algebra, after intervening on that sub-\(\sigma\)-algebra. 
\begin{lemma}\label{Lconditionalnocausaleffect}
	Let \((\Omega,\mathscr{H},\mathbb{P},\mathbb{K})=(\times_{t\in T}E_t,\otimes_{t\in T}\mathscr{E}_t,\mathbb{P},\mathbb{K})\) be a causal space, and \(U,V\in\mathscr{P}(T)\). For an event \(A\in\mathscr{H}\), suppose that \(\mathscr{H}_U\) has no causal effect on \(A\) given \(\mathscr{H}_V\) (see Definition \ref{Dconditionalcausaleffect}). Then after an intervention on \(\mathscr{H}_V\) via any \((\mathbb{Q},\mathbb{L})\), \(\mathscr{H}_{U\setminus V}\) has no causal effect on \(A\).
\end{lemma}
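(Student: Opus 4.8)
The plan is to unfold the definition of ``no causal effect'' (Definition \ref{Dcausaleffect}\ref{nocausaleffect}) for $\mathscr{H}_{U\setminus V}$ in the intervention causal space and reduce it to the hypothesis via the explicit formula (\ref{interventioncausalkernel}) for the intervention causal kernels. Write $W = U\setminus V$, and record at the outset that $W\cap V=\emptyset$, a fact that will be used repeatedly. By Theorem \ref{Tintervention} the intervention $(\mathbb{Q},\mathbb{L})$ on $\mathscr{H}_V$ yields a genuine causal space, so it suffices to verify that $K^{\text{do}(V,\mathbb{Q},\mathbb{L})}_S(\omega,A)=K^{\text{do}(V,\mathbb{Q},\mathbb{L})}_{S\setminus W}(\omega,A)$ for every $S\in\mathscr{P}(T)$ and every $\omega\in\Omega$.

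First I would expand both sides using (\ref{interventioncausalkernel}) with $U$ replaced by $V$. The left-hand side is $\int L_{S\cap V}(\omega_{S\cap V}, d\omega'_V)\, K_{S\cup V}((\omega_{S\setminus V}, \omega'_V), A)$, and the right-hand side is $\int L_{(S\setminus W)\cap V}(\omega_{(S\setminus W)\cap V}, d\omega'_V)\, K_{(S\setminus W)\cup V}((\omega_{(S\setminus W)\setminus V}, \omega'_V), A)$. Because $W\cap V=\emptyset$, we have $(S\setminus W)\cap V = S\cap V$, so the integrating kernels coincide, $L_{(S\setminus W)\cap V}(\omega_{(S\setminus W)\cap V}, \cdot) = L_{S\cap V}(\omega_{S\cap V}, \cdot)$. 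Everything therefore reduces to showing that the two $K$-integrands agree for every value of $\omega'_V$.

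This equality of integrands is the heart of the argument. Applying the hypothesis (Definition \ref{Dconditionalcausaleffect}\ref{conditionalnocausaleffect}) with the choice $S' = S$ gives $K_{S\cup V}(\eta, A) = K_{(S\cup V)\setminus W}(\eta, A)$ for all $\eta\in\Omega$, and since $W\cap V=\emptyset$ one checks the set identity $(S\cup V)\setminus W = (S\setminus W)\cup V$. It then remains to observe that the kernel $K_{(S\setminus W)\cup V}(\cdot, A)$ depends only on the $(S\setminus W)\cup V$ component of its state argument, and that the two states $(\omega_{S\setminus V}, \omega'_V)$ and $(\omega_{(S\setminus W)\setminus V}, \omega'_V)$ agree on that index set: both equal $\omega'_V$ on $V$ and $\omega_{S\setminus(W\cup V)}$ on $(S\setminus W)\setminus V = S\setminus(W\cup V)$, the first because $S\setminus(W\cup V)\subseteq S\setminus V$. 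Combining these yields $K_{S\cup V}((\omega_{S\setminus V}, \omega'_V), A) = K_{(S\setminus W)\cup V}((\omega_{(S\setminus W)\setminus V}, \omega'_V), A)$, and the two integrals coincide.

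The main obstacle I anticipate is purely the index-set bookkeeping. One must recognise that the conditional no-causal-effect hypothesis is precisely the statement that $K_{S\cup V}(\cdot, A)$ ignores its $W$-slot, so that dropping $W$ from the conditioning state of the kernel is harmless, while simultaneously checking that dropping $W$ from $S$ leaves the $L$-factor untouched (which it does, exactly because $W$ is disjoint from $V$). Once these set identities are pinned down and the fact that each $K_{S'}(\cdot,A)$ depends only on its $S'$-component is invoked, no measure-theoretic subtlety remains and the result follows by direct substitution under the integral.
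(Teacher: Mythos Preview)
Your proposal is correct and follows essentially the same route as the paper's proof: expand $K^{\text{do}(V,\mathbb{Q},\mathbb{L})}_S(\omega,A)$ via (\ref{interventioncausalkernel}), apply the conditional no-causal-effect hypothesis to replace $K_{S\cup V}$ by $K_{(S\cup V)\setminus(U\setminus V)}$, and then use the set identities $(S\cup V)\setminus(U\setminus V)=(S\setminus(U\setminus V))\cup V$ and $S\cap V=(S\setminus(U\setminus V))\cap V$ to recognise the result as $K^{\text{do}(V,\mathbb{Q},\mathbb{L})}_{S\setminus(U\setminus V)}(\omega,A)$. Your shorthand $W=U\setminus V$ and your explicit remark that the kernel depends only on the relevant coordinate block are, if anything, slightly cleaner than the paper's presentation.
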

The next result shows that, under a hard intervention, a time-respecting causal mechanism stays time-respecting. 
\begin{theorem}\label{Ttimerespecting}
	Let \((\Omega,\mathscr{H},\mathbb{P},\mathbb{K})=(\times_{t\in T}E_t,\otimes_{t\in T}\mathscr{E}_t,\mathbb{P},\mathbb{K})\) be a causal space, where the index set \(T\) can be written as \(T=W\times\tilde{T}\), with \(W\) representing time and \(\mathbb{K}\) respecting time. Take any \(U\in\mathscr{P}(T)\) and any probability measure \(\mathbb{Q}\) on \(\mathscr{H}_U\). Then the intervention causal mechanism \(\mathbb{K}^{\textnormal{do}(U,\mathbb{Q},\textnormal{hard})}\) also respects time. 
\end{theorem}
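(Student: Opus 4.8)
The plan is to recognise that this theorem is essentially a packaging of the event-level result already proved for hard interventions, so I would not reprove anything from scratch. First I would unwind the definitions. By Definition \ref{Dtimerespecting}, to show that $\mathbb{K}^{\textnormal{do}(U,\mathbb{Q},\textnormal{hard})}$ respects time it suffices to fix an arbitrary pair $w_1 < w_2$ in $W$ and prove that $\mathscr{H}_{w_2\times\tilde{T}}$ has no causal effect on $\mathscr{H}_{w_1\times\tilde{T}}$ in the intervention causal space. By the second clause of Definition \ref{Dcausaleffect}\ref{nocausaleffect}, this reduces further to fixing an arbitrary event $A\in\mathscr{H}_{w_1\times\tilde{T}}$ and showing that $\mathscr{H}_{w_2\times\tilde{T}}$ has no causal effect on this single $A$ after the intervention.

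The hypothesis that the original $\mathbb{K}$ respects time says exactly that, for this $w_1<w_2$, the sub-$\sigma$-algebra $\mathscr{H}_{w_2\times\tilde{T}}$ has no causal effect on $\mathscr{H}_{w_1\times\tilde{T}}$ in the original causal space, hence none on the fixed $A$. Setting $V=w_2\times\tilde{T}$, this is precisely the hypothesis of Lemma \ref{Linterventionnocausaleffect}\ref{nocausaleffectcreatedhard}, whose conclusion is that after any hard intervention on $\mathscr{H}_U$ via $\mathbb{Q}$, the sub-$\sigma$-algebra $\mathscr{H}_V=\mathscr{H}_{w_2\times\tilde{T}}$ still has no causal effect on $A$. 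I would then simply quantify back over all $A\in\mathscr{H}_{w_1\times\tilde{T}}$ and over all pairs $w_1<w_2$ to recover the claim. On this route the theorem is a direct corollary of Lemma \ref{Linterventionnocausaleffect}\ref{nocausaleffectcreatedhard}, and no genuinely new computation is required; the only thing to check is that the reductions above are exactly the two quantifier unfoldings in the relevant definitions.

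Should a fully self-contained argument be wanted instead, the plan would be to unfold the explicit hard-intervention formula of Theorem \ref{Thardintervention}, writing both $K^{\textnormal{do}(U,\mathbb{Q},\textnormal{hard})}_S(\omega,A)$ and $K^{\textnormal{do}(U,\mathbb{Q},\textnormal{hard})}_{S\setminus(w_2\times\tilde{T})}(\omega,A)$ as integrals against $\mathbb{Q}$ of the original kernels $K_{S\cup U}$ and $K_{(S\setminus(w_2\times\tilde{T}))\cup U}$ respectively, then using that $\mathscr{H}_{w_2\times\tilde{T}}$ has no causal effect on $A$ in the original space to replace each integrand by $K_{(S\cup U)\setminus(w_2\times\tilde{T})}(\cdot,A)$. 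The hard part will be purely combinatorial bookkeeping of index sets: the two integrals are a priori taken over different coordinate blocks, namely $U\setminus S$ versus $U\setminus(S\setminus(w_2\times\tilde{T}))$, and one must verify that after the time-respecting substitution both integrands depend on the dummy variable only through its $(U\setminus S)\setminus(w_2\times\tilde{T})$ coordinates, so that both integrals collapse to integration of the same marginal of $\mathbb{Q}$ over the same block against an identical integrand, forcing the two sides to agree. This is exactly the manipulation already performed inside the proof of Lemma \ref{Linterventionnocausaleffect}\ref{nocausaleffectcreatedhard}, which is why invoking that lemma is the cleaner path.
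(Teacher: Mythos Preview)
Your proposal is correct. Your primary route---reducing to Lemma \ref{Linterventionnocausaleffect}\ref{nocausaleffectcreatedhard} with $V=w_2\times\tilde{T}$---is a clean packaging of exactly the computation the paper carries out: the paper does not cite that lemma but instead reproves its content inline, unfolding Theorem \ref{Thardintervention} and performing the index-set manipulations you describe in your second paragraph. So your alternative ``self-contained'' sketch is in fact the paper's actual proof, and your preferred route is the same argument factored through the lemma; there is no mathematical difference, only a presentational one, and yours is the tidier choice.
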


\section{Sources}\label{Ssource}
In causal spaces, the observational distribution \(\mathbb{P}\) and the causal mechanism \(\mathbb{K}\) are completely decoupled. In Section \ref{SSscms}, we give a detailed argument as to why this is desirable, but of course, there is no doubt that the special case in which the causal kernels coincide with conditional measures with respect to \(\mathbb{P}\) is worth studying. To that end, we introduce the notion of \textit{sources}. 
\begin{definition}\label{Dsource}
	Let \((\Omega,\mathscr{H},\mathbb{P},\mathbb{K})=(\times_{t\in T}E_t,\otimes_{t\in T}\mathscr{E}_t,\mathbb{P},\mathbb{K})\) be a causal space, \(U\in\mathscr{P}(T)\), \(A\in\mathscr{H}\) an event and \(\mathscr{F}\) a sub-\(\sigma\)-algebra of \(\mathscr{H}\). We say that \(\mathscr{H}_U\) is a \textit{(local) source} of \(A\) if \(K_U(\cdot,A)\) is a version of the conditional probability \(\mathbb{P}_{\mathscr{H}_U}(A)\). We say that \(\mathscr{H}_U\) is a \textit{(local) source} of \(\mathscr{F}\) if \(\mathscr{H}_U\) is a source of all \(A\in\mathscr{F}\). We say that \(\mathscr{H}_U\) is a \textit{global source} of the causal space if \(\mathscr{H}_U\) is a source of all \(A\in\mathscr{H}\).
\end{definition}
Clearly, source \(\sigma\)-algebras are not unique (whether local or global). It is easy to see that \(\mathscr{H}_\emptyset=\{\emptyset,\Omega\}\) and \(\mathscr{H}=\mathscr{H}_T=\otimes_{t\in T}\mathscr{E}_t\) are global sources, and axiom \ref{interventionaldeterminism} of Definition \ref{Dcausalspace} implies that any \(\mathscr{H}_S\) is a local source of any of its sub-\(\sigma\)-algebras, including itself, since, for any \(A\in\mathscr{H}_U\), \(\mathbb{P}_{\mathscr{H}_U}(A)=1_A\). Also, a sub-\(\sigma\)-algebra of a source is not necessarily a source, nor is a \(\sigma\)-algebra that contains a source necessarily a source (whether local or global). In Example \ref{Ealtitudetemperature} above, altitude is a source of temperature (and hence a global source), since the causal kernel corresponding to temperature coincides with the conditional measure given altitude, but temperature is not a source of altitude. 

When we intervene on \(\mathscr{H}_U\) (via any \((\mathbb{Q},\mathbb{L})\)), \(\mathscr{H}_U\) becomes a global source. This precisely coincides with the \say{gold standard} that is randomised control trials in causal inference, i.e. the idea that, if we are able to intervene on \(\mathscr{H}_U\), then the causal effect of \(\mathscr{H}_U\) on any event can be obtained by first intervening on \(\mathscr{H}_U\), then considering the conditional distribution on \(\mathscr{H}_U\). 
Next is a theorem showing that when one intervenes on \(\mathscr{H}_U\), then \(\mathscr{H}_U\) becomes a source. 
\begin{theorem}\label{Tinterventionsource}
	Suppose we have a causal space \((\Omega,\mathscr{H},\mathbb{P},\mathbb{K})=(\times_{t\in T}E_t,\otimes_{t\in T}\mathscr{E}_t,\mathbb{P},\mathbb{K})\), and let \(U\in\mathscr{P}(T)\). 
	\begin{enumerate}[(i)]
		\item\label{sourcenonhard} For any measure \(\mathbb{Q}\) on \(\mathscr{H}_U\) and any causal mechanism \(\mathbb{L}\) on \((\Omega,\mathscr{H}_U,\mathbb{Q})\), the causal kernel \(K_U^{\textnormal{do}(U,\mathbb{Q},\mathbb{L})}=K_U\) is a version of \(\mathbb{P}^{\textnormal{do}(U,\mathbb{Q})}_{\mathscr{H}_U}\), which means that \(\mathscr{H}_U\) is a global source \(\sigma\)-algebra of the intervened causal space \((\Omega,\mathscr{H},\mathbb{P}^{\textnormal{do}(U,\mathbb{Q})},\mathbb{K}^{\textnormal{do}(U,\mathbb{Q},\mathbb{L})})\).
		\item\label{sourcehard} Suppose \(V\in\mathscr{P}(T)\) with \(V\subseteq U\). Suppose that the measure \(\mathbb{Q}\) on \((\Omega,\mathscr{H}_U)\) factorises over \(\mathscr{H}_V\) and \(\mathscr{H}_{U\setminus V}\), i.e. for any \(A\in\mathscr{H}_V\) and \(B\in\mathscr{H}_{U\setminus V}\), \(\mathbb{Q}(A\cap B)=\mathbb{Q}(A)\mathbb{Q}(B)\). Then after a hard intervention on \(\mathscr{H}_U\) via \(\mathbb{Q}\), the causal kernel \(K^{\textnormal{do}(U,\mathbb{Q},\textnormal{hard})}_V\) is a version of \(\mathbb{P}^{\textnormal{do}(U,\mathbb{Q})}_V\), which means that \(\mathscr{H}_V\) is a global source \(\sigma\)-algebra of the intervened causal space \((\Omega,\mathscr{H},\mathbb{P}^{\textnormal{do}(U,\mathbb{Q})},\mathbb{K}^{\textnormal{do}(U,\mathbb{Q},\textnormal{hard})})\). 
	\end{enumerate}
\end{theorem}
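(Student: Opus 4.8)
The plan is to unfold the definition of a (global) source: by Definition \ref{Dsource}, it suffices to show that for every \(A\in\mathscr{H}\) the map \(\omega\mapsto K_U^{\textnormal{do}(U,\mathbb{Q},\mathbb{L})}(\omega,A)\) (resp.\ \(K_V^{\textnormal{do}(U,\mathbb{Q},\textnormal{hard})}(\omega,A)\)) is a version of the conditional probability \(\mathbb{P}^{\textnormal{do}(U,\mathbb{Q})}_{\mathscr{H}_U}(A)\) (resp.\ \(\mathbb{P}^{\textnormal{do}(U,\mathbb{Q})}_{\mathscr{H}_V}(A)\)). Recall that a version \(W\) of \(\mathbb{P}_{\mathscr{F}}(A)\) is characterised by being \(\mathscr{F}\)-measurable and satisfying \(\mathbb{P}(A\cap G)=\int_G W\,d\mathbb{P}\) for all \(G\in\mathscr{F}\). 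So in both parts I would (a) produce the explicit kernel, (b) check \(\mathscr{F}\)-measurability, and (c) verify this integral identity over a generating class of \(G\).

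For part \ref{sourcenonhard}, I would first compute \(K_U^{\textnormal{do}(U,\mathbb{Q},\mathbb{L})}\) from equation (\ref{interventioncausalkernel}) at \(S=U\): since then \(S\cap U=S\cup U=U\) and \(S\setminus U=\emptyset\), interventional determinism (Definition \ref{Dcausalspace}\ref{interventionaldeterminism}) forces \(L_U(\omega_U,\cdot)=\delta_{\omega_U}\) on \(\mathscr{H}_U\), collapsing the integral to \(K_U^{\textnormal{do}(U,\mathbb{Q},\mathbb{L})}(\omega,A)=K_U(\omega,A)\), which is \(\mathscr{H}_U\)-measurable since \(K_U\) is a kernel out of \((\Omega,\mathscr{H}_U)\). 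For the integral identity, take \(G\in\mathscr{H}_U\). Expanding the left-hand side with equation (\ref{interventionmeasure}) and then applying interventional determinism to the factor \(G\in\mathscr{H}_U\) gives \(\mathbb{P}^{\textnormal{do}(U,\mathbb{Q})}(A\cap G)=\int\mathbb{Q}(d\omega)K_U(\omega,A\cap G)=\int_G K_U(\omega,A)\,\mathbb{Q}(d\omega)\). On the right-hand side, Remark \ref{Rcausalaxioms} (interventional determinism) says \(\mathbb{P}^{\textnormal{do}(U,\mathbb{Q})}\) and \(\mathbb{Q}\) agree on \(\mathscr{H}_U\), so integrating the \(\mathscr{H}_U\)-measurable function \(1_G K_U(\cdot,A)\) against either measure yields the same value \(\int_G K_U(\omega,A)\,\mathbb{Q}(d\omega)\); the two sides coincide and the claim follows.

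For part \ref{sourcehard}, the explicit kernel is supplied by Theorem \ref{Thardintervention}: at \(S=V\subseteq U\) it reads \(K_V^{\textnormal{do}(U,\mathbb{Q},\textnormal{hard})}(\omega,A)=\int\mathbb{Q}(d\omega'_{U\setminus V})K_U((\omega_V,\omega'_{U\setminus V}),A)\), which depends only on \(\omega_V\) and is therefore \(\mathscr{H}_V\)-measurable. The left-hand side of the integral identity for \(G\in\mathscr{H}_V\) reduces, exactly as in part \ref{sourcenonhard}, to \(\int_G K_U(\omega,A)\,\mathbb{Q}(d\omega)\). This is where I would invoke the factorisation hypothesis \(\mathbb{Q}=\mathbb{Q}_V\otimes\mathbb{Q}_{U\setminus V}\), with \(\mathbb{Q}_V,\mathbb{Q}_{U\setminus V}\) the marginals: writing \(\omega_U=(\omega_V,\omega_{U\setminus V})\) and applying Fubini, the inner integral over \(\omega_{U\setminus V}\) against \(\mathbb{Q}_{U\setminus V}\) is precisely \(K_V^{\textnormal{do}(U,\mathbb{Q},\textnormal{hard})}(\omega_V,A)\), so the left-hand side becomes \(\int_G K_V^{\textnormal{do}(U,\mathbb{Q},\textnormal{hard})}(\omega_V,A)\,\mathbb{Q}_V(d\omega_V)\). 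Since \(\mathbb{P}^{\textnormal{do}(U,\mathbb{Q})}\) restricts to \(\mathbb{Q}_V\) on \(\mathscr{H}_V\), this equals \(\int_G K_V^{\textnormal{do}(U,\mathbb{Q},\textnormal{hard})}(\cdot,A)\,d\mathbb{P}^{\textnormal{do}(U,\mathbb{Q})}\), the required right-hand side.

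I expect the crux to be part \ref{sourcehard}, and specifically pinpointing why factorisation is indispensable: the hard-intervention kernel integrates against the fixed marginal \(\mathbb{Q}_{U\setminus V}\), whereas a conditional probability given \(\mathscr{H}_V\) a priori integrates against the conditional law of \(\mathbb{Q}\) on \(\mathscr{H}_{U\setminus V}\) given \(\omega_V\). These agree for every \(\omega_V\) exactly when \(\mathbb{Q}\) factorises over \(\mathscr{H}_V\) and \(\mathscr{H}_{U\setminus V}\); without this assumption the Fubini split produces a genuine conditional rather than the marginal, and the identity breaks. The care is therefore entirely in ensuring that the decomposition of \(\mathbb{Q}\) lets the inner integral reproduce the hard-intervention kernel verbatim.
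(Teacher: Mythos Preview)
Your proposal is correct and follows essentially the same route as the paper. The paper first isolates as a separate lemma the fact that \(\int_B g\,d\mathbb{P}^{\textnormal{do}(U,\mathbb{Q})}=\int_B g\,d\mathbb{Q}\) for every \(\mathscr{H}_U\)-measurable \(g\) and \(B\in\mathscr{H}_U\) (proved via simple functions and monotone convergence), whereas you invoke this as the standard consequence of \(\mathbb{P}^{\textnormal{do}(U,\mathbb{Q})}\) agreeing with \(\mathbb{Q}\) on \(\mathscr{H}_U\); and the paper runs the chain of equalities from \(\int_G K\,d\mathbb{P}^{\textnormal{do}(U,\mathbb{Q})}\) toward \(\mathbb{P}^{\textnormal{do}(U,\mathbb{Q})}(A\cap G)\) while you go the other way, but the ingredients---interventional determinism to split off \(G\), Theorem \ref{Thardintervention} for the hard-intervention kernel, and Fubini with the factorisation \(\mathbb{Q}=\mathbb{Q}_V\otimes\mathbb{Q}_{U\setminus V}\)---are identical.
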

Let \(A\in\mathscr{H}\) be an event, and \(U\in\mathscr{P}(T)\). By the definition of the intervention measure (Definition \ref{Dintervention}), we always have
\[\mathbb{P}^{\text{do}(U,\mathbb{Q})}(A)=\int\mathbb{Q}(d\omega)K_U(\omega,A),\]
hence \(\mathbb{P}^{\text{do}(U,\mathbb{Q})}(A)\) can be written in terms of \(\mathbb{P}\) and \(\mathbb{Q}\) if \(K_U(\omega,A)\) can be written in terms of \(\mathbb{P}\). This can be seen to occur in three trivial cases: first, if \(\mathscr{H}_U\) is a local source of \(A\) (see Definition \ref{Dsource}), in which case \(K_U(\omega,A)=\mathbb{P}_{\mathscr{H}_U}(\omega,A)\); secondly, if \(\mathscr{H}_U\) has no causal effect on \(A\) (see Definition \ref{Dcausaleffect}), in which case \(K_U(\omega,A)=\mathbb{P}(A)\); and finally, if \(A\in\mathscr{H}_U\), in which case, by intervention determinism (Definition \ref{Dcausalspace}\ref{interventionaldeterminism}, we have \(K_U(\omega,A)=1_A(\omega)\). In the latter case, we do not even have dependence on \(\mathbb{P}\). Can we generalise these results?
\begin{lemma}\label{Lvalidadjustment}
	Let \((\Omega,\mathscr{H},\mathbb{P},\mathbb{K})=(\times_{t\in T}E_t,\otimes_{t\in T}\mathscr{E}_t,\mathbb{P},\mathbb{K})\) be a causal space. Let \(A\in\mathscr{H}\) be an event, and \(U\in\mathscr{P}(T)\). If there exists a sub-\(\sigma\)-algebra \(\mathscr{G}\) of \(\mathscr{H}\) (not necessarily of the form \(\mathscr{H}_V\) for some \(V\in\mathscr{P}(T)\)) such that 
	\begin{enumerate}[(i)]
		\item the conditional probability \(\mathbb{P}^{\textnormal{do}(U,\mathbb{Q})}_{\mathscr{H}_U\vee\mathscr{G}}(\cdot,A)\) can be written in terms of \(\mathbb{P}\) and \(\mathbb{Q}\);
		\item the causal kernel \(K_U(\cdot,B)\) can be written in terms of \(\mathbb{P}\) for all \(B\in\mathscr{G}\);
	\end{enumerate}
	then \(\mathbb{P}^{\textnormal{do}(U,\mathbb{Q})}(A)\) can be written in terms of \(\mathbb{P}\) and \(\mathbb{Q}\).
\end{lemma}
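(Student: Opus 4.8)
The plan is to compute $\mathbb{P}^{\textnormal{do}(U,\mathbb{Q})}(A)$ by iterated conditioning through the join $\mathscr{H}_U\vee\mathscr{G}$, and then to discharge the two hypotheses one at a time. First I would set $g(\cdot)=\mathbb{P}^{\textnormal{do}(U,\mathbb{Q})}_{\mathscr{H}_U\vee\mathscr{G}}(\cdot,A)$, which is $\mathscr{H}_U\vee\mathscr{G}$-measurable and, by hypothesis (i), expressible in terms of $\mathbb{P}$ and $\mathbb{Q}$. By the defining property of conditional expectation (the tower property) taken under $\mathbb{P}^{\textnormal{do}(U,\mathbb{Q})}$,
\[
\mathbb{P}^{\textnormal{do}(U,\mathbb{Q})}(A)=\int g\,d\mathbb{P}^{\textnormal{do}(U,\mathbb{Q})},
\]
and applying the definition of the intervention measure (\ref{interventionmeasure}), extended from indicators to bounded measurable integrands in the usual way (simple functions and monotone convergence), this becomes
\[
\mathbb{P}^{\textnormal{do}(U,\mathbb{Q})}(A)=\int\mathbb{Q}(d\omega)\int K_U(\omega,d\omega')\,g(\omega').
\]

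The crux is to show, for each fixed $\omega$, that the inner integral $\int K_U(\omega,d\omega')\,g(\omega')$ is expressible in terms of $\mathbb{P}$ and $\mathbb{Q}$. Since $g$ is $\mathscr{H}_U\vee\mathscr{G}$-measurable, this only involves the restriction of the measure $K_U(\omega,\cdot)$ to $\mathscr{H}_U\vee\mathscr{G}$, so I would first prove that $K_U(\omega,C)$ is expressible in terms of $\mathbb{P}$ for every $C\in\mathscr{H}_U\vee\mathscr{G}$. The rectangles $A'\cap B$ with $A'\in\mathscr{H}_U$ and $B\in\mathscr{G}$ form a $\pi$-system generating $\mathscr{H}_U\vee\mathscr{G}$, and on them interventional determinism (Definition \ref{Dcausalspace}\ref{interventionaldeterminism}) gives $K_U(\omega,A'\cap B)=1_{A'}(\omega)K_U(\omega,B)$, which is expressible in terms of $\mathbb{P}$ by hypothesis (ii). A Dynkin $\pi$-$\lambda$ argument then finishes this sub-claim: the family of $C\in\mathscr{H}_U\vee\mathscr{G}$ for which $K_U(\omega,C)$ is so expressible contains $\Omega$ and is closed under complements (via $1-K_U(\omega,C)$) and countable disjoint unions (countable additivity of $K_U(\omega,\cdot)$), hence is a $\lambda$-system containing the generating $\pi$-system, and therefore exhausts $\mathscr{H}_U\vee\mathscr{G}$.

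With $K_U(\omega,\cdot)$ on $\mathscr{H}_U\vee\mathscr{G}$ expressible in terms of $\mathbb{P}$ and the integrand $g$ expressible in terms of $\mathbb{P}$ and $\mathbb{Q}$, approximating $g$ by $\mathscr{H}_U\vee\mathscr{G}$-measurable simple functions shows that $\int K_U(\omega,d\omega')\,g(\omega')$ is expressible in terms of $\mathbb{P}$ and $\mathbb{Q}$, and integrating this against $\mathbb{Q}$ preserves the property, yielding the claim. I expect the main obstacle to be precisely this passage from the rectangles to the whole join $\mathscr{H}_U\vee\mathscr{G}$: interventional determinism and hypothesis (ii) only pin down $K_U$ on rectangles and on $\mathscr{G}$ respectively, and it is the Dynkin step that upgrades this to the full $\sigma$-algebra against which $g$ must be integrated. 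The one point requiring care is that \say{expressible in terms of $\mathbb{P}$ (and $\mathbb{Q}$)} be genuinely stable under the finite sums, complements, countable limits and $\mathbb{Q}$-integration invoked above, which holds since each such operation produces a fixed functional of $\mathbb{P}$ and $\mathbb{Q}$ alone.
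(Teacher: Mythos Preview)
Your proposal is correct and follows essentially the same route as the paper: both use the tower property to write \(\mathbb{P}^{\textnormal{do}(U,\mathbb{Q})}(A)\) as an integral of the \(\mathscr{H}_U\vee\mathscr{G}\)-conditional probability against \(\mathbb{P}^{\textnormal{do}(U,\mathbb{Q})}=\mathbb{Q} K_U\), then reduce to showing \(K_U(\omega,\cdot)\) restricted to \(\mathscr{H}_U\vee\mathscr{G}\) is expressible in terms of \(\mathbb{P}\) by checking it on the generating \(\pi\)-system of intersections via interventional determinism and hypothesis (ii). You are in fact slightly more explicit than the paper, spelling out the Dynkin \(\pi\)-\(\lambda\) step and the closure of \say{expressible in terms of \(\mathbb{P}\) and \(\mathbb{Q}\)} under the relevant operations, points the paper leaves implicit.
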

\begin{remark}
	The three cases discussed in the paragraph above Lemma \ref{Lvalidadjustment} are special cases of the Lemma with \(\mathscr{G}\) being any sub-\(\sigma\)-algebra of \(\mathscr{H}\) with \(\{\emptyset,\Omega\}\subseteq\mathscr{G}\subseteq\mathscr{H}_U\). In this case, condition (ii) is trivially satisfied since we have \(K_U(\cdot,B)=1_B(\cdot)\) by intervention determinism (Definition \ref{Dcausalspace}\ref{interventionaldeterminism}), and for condition (i), by Theorem \ref{Tinterventionsource}\ref{sourcenonhard}, we have \(\mathbb{P}^{\textnormal{do}(U,\mathbb{Q})}_{\mathscr{H}_U}(\cdot,A)=K_U(\cdot,A)\), which means that the problem reduces to checking if \(K_U(\cdot,A)\) can be written in terms of \(\mathbb{P}\). 
\end{remark}
\begin{proof}
	By law of total expectations, for any \(V\in\mathscr{P}(T)\), we have
	\begin{alignat*}{2}
		\mathbb{P}^{\text{do}(U,\mathbb{Q})}(A)&=\int\mathbb{P}^{\text{do}(U,\mathbb{Q})}_{\mathscr{H}_U\vee\mathscr{G}}(\omega,A)\mathbb{P}^{\text{do}(U,\mathbb{Q})}(d\omega)\\
		&=\int\mathbb{P}^{\text{do}(U,\mathbb{Q})}_{\mathscr{H}_U\vee\mathscr{G}}(\omega,A)\int\mathbb{Q}(d\omega')K_U(\omega',d\omega).
	\end{alignat*}
	Here, \(\mathbb{P}^{\text{do}(U,\mathbb{Q})}_{\mathscr{H}_U\vee\mathscr{G}}(\omega,A)\) can be written in terms of \(\mathbb{P}\) and \(\mathbb{Q}\) by condition (i). Moreover, note that it suffices to be able to write the restriction of \(K_U(\omega',\cdot)\) to \(\mathscr{H}_U\vee\mathscr{G}\) in terms of \(\mathbb{P}\), since the integration is of a \(\mathscr{H}_U\vee\mathscr{G}\)-measurable function. Since the collection of intersections \(\{D\cap B,D\in\mathscr{H}_U,B\in\mathscr{G}\}\) is a \(\pi\)-system that generates \(\mathscr{H}_U\vee\mathscr{G}\) \citep[p.5, 1.18]{cinlar2011probability}, it suffices to check that \(K_U(\omega',D\cap B)\) can be written in terms of \(\mathbb{P}\) for all \(D\in\mathscr{H}_U\) and \(B\in\mathscr{G}\). But by interventional determinism (Definition \ref{Dcausalspace}\ref{interventionaldeterminism}), we have \(K_U(\omega',D\cap B)=1_D(\omega')K_U(\omega',B)\). Since \(K_U(\omega',B)\) can be written in terms of \(\mathbb{P}\) by condition (ii), the restriction of \(K_U(\omega',\cdot)\) to \(\mathscr{H}_U\vee\mathscr{G}\) can be written in terms of \(\mathbb{P}\), and hence \(\mathbb{P}^{\text{do}(U,\mathbb{Q})}(A)\) can be written in terms of \(\mathbb{P}\) and \(\mathbb{Q}\). 
\end{proof}
\begin{corollary}\label{Cvalidadjustment}
	Let \((\Omega,\mathscr{H},\mathbb{P},\mathbb{K})=(\times_{t\in T}E_t,\otimes_{t\in T}\mathscr{E}_t,\mathbb{P},\mathbb{K})\) be a causal space. Let \(A\in\mathscr{H}\) be an event, and \(U\in\mathscr{P}(T)\). If there exists a \(V\in\mathscr{P}(T)\) such that condition (i) of Lemma \ref{Lvalidadjustment} is satisfied with \(\mathscr{G}=\mathscr{H}_V\) and one of the following conditions is satisfied:
	\begin{enumerate}[(a)]
		\item \(\mathscr{H}_U\) is a local source of \(\mathscr{H}_V\); or
		\item \(\mathscr{H}_U\) has no causal effect on \(\mathscr{H}_V\); or
		\item \(V\subseteq U\),
	\end{enumerate}
	then \(\mathbb{P}^{\textnormal{do}(U,\mathbb{Q})}(A)\) can be written in terms of \(\mathbb{P}\) and \(\mathbb{Q}\). 
\end{corollary}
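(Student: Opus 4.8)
The plan is to reduce the corollary directly to Lemma \ref{Lvalidadjustment} by taking \(\mathscr{G}=\mathscr{H}_V\). Condition (i) of the lemma is assumed verbatim in the hypothesis, so the only thing left to establish is condition (ii), namely that \(K_U(\cdot,B)\) can be written in terms of \(\mathbb{P}\) for every \(B\in\mathscr{H}_V\). Each of the three alternatives (a), (b), (c) is tailored to deliver precisely this, so the argument splits into three short cases, after which a single invocation of the lemma finishes the proof.

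For case (a), I would unfold the definition of a local source (Definition \ref{Dsource}): if \(\mathscr{H}_U\) is a local source of \(\mathscr{H}_V\), then for every \(B\in\mathscr{H}_V\) the kernel \(K_U(\cdot,B)\) is a version of the conditional probability \(\mathbb{P}_{\mathscr{H}_U}(B)\), which is by construction determined by \(\mathbb{P}\). For case (b), I would apply Remark \ref{Rcausaleffect}\ref{nocausaleffectmeasure} together with the set-valued version of Definition \ref{Dcausaleffect}\ref{nocausaleffect}: if \(\mathscr{H}_U\) has no causal effect on \(\mathscr{H}_V\), then \(K_U(\omega,B)=\mathbb{P}(B)\) for all \(B\in\mathscr{H}_V\) and all \(\omega\in\Omega\), which is manifestly in terms of \(\mathbb{P}\). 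For case (c), if \(V\subseteq U\) then \(\mathscr{H}_V\subseteq\mathscr{H}_U\), so interventional determinism (Definition \ref{Dcausalspace}\ref{interventionaldeterminism}) gives \(K_U(\omega,B)=1_B(\omega)\) for every \(B\in\mathscr{H}_V\); this does not even require \(\mathbb{P}\), so condition (ii) holds a fortiori.

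Having verified condition (ii) in each case, I would conclude by applying Lemma \ref{Lvalidadjustment} with \(\mathscr{G}=\mathscr{H}_V\), which immediately yields that \(\mathbb{P}^{\textnormal{do}(U,\mathbb{Q})}(A)\) can be written in terms of \(\mathbb{P}\) and \(\mathbb{Q}\). I do not expect any genuine obstacle here, since the corollary is essentially a bookkeeping specialisation of the lemma; the only mild care needed is to confirm that the \say{local source of \(A\)} / \say{no causal effect on \(A\)} / \say{\(A\in\mathscr{H}_U\)} trichotomy discussed just before Lemma \ref{Lvalidadjustment} transfers verbatim when \(A\) is replaced by an arbitrary \(B\in\mathscr{H}_V\), which the \(\sigma\)-algebra-valued formulations of Definitions \ref{Dsource} and \ref{Dcausaleffect} guarantee.
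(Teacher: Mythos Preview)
Your proposal is correct and follows exactly the same approach as the paper: the paper's proof simply notes that condition (i) holds by hypothesis, that each of (a), (b), (c) trivially yields condition (ii), and then invokes Lemma \ref{Lvalidadjustment}. Your write-up is in fact more detailed than the paper's, since you spell out explicitly why each alternative delivers condition (ii), whereas the paper dismisses this as ``trivially'' satisfied.
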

\begin{proof}
	Condition (i) of Lemma \ref{Lvalidadjustment} is satisfied by hypothesis. If one of (a), (b) or (c) is satisfied, then trivially, condition (ii) of Lemma \ref{Lvalidadjustment} is also satisfied. The result now follows from Lemma \ref{Lvalidadjustment}.
\end{proof}
The above is reminiscent of \say{valid adjustments} in the context of structural causal models \citep[p.115, Proposition 6.41]{peters2017elements}, and in fact contains the valid adjustments. 

\section{Counterfactuals}\label{Scounterfactuals}
There are various notions of counterfactuals in the literature. The one considered in the SCM literature is the \textit{interventional counterfactual}, which captures the notion of \say{what would have happened if we intervened on the space, given some observations (that are possibly contradictory to the intervention we imagine we would have done)}. Recently, \textit{backtracking counterfactuals} have also been integrated into the SCM framework \citep{von2023backtracking}. This captures the notion of \say{what would have happened if background conditions of the world had been different, given that the causal laws of the system stay the same?} Finally, we note that in the potential outcomes framework, the random variables representing \say{potential outcomes} that form the primitives of the framework can be directly counterfactual. 

Vanilla probability measures have just one argument, i.e. the event. Conditional measures and causal kernels (in the sense of our Definition \ref{Dcausalspace}) have two arguments, the first being the outcome which we either observe or force the occurrence of, and the second being the event in whose measure we are interested. For both of the above concepts of counterfactuals, we need to go one step further and consider three arguments. The first is the outcome which we observe, just like in conditioning, and the last should be the event in whose measure we are interested. For interventional counterfactuals, the second argument should be an outcome which we imagine to have forced the occurrence of given that we observed the outcome of the first argument, and for backtracking counterfactuals, the second argument should be an outcome which we imagine to have observed instead of the outcome in the first argument which we actually observed. 

From these principles, we tentatively propose to extend Definition \ref{Dcausalspace} to account for \textit{interventional counterfactuals} as follows. 
\begin{definition}\label{Dcounterfactual}
	A \textit{causal space} is defined as the quadruple \((\Omega,\mathscr{H},\mathbb{P},\mathbb{K})\), where \((\Omega,\mathscr{H},\mathbb{P})=(\times_{t\in T}E_t,\otimes_{t\in T}\mathscr{E}_t,\mathbb{P})\) is a probability space and \(\mathbb{K}=\{K_{S,\mathscr{F}}:S\in\mathscr{P}(T),\mathscr{F}\text{ sub-}\sigma\text{-algebra of }\mathscr{H}\}\), called the \textit{causal mechanism}, is a collection of functions \(K_{S,\mathscr{F}}:\Omega\times\Omega\times\mathscr{H}\rightarrow[0,1]\), called the \textit{causal kernel on \(\mathscr{H}_S\) after observing \(\mathscr{F}\)}, such that
	\begin{enumerate}[(i)]
		\item for each fixed \(\eta\in\Omega\) and \(A\in\mathscr{H}\), \(K_{S,\mathscr{F}}(\cdot,\eta,A)\) is measurable with respect to \(\mathscr{H}_S\);
		\item for each fixed \(\omega\in\Omega\) and \(A\in\mathscr{H}\), \(K_{S,\mathscr{F}}(\omega,\cdot,A)\) is measurable with respect to \(\mathscr{F}\); 
		\item for each fixed pair \((\omega,\eta)\in\Omega\times\Omega\), \(K_{S,\mathscr{F}}(\omega,\eta,\cdot)\) is a measure on \(\mathscr{H}\);
		\item for all \(A\in\mathscr{H}\) and \(\omega,\eta\in\Omega\),
		\[K_{\emptyset,\mathscr{F}}(\omega,\eta,A)=\mathbb{P}_\mathscr{F}(\eta,A);\]
		\item for all \(A\in\mathscr{H}_S\), all \(B\in\mathscr{H}\) and all \(\omega,\eta\in\Omega\),
		\[K_{S,\mathscr{F}}(\omega,\eta,A\cap B)=1_A(\omega)K_S(\omega,\eta,B);\]
		in particular, for \(A\in\mathscr{H}_S\), \(K_{S,\mathscr{F}}(\omega,\eta,A)=1_A(\omega)K_{S,\mathscr{F}}(\omega,\eta,\Omega)=1_A(\omega)\);
		\item for all \(A\in\mathscr{H}\), \(\omega\in\Omega\) and sub-\(\sigma\)-algebras \(\mathscr{F}\subseteq\mathscr{G}\subseteq\mathscr{H}\),
		\[\mathbb{E}_\mathscr{F}\left[K_{S,\mathscr{G}}(\omega,\cdot,A)\right]=K_{S,\mathscr{F}}(\omega,\cdot,A).\]
	\end{enumerate}
\end{definition}
Note that letting \(\mathscr{F}=\{\emptyset,\Omega\}\) trivially recovers the causal space as defined in Definition \ref{Dcausalspace}. Moreover, letting \(S=\emptyset\), we recover the conditional distribution given \(\mathscr{F}\). 

Recall that the one of the biggest philosophical differences between the SCM framework and our proposed causal spaces (Definition \ref{Dcausalspace}) was the fact that SCMs start with the variables, the structural equations and the noise distributions as the \textit{primitive objects}, and the observational and interventional distributions over the endogenous variables are \textit{derived} from these, whereas causal spaces take the observational and interventional distributions as the \textit{primitive objects} (the latter via causal kernels). Note that, in the above extended definition of causal spaces incorporating interventional counterfactuals (Definition \ref{Dcounterfactual}), we applied the same principles, in that we treated the observational distribution (\(\mathbb{P}\)), interventional distributions (\(K_{S,\{\emptyset,\Omega\}}\)) and the (interventional) counterfactual distributions (\(K_{S,\mathscr{F}}\)) as the primitive objects. 

This differs significantly from the SCM framework, where again, the (interventional) counterfactual distributions are \textit{derived} from the structural equations, by first conditioning on the observed values of the endogenous variables to get a modified (often Dirac) measure on the exogenous variables, then intervening on some of the endogenous variables, deriving the measure on the rest of the endogenous variables by propagating these through the same structural equations. We see the value in this approach in that the (interventional) counterfactual distributions can be neatly derived from the same primitive objects that are used to calculate the observational and interventional distribution. However, we argue that this cannot be an \textit{axiomatisation} of (interventional) counterfactual distributions in the strictest sense, because it relies on assumptions. In particular, it strongly relies on the assumption that the endogenous variables have no causal effect on the exogenous variables, and when this assumption is violated, i.e. when there is a hidden mediator, calculation of (interventional) counterfactual distributions is not possible. In contrast, Definition \ref{Dcounterfactual} treat the (interventional) counterfactual measures as the primitive objects, and does not impose any a priori assumptions about the system. 

As mentioned in Section \ref{Sconclusion} of the main body of the paper, we leave further developments of this interventional counterfactual causal space, as well as the definition of backtracking counterfactual causal space, as essential future work.

\section{Proofs}\label{Sproofs}
\begin{customproof}{Theorem \ref{Tintervention}}
	From Definition \ref{Dintervention}, \(\mathbb{P}^{\textnormal{do}(U,\mathbb{Q})}\) is indeed a measure on \((\Omega,\mathscr{H})\), and \(\mathbb{K}^{\textnormal{do}(U,\mathbb{Q},\mathbb{L})}\) is indeed a valid causal mechanism on \((\Omega,\mathscr{H},\mathbb{P}^{\textnormal{do}(U,\mathbb{Q})})\), i.e. they satisfy the axioms of Definition \ref{Dcausalspace}. 
\end{customproof}
\begin{proof}
	That \(\mathbb{P}^{\text{do}(U,\mathbb{Q})}\) is a measure on \((\Omega,\mathscr{H})\) follows immediately from the usual construction of measures from measures and transition probability kernels, see e.g. \citet[p.38, Theorem 6.3]{cinlar2011probability}. It remains to check that \(\mathbb{K}^{\text{do}(U,\mathbb{Q},\mathbb{L})}\) is a valid causal mechanism in the sense of Definition \ref{Dcausalspace}.
	\begin{enumerate}[(i)]
		\item For all \(A\in\mathscr{H}\) and \(\omega\in\Omega\),
		\begin{alignat*}{2}
			K^{\text{do}(U,\mathbb{Q},\mathbb{L})}_\emptyset(\omega,A)&=\int L_\emptyset(\omega_\emptyset,d\omega'_U)K_U((\omega_\emptyset,\omega'_U),A)\\
			&=\int\mathbb{Q}(d\omega')K_U(\omega',A)\\
			&=\mathbb{P}^{\text{do}(U,\mathbb{Q})}(A),
		\end{alignat*}
		where we applied Axiom \ref{Dcausalspace}\ref{trivialintervention} to \(L_\emptyset\). 
		\item For all \(A\in\mathscr{H}_S\) and \(B\in\mathscr{H}\), we have, by Axiom \ref{Dcausalspace}\ref{interventionaldeterminism} using the fact that \(A\in\mathscr{H}_S\subseteq\mathscr{H}_{S\cup U}\),
		\begin{alignat*}{2}
			&K^{\text{do}(U,\mathbb{Q},\mathbb{L})}_S(\omega,A\cap B)\\
			&=\int L_{S\cap U}(\omega_{S\cap U},d\omega'_U)K_{S\cup U}((\omega_{S\backslash U},\omega'_U),A\cap B)\\
			&=\int L_{S\cap U}(\omega_{S\cap U},d\omega'_U)1_A((\omega_{S\setminus U},\omega'_U))K_{S\cup U}((\omega_{S\setminus U},\omega'_U),B)\\
			&=\int L_{S\cap U}(\omega_{S\cap U},d\omega'_U)1_A((\omega_{S\setminus U},\omega'_{S\cap U}))K_{S\cup U}((\omega_{S\setminus U},\omega'_U),B),
		\end{alignat*}
		where, in going from the third line to the fourth, we split the \(\omega'_U\) in \(1_A((\omega_{S\setminus U},\omega'_U))\) into components \((\omega'_{S\cap U},\omega'_{U\setminus S})\) and notice that since \(A\in\mathscr{H}_S\), \(1_A\) does not depend on the component \(\omega'_{U\setminus S}\). Here, the map \(\omega'_{S\cap U}\mapsto1_A((\omega_{S\setminus U},\omega'_{S\cap U}))\) is \(\mathscr{H}_{S\cap U}\)-measurable, so we can write it as the limit of an increasing sequence of positive \(\mathscr{H}_{S\cap U}\)-simple functions (see Section \ref{SSmeasuretheory}), say \((f_n)_{n\in\mathbb{N}}\) with \(f_n=\sum^{m_n}_{i_n=1}b_{i_n}1_{B_{i_n}}\), where \(B_{i_n}\in\mathscr{H}_{S\cap U}\). Likewise, the map \(\omega'_U\mapsto K_{S\cup U}((\omega_{S\setminus U},\omega'_U),B)\) is \(\mathscr{H}_U\)-measurable, so we can write it as the limit of an increasing sequence of positive \(\mathscr{H}_U\)-simple functions, say \((g_n)_{n\in\mathbb{N}}\) with \(g_n=\sum^{l_n}_{j_n=1}c_{j_n}1_{C_{j_n}}\), where \(C_{j_n}\in\mathscr{H}_U\). Hence
		\[K^{\text{do}(U,\mathbb{Q},\mathbb{L})}_S(\omega,A\cap B)=\int L_{S\cap U}(\omega_{S\cap U},d\omega'_U)\left(\lim_{n\rightarrow\infty}f_n(\omega'_{S\cap U})\right)\left(\lim_{n\rightarrow\infty}g_n(\omega'_U)\right).\]
		Since, for each \(\omega'_U\), both of the limits exist by construction, namely the original measurable functions, we have that the product of the limits is the limit of the products:
		\[K^{\text{do}(U,\mathbb{Q},\mathbb{L})}_S(\omega,A\cap B)=\int L_{S\cap U}(\omega_{S\cap U},d\omega'_U)\lim_{n\rightarrow\infty}\left(f_n(\omega'_{S\cap U})g_n(\omega'_U)\right).\]
		Here, since \(f_n\) and \(g_n\) were individually sequences of increasing functions, the pointwise products \(f_ng_n\) also form an increasing sequence of functions. Hence, we can apply the monotone convergence theorem to see that
		\begin{alignat*}{2}
			&K^{\text{do}(U,\mathbb{Q},\mathbb{L})}_S(\omega,A\cap B)\\
			&=\lim_{n\rightarrow\infty}\int L_{S\cap U}(\omega_{S\cap U},d\omega'_U)f_n(\omega'_{S\cap U})g_n(\omega'_U)\\
			&=\lim_{n\rightarrow\infty}\sum^{m_n}_{i_n=1}\sum^{l_n}_{j_n=1}b_{i_n}c_{j_n}\int L_{S\cap U}(\omega_{S\cap U},d\omega'_U)1_{B_{i_n}}(\omega'_{S\cap U})1_{C_{j_n}}(\omega'_U)\\
			&=\lim_{n\rightarrow\infty}\sum^{m_n}_{i_n=1}\sum^{l_n}_{j_n=1}b_{i_n}c_{j_n}L_{S\cap U}(\omega_{S\cap U},B_{i_n}\cap C_{j_n})\\
			&=\lim_{n\rightarrow\infty}\sum^{m_n}_{i_n=1}\sum^{l_n}_{j_n=1}b_{i_n}c_{j_n}1_{B_{i_n}}(\omega_{S\cap U})L_{S\cap U}(\omega_{S\cap U},C_{j_n})\\
			&=\lim_{n\rightarrow\infty}\sum^{m_n}_{i_n=1}b_{i_n}1_{B_{i_n}}(\omega_{S\cap U})\sum^{l_n}_{j_n=1}c_{j_n}L_{S\cap U}(\omega_{S\cap U},C_{j_n})\\
			&=\left(\lim_{n\rightarrow\infty}\sum^{m_n}_{i_n=1}b_{i_n}1_{B_{i_n}}(\omega_{S\cap U})\right)\left(\lim_{n\rightarrow\infty}\sum^{l_n}_{j_n=1}c_{j_n}L_{S\cap U}(\omega_{S\cap U},C_{j_n})\right)\\
			&=\left(\lim_{n\rightarrow\infty}f_n(\omega_{S\cap U})\right)\left(\lim_{n\rightarrow\infty}\int L_{S\cap U}(\omega_{S\cap U},d\omega'_U)\sum^{l_n}_{j_n=1}c_j1_{C_{j_n}}(\omega'_U)\right)\\
			&=1_A((\omega_{S\setminus U},\omega_{S\cap U}))\int L_{S\cap U}(\omega_{S\cap U},d\omega'_U)\lim_{n\rightarrow\infty}g_n(\omega'_U)\\
			&=1_A(\omega_S)\int L_{S\cap U}(\omega_{S\cap U},d\omega'_U)K_{S\cup U}((\omega_{S\setminus U},\omega'_U),B)\\
			&=1_A(\omega_S)K^{\text{do}(U,\mathbb{Q},\mathbb{L})}_S(\omega_S,B)
		\end{alignat*}
		where, from the fourth line to the fifth, we used Axiom \ref{Dcausalspace}\ref{interventionaldeterminism}; from the sixth line to the seventh, we used that limit of the products is the product of the limits again, noting that both of the limits exist by construction; from the eighth line to the ninth, we used monotone convergence theorem again. This is the required result. 
	\end{enumerate}
\end{proof}
\begin{customproof}{Theorem \ref{Thardintervention}}
	Let \((\Omega,\mathscr{H},\mathbb{P},\mathbb{K})=(\times_{t\in T}E_t,\otimes_{t\in T}\mathscr{E}_t,\mathbb{P},\mathbb{K})\) be a causal space, and \(U\in\mathscr{P}(T)\) and \(\mathbb{Q}\) a probability measure on \((\Omega,\mathscr{H}_U)\). Then after a hard intervention on \(\mathscr{H}_U\) via \(\mathbb{Q}\), the intervention causal kernels \(K^{\textnormal{do}(U,\mathbb{Q},\textnormal{hard})}_S\) are given by
	\[K^{\textnormal{do}(U,\mathbb{Q},\textnormal{hard})}_S(\omega,A)=K^{\textnormal{do}(U,\mathbb{Q},\textnormal{hard})}_S(\omega_S,A)=\int\mathbb{Q}(d\omega'_{U\setminus S})K_{S\cup U}((\omega_S,\omega'_{U\setminus S}),A).\]
\end{customproof}
\begin{proof}
	We decompose \(\mathscr{H}_U\) as a product \(\sigma\)-algebra into \(\mathscr{H}_{S\cap U}\otimes\mathscr{H}_{U\setminus S}\). Then events of the form \(B\cap C\) with \(B\in\mathscr{H}_{S\cap U}\) and \(C\in\mathscr{H}_{U\setminus S}\) generate \(\mathscr{H}_U\), so for fixed \(\omega_{S\cap U}\), the measure \(L_{S\cap U}(\omega_{S\cap U},\cdot)\) is completely determined by \(L_{S\cap U}(\omega_{S\cap U},B\cap C)\) for all \(B\in\mathscr{H}_{S\cap U}\), \(C\in\mathscr{H}_{U\setminus S}\). But we have
	\begin{alignat*}{3}
		L_{S\cap U}(\omega_{S\cap U},B\cap C)&=\delta_{\omega_{S\cap U}}(B)L_{S\cap U}(\omega_{S\cap U},C)\qquad&&\text{by Axiom \ref{Dcausalspace}\ref{interventionaldeterminism}}\\
		&=\delta_{\omega_{S\cap U}}(B)\mathbb{Q}(C),
	\end{alignat*}
	since \(L_{S\cap U}\) is trivial and \(C\in\mathscr{H}_{U\setminus S}\). So the measure \(L_{S\cap U}(\omega_{S\cap U},\cdot)\) is a product measure of \(\delta_{\omega_{S\cap U}}\) and \(\mathbb{Q}\). Hence, applying Fubini's theorem,
	\begin{alignat*}{2}
		K^{\text{do}(U,\mathbb{Q},\text{hard})}_S(\omega,A)&=\int L_{S\cap U}(\omega_{S\cap U},d\omega'_U)K_{S\cup U}((\omega_{S\setminus U},\omega'_U),A)\\
		&=\int\int K_{S\cup U}((\omega_{S\setminus U},\omega'_{S\cap U},\omega'_{U\setminus S}),A)\delta_{\omega_{S\cap U}}(d\omega'_{S\cap U})\mathbb{Q}(d\omega'_{U\setminus S})\\
		&=\int K_{S\cup U}((\omega_{S\setminus U},\omega_{S\cap U},\omega'_{U\setminus S}),A)\mathbb{Q}(d\omega'_{U\setminus S})\\
		&=\int\mathbb{Q}(d\omega'_{U\setminus S})K_{S\cup U}((\omega_S,\omega'_{U\setminus S}),A),
	\end{alignat*}
	as required. 

\end{proof}
\begin{customproof}{Theorem \ref{Tinterventionsource}}
	Suppose we have a causal space \((\Omega,\mathscr{H},\mathbb{P},\mathbb{K})=(\times_{t\in T}E_t,\otimes_{t\in T}\mathscr{E}_t,\mathbb{P},\mathbb{K})\), and let \(U\in\mathscr{P}(T)\). 
	\begin{enumerate}[(i)]
		\item For any measure \(\mathbb{Q}\) on \(\mathscr{H}_U\) and any causal mechanism \(\mathbb{L}\) on \((\Omega,\mathscr{H}_U,\mathbb{Q})\), the causal kernel \(K_U^{\textnormal{do}(U,\mathbb{Q},\mathbb{L})}=K_U\) is a version of \(\mathbb{P}^{\textnormal{do}(U,\mathbb{Q})}_{\mathscr{H}_U}\), which means that \(\mathscr{H}_U\) is a global source \(\sigma\)-algebra of the intervened causal space \((\Omega,\mathscr{H},\mathbb{P}^{\textnormal{do}(U,\mathbb{Q})},\mathbb{K}^{\textnormal{do}(U,\mathbb{Q},\mathbb{L})})\).
		\item Suppose \(V\in\mathscr{P}(T)\) with \(V\subseteq U\). Suppose that the measure \(\mathbb{Q}\) on \((\Omega,\mathscr{H}_U)\) factorises over \(\mathscr{H}_V\) and \(\mathscr{H}_{U\setminus V}\), i.e. for any \(A\in\mathscr{H}_V\) and \(B\in\mathscr{H}_{U\setminus V}\), \(\mathbb{Q}(A\cap B)=\mathbb{Q}(A)\mathbb{Q}(B)\). Then after a hard intervention on \(\mathscr{H}_U\) via \(\mathbb{Q}\), the causal kernel \(K^{\textnormal{do}(U,\mathbb{Q})}_V\) is a version of \(\mathbb{P}^{\textnormal{do}(U,\mathbb{Q})}_V\), which means that \(\mathscr{H}_V\) is a global source \(\sigma\)-algebra of the intervened causal space \((\Omega,\mathscr{H},\mathbb{P}^{\textnormal{do}(U,\mathbb{Q})},\mathbb{K}^{\textnormal{do}(U,\mathbb{Q})})\). 
	\end{enumerate}
\end{customproof}
\begin{proof}
	Suppose that \(f=\sum^m_{i=1}b_i1_{B_i}\) is a \(\mathscr{H}_U\)-simple function, i.e. with \(B_i\in\mathscr{H}_U\) for \(i=1,...,m\). Then for any \(B\in\mathscr{H}_U\), 
	\begin{alignat*}{3}
		\int_Bf(\omega)\mathbb{P}^{\text{do}(U,\mathbb{Q})}(d\omega)&=\int_B\sum^m_{i=1}b_i1_{B_i}(\omega)\mathbb{P}^{\text{do}(U,\mathbb{Q})}(d\omega)\\
		&=\sum^m_{i=1}b_i\mathbb{P}^{\text{do}(U,\mathbb{Q})}(B\cap B_i)\\
		&=\sum^m_{i=1}b_i\int\mathbb{Q}(d\omega)K_U(\omega,B\cap B_i)\quad&&\text{by the definition of }\mathbb{P}^{\text{do}(U,\mathbb{Q})}\\
		&=\sum^m_{i=1}b_i\int\mathbb{Q}(d\omega)1_{B\cap B_i}(\omega)&&\text{by Axiom \ref{Dcausalspace}\ref{interventionaldeterminism}}\\
		&=\int_B\sum^m_{i=1}b_i1_{B_i}(\omega)\mathbb{Q}(d\omega)\\
		&=\int_Bf(\omega)\mathbb{Q}(d\omega).
	\end{alignat*}
	Now, for any \(\mathscr{H}_U\)-measurable map \(g:\Omega\rightarrow\mathbb{R}\), we can write it as a limit of an increasing sequence of positive \(\mathscr{H}_U\)-simple functions \(f_n\) (see Section \ref{SSmeasuretheory}), so for any \(B\in\mathscr{H}_U\),
	\begin{alignat*}{3}
		\int_Bg(\omega)\mathbb{P}^{\text{do}(U,\mathbb{Q})}(d\omega)&=\int_B\lim_{n\rightarrow\infty}f_n(\omega)\mathbb{P}^{\text{do}(U,\mathbb{Q})}(d\omega)\\
		&=\lim_{n\rightarrow\infty}\int_Bf_n(\omega)\mathbb{P}^{\text{do}(U,\mathbb{Q})}(d\omega)\qquad&&\text{by the monotone convergence theorem}\\
		&=\lim_{n\rightarrow\infty}\int_Bf_n(\omega)\mathbb{Q}(d\omega)&&\text{by above}\\
		&=\int_B\lim_{n\rightarrow\infty}f_n(\omega)\mathbb{Q}(d\omega)&&\text{by the monotone convergence theorem}\\
		&=\int_Bg(\omega)\mathbb{Q}(d\omega).
	\end{alignat*}
	We use this fact in the proof of both parts of this theorem. 
	\begin{enumerate}[(i)]
		\item First note that we indeed have \(K_U^{\textnormal{do}(U,\mathbb{Q},\mathbb{L})}=K_U\), by Remark \ref{Rinterventionspecialcases}\ref{overwrite}. For any \(A\in\mathscr{H}\), the map \(\omega\mapsto K_U(\omega,A)\) is \(\mathscr{H}_U\)-measurable, so for any \(B\in\mathscr{H}_U\), 
		\begin{alignat*}{3}
			\int_BK_U(\omega,A)\mathbb{P}^{\text{do}(U,\mathbb{Q})}(d\omega)&=\int_BK_U(\omega,A)\mathbb{Q}(d\omega)&&\text{by above fact}\\
			&=\int1_B(\omega)K_U(\omega,A)\mathbb{Q}(d\omega)\\
			&=\int K_U(\omega,A\cap B)\mathbb{Q}(d\omega)&&\text{by Axiom \ref{Dcausalspace}\ref{interventionaldeterminism}}\\
			&=\mathbb{P}^{\text{do}(U,\mathbb{Q})}(A\cap B)\\
			&=\int1_{A\cap B}(\omega)\mathbb{P}^{\text{do}(U,\mathbb{Q})}(d\omega)\\
			&=\int1_B(\omega)1_A(\omega)\mathbb{P}^{\text{do}(U,\mathbb{Q})}(d\omega)\\
			&=\int_B1_A(\omega)\mathbb{P}^{\text{do}(U,\mathbb{Q})}(d\omega).
		\end{alignat*}
		So \(K_U(\cdot,A)=K^{\text{do}(U,\mathbb{Q},\mathbb{L})}_U(\cdot,A)\) is indeed a version of the conditional probability \(\mathbb{P}^{\text{do}(U,\mathbb{Q})}_{\mathscr{H}_U}(A)\), which means that \(\mathscr{H}_U\) is a global source of \((\Omega,\mathscr{H},\mathbb{P}^{\text{do}(U,\mathbb{Q})},\mathbb{K}^{\text{do}(U,\mathbb{Q},\mathbb{L})})\). 
		\item For any \(A\in\mathscr{H}\), the map \(\omega\mapsto K^{\text{do}(U,\mathbb{Q})}_V(\omega,A)\) is \(\mathscr{H}_V\)-measurable and hence \(\mathscr{H}_U\)-measurable, so for any \(B\in\mathscr{H}_V\subseteq\mathscr{H}_U\), 
		\begin{alignat*}{3}
			&\int_BK^{\text{do}(U,\mathbb{Q})}_V(\omega_V,A)\mathbb{P}^{\text{do}(U,\mathbb{Q})}(d\omega_V)\\
			&=\int_BK^{\text{do}(U,\mathbb{Q})}_V(\omega_V,A)\mathbb{Q}(d\omega_V)&&\text{by above fact}\\
			&=\int K^{\text{do}(U,\mathbb{Q})}_V(\omega_V,A\cap B)\mathbb{Q}(d\omega_V)&&\text{by Axiom \ref{Dcausalspace}\ref{interventionaldeterminism}}\\
			&=\int\int\mathbb{Q}(d\omega'_{U\setminus V})K_U((\omega_V,\omega'_{U\setminus V}),A\cap B)\mathbb{Q}(d\omega_V)\\
			&=\int K_U(\omega_U,A\cap B)\mathbb{Q}(d\omega_U)\\
			&=\int_B1_A(\omega)\mathbb{P}^{\text{do}(U,\mathbb{Q})}(d\omega).
		\end{alignat*}
		where, in going from the third line to the fourth, we used Theorem \ref{Thardintervention}, and to go from the fourth line to the fifth, we used the hypothesis that \(\mathbb{Q}\) factorises over \(\mathscr{H}_V\) and \(\mathscr{H}_{U\setminus V}\), meaning \(\mathbb{Q}(d\omega_{U\setminus V})\mathbb{Q}(d\omega_V)=\mathbb{Q}(d\omega_U)\). So \(K^{\text{do}(U,\mathbb{Q})}_V(\omega,A)\) is indeed a version of the conditional probability \(\mathbb{P}^{\text{do}(U,\mathbb{Q})}_{\mathscr{H}_V}(A)\), which means that \(\mathscr{H}_V\) is a global source of \((\Omega,\mathscr{H},\mathbb{P}^{\text{do}(U,\mathbb{Q})},\mathbb{K}^{\text{do}(U,\mathbb{Q})})\). 
	\end{enumerate}
\end{proof}
\begin{customproof}{Lemma \ref{Linterventionnocausaleffect}}
	Let \((\Omega,\mathscr{H},\mathbb{P},\mathbb{K})=(\times_{t\in T}E_t,\otimes_{t\in T}\mathscr{E}_t,\mathbb{P},\mathbb{K})\) be a causal space, and \(U\in\mathscr{P}(T)\), \(\mathbb{Q}\) a probability measure on \((\Omega,\mathscr{H}_U)\) and \(\mathbb{L}=\{L_V:V\in\mathscr{P}(U)\}\) a causal mechanism on \((\Omega,\mathscr{H}_U,\mathbb{Q})\). Suppose we intervene on \(\mathscr{H}_U\) via \((\mathbb{Q},\mathbb{L})\). 
	\begin{enumerate}[(i)]
		\item For \(A\in\mathscr{H}_U\) and \(V\in\mathscr{P}(T)\) with \(V\cap U=\emptyset\), \(\mathscr{H}_V\) has no causal effect on \(A\) (c.f. Definition \ref{Dcausaleffect}\ref{nocausaleffect}) in the intervention causal space \((\Omega,\mathscr{H},\mathbb{P}^{\textnormal{do}(U,\mathbb{Q})},\mathbb{K}^{\textnormal{do}(U,\mathbb{Q},\mathbb{L})})\), i.e. events in the \(\sigma\)-algebra \(\mathscr{H}_U\) on which intervention took place are not causally affected by \(\sigma\)-algebras outside \(\mathscr{H}_U\). 
		\item Again, let \(V\in\mathscr{P}(T)\) with \(V\cap U=\emptyset\), and also let \(A\in\mathscr{H}\) be any event. If, in the original causal space, \(\mathscr{H}_V\) had no causal effect on \(A\), then in the intervention causal space, \(\mathscr{H}_V\) has no causal effect on \(A\) either. 
		\item Now let \(V\in\mathscr{P}(T)\), \(A\in\mathscr{H}\) any event and suppose that the intervention on \(\mathscr{H}_U\) via \(\mathbb{Q}\) is hard. Then if \(\mathscr{H}_V\) had no causal effect on \(A\) in the original causal space, then \(\mathscr{H}_V\) has no causal effect on \(A\) in the intervention causal space either. 
	\end{enumerate}
\end{customproof}
\begin{proof}
	\begin{enumerate}[(i)]
		\item Take any \(S\in\mathscr{P}(T)\). See that
		\begin{alignat*}{2}
			K^{\text{do}(U,\mathbb{Q},\mathbb{L})}_S(\omega,A)&=\int L_{S\cap U}(\omega_{S\cap U},d\omega'_U)K_{S\cup U}((\omega_{S\setminus U},\omega'_U),A)\\
			&=\int L_{S\cap U}(\omega_{S\cap U},d\omega'_U)1_A(\omega'_U)\\
			&=\int L_{S\cap U}(\omega_{S\cap U},d\omega'_U)K_{(S\setminus V)\cup U}((\omega_{(S\setminus V)\setminus U},\omega'_U),A)\\
			&=\int L_{(S\setminus V)\cap U}(\omega_{(S\setminus V)\cap U},d\omega'_U)K_{(S\setminus V)\cup U}((\omega_{(S\setminus V)\setminus U},\omega'_U),A)\\
			&=K^{\text{do}(U,\mathbb{Q},\mathbb{L})}_{S\setminus V}(\omega,A)
		\end{alignat*}
		where, in going from the first line to the second and from the second line to the third, we used the fact that \(A\in\mathscr{H}_U\), and in going from the third line to the fourth, we applied the fact that \((S\setminus V)\cap U=S\cap U\) since \(V\cap U=\emptyset\). Since \(S\in\mathscr{P}(T)\) was arbitrary, \(\mathscr{H}_V\) has no causal effect on \(A\) in the intervention causal space. 
		\item Take any \(S\in\mathscr{P}(T)\). See that
		\begin{alignat*}{3}
			K^{\text{do}(U,\mathbb{Q},\mathbb{L})}_S(\omega,A)&=\int L_{S\cap U}(\omega_{S\cap U},d\omega'_U)K_{S\cup U}((\omega_{S\setminus U},\omega'_U),A)\\
			&=\int L_{S\cap U}(\omega_{S\cap U},d\omega'_U)K_{(S\cup U)\setminus V}((\omega_{(S\setminus V)\setminus U},\omega'_U),A)\\
			&=\int L_{(S\setminus V)\cap U}(\omega_{(S\setminus V)\cap U},d\omega'_U)K_{(S\setminus V)\cup U}((\omega_{(S\setminus V)\setminus U},\omega'_U),A)\\
			&=K^{\text{do}(U,\mathbb{Q},\mathbb{L})}_{S\setminus V}(\omega,A)
		\end{alignat*}
		where, in going from the first line to the second, we used the fact that \(\mathscr{H}_V\) has no causal effect on \(A\) in the original causal space, and in going from the second line to the third, we used \(U\cap V=\emptyset\), which gives us \(S\cap U=(S\setminus V)\cap U\) and \((S\cup U)\setminus V=(S\setminus V)\cup U\). Since \(S\in\mathscr{P}(T)\) was arbitrary, \(\mathscr{H}_V\) has no causal effect on \(A\) in the intervention causal space. 
		\item Take any \(S\in\mathscr{P}(T)\). Apply Theorem \ref{Thardintervention} to see that
		\begin{alignat*}{3}
			K^{\text{do}(U,\mathbb{Q},\text{hard})}_S(\omega,A)&=\int\mathbb{Q}(d\omega'_{U\setminus S})K_{S\cup U}((\omega_S,\omega'_{U\setminus S}),A)\\
			&=\int\mathbb{Q}(d\omega'_{U\setminus S})K_{(S\cup U)\setminus V}((\omega_S,\omega'_{U\setminus S}),A)&&\text{Def. \ref{Dcausaleffect}\ref{nocausaleffect}}\\
			&=\int\mathbb{Q}(d\omega'_{U\setminus S})K_{((S\setminus V)\cup U)\setminus V}((\omega_S,\omega'_{U\setminus S}),A)\\
			&=\int\mathbb{Q}(d\omega'_{U\setminus S})K_{(S\setminus V)\cup U}((\omega_S,\omega'_{U\setminus S}),A)&&\text{Def. \ref{Dcausaleffect}\ref{nocausaleffect}}\\
			&=\int\mathbb{Q}(d\omega'_{U\setminus(S\setminus V)})K_{(S\setminus V)\cup U}((\omega_{S\setminus V},\omega'_{U\setminus(S\setminus V)}),A)\\
			&=K^{\text{do}(U,\mathbb{Q})}_{S\setminus V}(\omega,A),
		\end{alignat*}
		where, in going from the second line to the third, we used that \((S\cup U)\setminus V=((S\setminus V)\cup U)\setminus V\). Since \(S\in\mathscr{P}(T)\) was arbitrary, \(\mathscr{H}_V\) has no causal effect on \(A\) in the intervention causal space. 
	\end{enumerate}
\end{proof}
\begin{customproof}{Lemma \ref{Linterventiondormantcausaleffect}}
	Let \((\Omega,\mathscr{H},\mathbb{P},\mathbb{K})=(\times_{t\in T}E_t,\otimes_{t\in T}\mathscr{E}_t,\mathbb{P},\mathbb{K})\) be a causal space, and \(U\in\mathscr{P}(T)\). For an event \(A\in\mathscr{H}\), if \(\mathscr{H}_U\) has a dormant causal effect on \(A\) in the original causal space, then there exists a hard intervention and a subset \(V\subseteq U\) such that in the intervention causal space, \(\mathscr{H}_V\) has an active causal effect on \(A\). 
\end{customproof}
\begin{proof}
	That \(\mathscr{H}_U\) has a dormant causal effect on \(A\) tells us that \(K_U(\omega,A)=\mathbb{P}(A)\) for all \(\omega\in\Omega\), but there exists some \(S\in\mathscr{P}(T)\) and some \(\omega_0\in\Omega\) such that \(K_S(\omega_0,A)\neq K_{S\setminus U}(\omega_0,A)\). We must have \(S\cap U\neq\emptyset\), since otherwise \(S\setminus U=S\) and we cannot possibly have \(K_S(\omega_0,A)\neq K_{S\setminus U}(\omega_0,A)\). Then we hard-intervene on \(\mathscr{H}_{S\setminus U}\) with the Dirac measure on \(\omega_0\). Then apply Theorem \ref{Thardintervention} to see that
	\begin{alignat*}{2}
		K^{\text{do}(S\setminus U,\delta_{\omega_0},\text{hard})}_{S\cap U}((\omega_0)_{U\cap S},A)&=\int\delta_{\omega_0}(d\omega'_{S\setminus U})K_S(((\omega_0)_{U\cap S},\omega'_{S\setminus U}),A)\\
		&=K_S(\omega_0,A)\\
		&\neq K_{S\setminus U}(\omega_0,A)
	\end{alignat*}
	Note that the intervention measure on \(A\) is equal to \(K_{S\setminus U}(\omega_0,A)\):
	\[\mathbb{P}^{\text{do}(S\setminus U,\delta_{\omega_0})}(A)=\int\delta_{\omega_0}(d\omega'_{S\setminus U})K_{S\setminus U}(\omega',A)=K_{S\setminus U}(\omega_0,A).\]
	Putting these together, we have
	\[K^{\text{do}(S\setminus U,\delta_{\omega_0},\text{hard})}_{S\cap U}(\omega_0,A)\neq\mathbb{P}^{\text{do}(S\setminus U,\delta_{\omega_0})}(A),\]
	i.e. in the intervention causal space \((\Omega,\mathscr{H},\mathbb{P}^{\text{do}(S\setminus U,\delta_{\omega_0})},K^{\text{do}(S\setminus U,\delta_{\omega_0},\text{hard})}_{S\cap U})\), \(\mathscr{H}_{S\cap U}\) has an active causal effect on \(A\). 
\end{proof}
\begin{customproof}{Lemma \ref{Lconditionalnocausaleffect}}
	Let \((\Omega,\mathscr{H},\mathbb{P},\mathbb{K})=(\times_{t\in T}E_t,\otimes_{t\in T}\mathscr{E}_t,\mathbb{P},\mathbb{K})\) be a causal space, and \(U,V\in\mathscr{P}(T)\). For an event \(A\in\mathscr{H}\), suppose that \(\mathscr{H}_U\) has no causal effect on \(A\) given \(\mathscr{H}_V\) (see Definition \ref{Dconditionalcausaleffect}). Then after an intervention on \(\mathscr{H}_V\) via any \((\mathbb{Q},\mathbb{L})\), \(\mathscr{H}_{U\setminus V}\) has no causal effect on \(A\).
\end{customproof}
\begin{proof}
	Take any probability measure \(\mathbb{Q}\) on \((\Omega,\mathscr{H}_V)\) and any causal mechanism \(\mathbb{L}\) on \((\Omega,\mathscr{H}_V,\mathbb{Q})\). Then see that, for any \(S\in\mathscr{P}(T)\) and all \(\omega\in\Omega\), 
	\begin{alignat*}{2}
		K^{\text{do}(V,\mathbb{Q},\mathbb{L})}_S(\omega,A)&=\int L_{S\cap V}(\omega_{S\cap V},d\omega'_V)K_{S\cup V}((\omega_{S\setminus V},\omega'_V),A)\\
		&=\int L_{S\cap V}(\omega_{S\cap V},d\omega'_V)K_{(S\cup V)\setminus(U\setminus V)}((\omega_{S\setminus(U\cup V)},\omega'_V),A)\\
		&=\int L_{(S\setminus(U\setminus V))\cap V}(\omega_{(S\setminus(U\setminus V))\cap V},d\omega'_V)K_{(S\setminus(U\setminus V))\cup V}((\omega_{S\setminus(U\cup V)},\omega'_V),A)\\
		&=K^{\text{do}(V,\mathbb{Q},\mathbb{L})}_{S\setminus(U\setminus V)}(\omega,A),
	\end{alignat*}
	where, in going from the first line to the second, we used the fact that \(\mathscr{H}_U\) has no causal effect on \(A\) given \(\mathscr{H}_V\), and in going from the second line to the third, we used identities \(S\cap V=(S\setminus(U\setminus V))\cap V\) and \((S\cup V)\setminus(U\setminus V)=(S\setminus(U\setminus V))\cup V\). Since \(S\in\mathscr{P}(T)\) was arbitrary, we have that \(\mathscr{H}_{U\setminus V}\) has no causal effect on \(A\) in the intervention causal space. 
\end{proof}
\begin{customproof}{Theorem \ref{Ttimerespecting}}
	Let \((\Omega,\mathscr{H},\mathbb{P},\mathbb{K})=(\times_{t\in T}E_t,\otimes_{t\in T}\mathscr{E}_t,\mathbb{P},\mathbb{K})\) be a causal space, where the index set \(T\) can be written as \(T=W\times\tilde{T}\), with \(W\) representing time and \(\mathbb{K}\) respecting time. Take any \(U\in\mathscr{P}(T)\) and any probability measure \(\mathbb{Q}\) on \(\mathscr{H}_U\). Then the intervention causal mechanism \(\mathbb{K}^{\textnormal{do}(U,\mathbb{Q},\textnormal{hard})}\) also respects time. 
\end{customproof}
\begin{proof}
	Take any \(w_1,w_2\in W\) with \(w_1<w_2\). Since \(\mathbb{K}\) respects time, we have that \(\mathscr{H}_{w_2\times\tilde{T}}\) has no causal effect on \(\mathscr{H}_{w_1\times\tilde{T}}\) in the original causal space. To show that \(\mathscr{H}_{w_2\times\tilde{T}}\) has no causal effect on \(\mathscr{H}_{w_1\times\tilde{T}}\) after a hard intervention on \(\mathscr{H}_U\) via \(\mathbb{Q}\), take any \(S\in\mathscr{P}(T)\) and any event \(A\in\mathscr{H}_{w_1\times\tilde{T}}\). Then using Theorem \ref{Thardintervention},
	\begin{alignat*}{2}
		&K^{\text{do}(U,\mathbb{Q},\text{hard})}_S(\omega,A)\\
		&=\int\mathbb{Q}(d\omega')K_{S\cup U}((\omega_S,\omega'_{U\setminus S}),A)\\
		&=\int\mathbb{Q}(d\omega')K_{(S\cup U)\setminus\mathscr{H}_{w_2\times\tilde{T}}}((\omega_{S\setminus\mathscr{H}_{w_2\times\tilde{T}}},\omega'_{U\setminus(S\cup\mathscr{H}_{w_2\times\tilde{T}})}),A)\\
		&=\int\mathbb{Q}(d\omega')K_{((S\cup U)\setminus\mathscr{H}_{w_2\times\tilde{T}})\cup(U\cap\mathscr{H}_{w_2\times\tilde{T}})}((\omega_{S\setminus\mathscr{H}_{w_2\times\tilde{T}}},\omega'_{(U\setminus(S\cup\mathscr{H}_{w_2\times\tilde{T}}))\cup(U\cap\mathscr{H}_{w_2\times\tilde{T}})}),A)\\
		&=\int\mathbb{Q}(d\omega')K_{(S\setminus\mathscr{H}_{w_2\times\tilde{T}})\cup U}((\omega_{S\setminus\mathscr{H}_{w_2\times\tilde{T}}},\omega'_{U\setminus(S\setminus\mathscr{H}_{w_2\times\tilde{T}})}),A)\\
		&=K^{\text{do}(U,\mathbb{Q},\text{hard})}_{S\setminus\mathscr{H}_{w_2\times\tilde{T}}}(\omega,A)
	\end{alignat*}
	where, from the second line to the third, we used the fact that \(\mathscr{H}_{w_2\times\tilde{T}}\) has no causal effect on \(A\), from the third line to the fourth we used the fact that \(U\cap\mathscr{H}_{w_2\times\tilde{T}}\) has no causal effect on \(A\) (by Remark \ref{Rcausaleffect}\ref{nocausaleffectsubset}) and Remark \ref{Rcausaleffect}\ref{nocausaleffectunion}, and from the fourth line to the fifth, we used that \(((S\cup U)\setminus\mathscr{H}_{w_2\times\tilde{T}})\cup(U\cap\mathscr{H}_{w_2\times\tilde{T}})=(S\setminus\mathscr{H}_{w_2\times\tilde{T}})\cup U\) and \((U\setminus(S\cup\mathscr{H}_{w_2\times\tilde{T}}))\cup(U\cap\mathscr{H}_{w_2\times\tilde{T}})=U\setminus(S\setminus\mathscr{H}_{w_2\times\tilde{T}})\). Since \(S\in\mathscr{P}(T)\) was arbitrary, we have that \(\mathscr{H}_{w_2\times\tilde{T}}\) has no causal effect on \(A\) (Definition \ref{Dcausaleffect}\ref{nocausaleffect}). Since \(A\in\mathscr{H}_{w_1\times\tilde{T}}\) was arbitrary, \(\mathscr{H}_{w_2\times\tilde{T}}\) has no causal effect on \(\mathscr{H}_{w_1\times\tilde{T}}\), and so \(\mathbb{K}^{\text{do}(U,\mathbb{Q},\text{hard})}\) respects time. 
\end{proof}
\end{document}